\documentclass[11pt]{article}
\usepackage[margin=1in]{geometry}
\usepackage[utf8]{inputenc} %
\usepackage[T1]{fontenc}    %
\usepackage{hyperref}       %
\usepackage{url}            %
\usepackage{booktabs}       %
\usepackage{amsfonts}       %
\usepackage{nicefrac}       %
\usepackage{microtype}      %

\usepackage{enumitem}
\usepackage{microtype}
\usepackage{booktabs} 
\usepackage{graphicx}
\usepackage[utf8]{inputenc}
\usepackage{amsmath,amsthm,amsfonts}
\usepackage{diagbox}
\usepackage[table]{xcolor}

\usepackage{algorithm}
\usepackage{algorithmic}
\usepackage{mathtools}
\usepackage{bbm}
\usepackage{subcaption}
\newcommand\numberthis{\addtocounter{equation}{1}\tag{\theequation}}

\newtheorem{lemma}{Lemma}
\newtheorem{corollary}{Corollary}
\newtheorem{theorem}{Theorem}
\newtheorem{conjecture}{Conjecture}
\newtheorem{proposition}{Proposition}

\newtheorem{assumption}{Assumption}
\newtheorem{definition}{Definition}

\newcommand{\mat}[1]{\mathbf{#1}}

\newcommand{\indep }{{\perp\!\!\!\perp}}

\title{Entropic Causal Inference: Identifiability and \\Finite Sample Results}

\author{%
  Spencer Compton \\
  MIT\\
  MIT-IBM Watson AI Lab\\
  \texttt{scompton@mit.edu} \\
  \and
  Murat Kocaoglu \\
  MIT-IBM Watson AI Lab\\
  IBM Research\\
  \texttt{murat@ibm.com} \\
  \and
  Kristjan Greenewald \\
  MIT-IBM Watson AI Lab\\
  IBM Research\\
  \texttt{kristjan.h.greenewald@ibm.com} \\
  \and
  Dmitriy Katz \\
  MIT-IBM Watson AI Lab\\
  IBM Research\\
  \texttt{dkatzrog@us.ibm.com} \\
}
\date{}
\begin{document}

\maketitle

\begin{abstract}
Entropic causal inference is a framework for inferring the causal direction between two categorical variables from observational data. The central assumption is that the amount of unobserved randomness in the system is not too large. This unobserved randomness is measured by the entropy of the exogenous variable in the underlying structural causal model, which governs the causal relation between the observed variables. \cite{Kocaoglu2017} conjectured that the causal direction is identifiable when the entropy of the exogenous variable is not too large. In this paper, we prove a variant of their conjecture. Namely, we show that for almost all causal models where the exogenous variable has entropy that does not scale with the number of states of the observed variables, the causal direction is identifiable from observational data. We also consider the minimum entropy coupling-based algorithmic approach presented by \cite{Kocaoglu2017}, and for the first time demonstrate algorithmic identifiability guarantees using a finite number of samples. We conduct extensive experiments to evaluate the robustness of the method to relaxing some of the assumptions in our theory and demonstrate that both the constant-entropy exogenous variable and the no latent confounder assumptions can be relaxed in practice. We also empirically characterize the number of observational samples needed for causal identification.
Finally, we apply the algorithm on T{\"u}bingen cause-effect pairs dataset.
\end{abstract}

\section{Introduction}
Understanding causal mechanisms is essential in many fields of science and engineering \cite{russo2010causality,susser2001glossary}. Distinguishing causes from effects allows us to obtain a causal model of the environment, which is critical for informed policy decisions \cite{Pearl2009}. Causal inference has been recently utilized in several machine learning applications, e.g., to explain the decisions of a classifier \cite{alvarez2017causal}, to design fair classifiers that mitigate dataset bias \cite{kilbertus2017avoiding,zhang2018fairness} and to construct classifiers that generalize \cite{subbaswamy2019preventing}. %

Consider a system that we observe through a set of random variables. For example, to monitor the state of a classroom, we might measure \emph{temperature, humidity} and \emph{atmospheric pressure} in the room. These measurements are random variables which come about due to the workings of the underlying system, the physical world. Changes in one are expected to cause changes in the other, e.g., decreasing the temperature might reduce the atmospheric pressure and increase humidity. As long as there are no feedback loops, we can represent the set of causal relations between these variables using a directed acyclic graph (DAG). This is called the \emph{causal graph} of the system. Pearl and others showed that knowing the causal graph enables us to answer many causal questions such as, \emph{``What will happen if I increase the temperature of the room?''}~\cite{Pearl2009}.

Therefore, for causal inference, knowing the underlying causal structure is crucial. Even though the causal structure can be learned from experimental data, in many tasks in machine learning, we only have access to a dataset and do not have the means to perform these experiments. In this case, observational data can be used for learning some causal relations. There are several algorithms in the literature for this task, which can be roughly divided into three classes: Constraint-based methods and score-based methods use conditional independence statements and likelihood function, respectively, to output (a member of) the equivalence class. An equivalence class of causal graphs are those that cannot be distinguished by the given data. The third class of algorithms impose additional assumptions about the underlying system or about the relations between the observed variables. Most of the literature focus on the special case of two observed variables $X,Y$ and to understand whether $X$ causes $Y$ or $Y$ causes $X$ under different assumptions. Constraint or score-based methods cannot answer this question simply because observed data is not sufficient without further %
assumptions. 
\begin{figure}
    \centering
    \begin{subfigure}[b]{0.35\textwidth}
    \includegraphics[width=\textwidth]{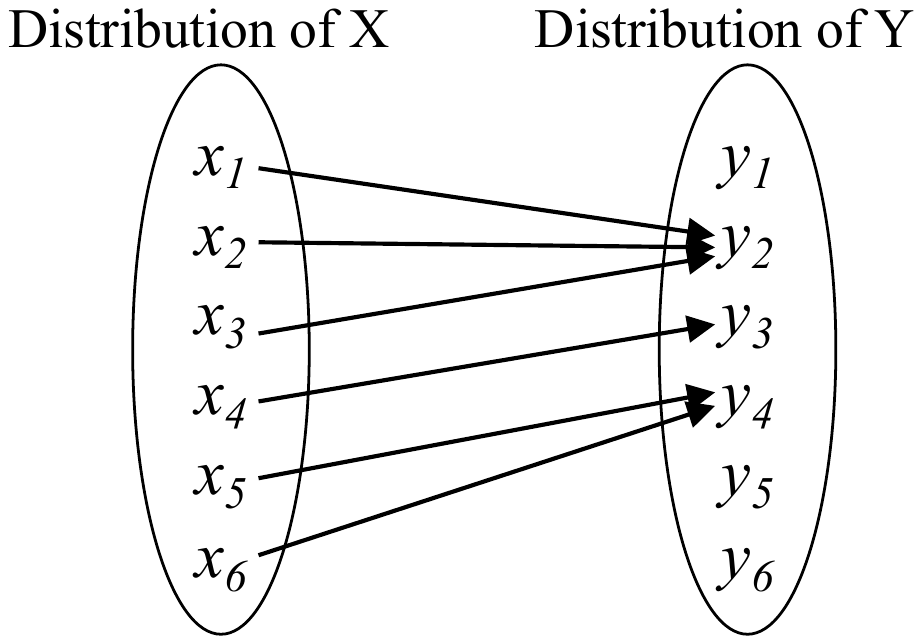}
    \caption{Deterministic relation.}
    \label{fig:intuition_a}
    \end{subfigure}
    \hspace{0.5in}
    \begin{subfigure}[b]{0.35\textwidth}
    \includegraphics[width=\textwidth]{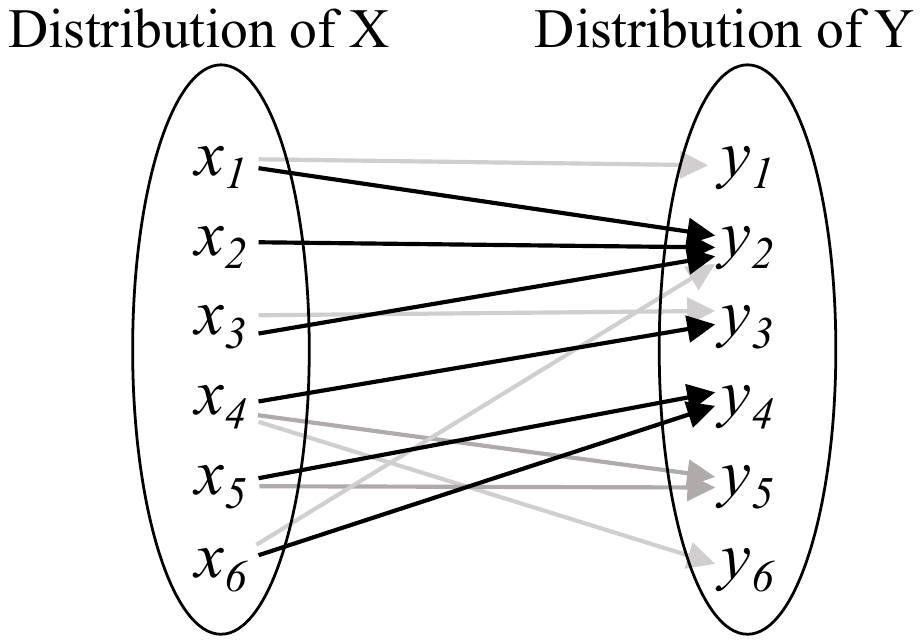}
    \caption{Relaxing determinism with noise.}
    \label{fig:intuition_b}
    \end{subfigure}
\caption{Intuition behind the entropic causality framework. \textbf{(a)} Most deterministic maps would be non-deterministic in the opposite direction, requiring non-zero additional randomness. \textbf{(b)} Entropic causality relaxes the deterministic map assumption to a map that needs low-entropy, and demonstrates that, most of the time, the reverse direction needs more entropy than the true direction.}
\label{fig:intuition}
\end{figure}
In this work, we focus on the special case of two categorical variables. Even though the literature is more established in the ordinal setting, few results exist when the observed variables are categorical. The main reason is that, for categorical data, numerical values of variables do not carry any meaning; whereas in continuous data one can use assumptions such as smoothness or additivity~\cite{Hoyer2008}. 

We first start with a strong assumption. Suppose that the system is \emph{deterministic}. This means that, even though observed variables contain randomness, the system has no additional randomness. When $X$ causes $Y$, this assumption implies that $Y=f(X)$ for some deterministic map $f(.)$. Consider the example in Figure \ref{fig:intuition}. Since there is no additional randomness, each value of $X$ is mapped to a single value of $Y$. What happens if we did not know the causal direction and tried to fit a function in the wrong direction as $X=g(Y)$. Unlike $f$, $g$ has to be one-to-many: $Y=2$ is mapped to three different value of $X$. Therefore, it is impossible to find a deterministic function in the wrong causal direction for this system. In fact, it is easy to show that most of the functions have this property: If $X,Y$ each has $n\geq 7$ states, all but $2^{-n}$ fraction of models can be identified. 

Although there might be systems where determinism holds such as in traditional computer software, this assumption in general is too strict. Then \emph{how much can we relax this assumption and still identify if $X$ causes $Y$ or $Y$ causes $X$?} In general, we can represent a system as $Y=f(X,E)$ where $E$ captures the additional randomness. To quantify this amount of relaxation, we use the entropy of the additional randomness in the structural equation, i.e., $H(E)$. For deterministic systems, $H(E)=0$. This question was posed as a conjecture in \cite{Kocaoglu2017}, within the entropic causal inference framework. 

We provide the first result in resolving this question. Specifically, we show that the causal direction is still identifiable for any $E$ with constant entropy. Our usage of ``constant'' is relative to the support size $n$ of the observed variables (note $0\leq H(X)\leq \log(n)$). This establishes a version of Kocaoglu's conjecture. 

A practical question is how much noise can the entropic causality framework handle: do we always need the additional randomness to not scale with $n$? Through experiments, we demonstrate that, in fact, we can relax this constraint much further. If $H(E)\approx \alpha\log(n)$, we show that in the wrong causal direction we need entropy of at least $\beta\log(n)$ for $\beta>\alpha$. This establishes that entropic causal inference is robust to the entropy of noise and for most models, reverse direction will require larger entropy. We finally demonstrate our claims on the benchmark T{\"u}bingen dataset. 

We also provide the first finite-sample analysis and provide bounds on the number of samples needed in practice. This requires showing finite sample bounds for the minimum entropy coupling problem, which might be of independent interest. The following is a summary of our contributions.
\begin{itemize}
    \item We prove the first identifiability result for the entropic causal inference framework using Shannon entropy and show that for most models, the causal direction between two variables is identifiable, if the amount of exogenous randomness does not scale with $n$, where $n$ is the number of states of the observed variables.
    \item We obtain the first bounds on the number of samples needed to employ the entropic causal inference framework. For this, we provide the first sample-bounds for accurately solving the minimum entropy coupling problem in practice, which might be of independent interest. 
    \item We show through synthetic experiments that our bounds are loose and entropic causal inference can be used even when the exogenous entropy scales with $\alpha\log(n)$ for $\alpha\!<\!1$.
    \item We employ the framework on T{\"u}bingen data to establish its performance. We also conduct experiments to demonstrate robustness of the method to latent confounders, robustness to asymmetric support size, i.e., when $X,Y$ have very different number of states, and finally establish the number of samples needed in practice. 
\end{itemize}
\textbf{Notation: } 
We will assume, without loss of generality, that if a variable has $n$ states, its domain is $[n]\coloneqq \{1,2,\hdots, n\}$. $p(x)$ is short for $p(X=x)$. $p(Y|x)$ is short for the distribution of $Y$ given $X=x$.  \emph{Simplex} is short for probability simplex, which, in $n$ dimensions is the polytope defined as $\Delta_n\coloneqq \{(x_i)_{i\in [n]}:\sum_ix_i=1,x_i\geq 0,\forall i\in [n]\}$. $\mathbbm{1}_{\{\varepsilon\}}$ is the indicator variable for event $\varepsilon$. \emph{SCM} is short for \emph{structural causal model} and refers to the functional relations between variables. For two variables where $X$ causes $Y$, the SCM is $Y=f(X,E), X\indep E$ for some variable $E$ and function $f$.
\section{Related Work}

There are a variety of assumptions and accompanying methods for inferring the causal relations between two observed variables \cite{Shimizu2006,Peters2014,Loh2014,Etesami2016,ghassami2017interaction}. For example, authors in \cite{Hoyer2008} developed a framework to infer causal relations between two continuous variables if the exogenous variables affect the observed variable additively. This is called the \emph{additive noise model (ANM)}. Under the assumption that the functional relation is non-linear they show identifiability results, i.e., for almost all models the causal direction between two observed variables can be identified. This is typically done by testing independence of the residual error terms from the regression variables.  %
Interestingly in \cite{kpotufe2014consistency} authors show that independence of regression residuals leads the total entropy in the true direction to be smaller than the wrong direction, which can be used for identifiability thereby arriving at the same idea we use in our paper.

A challenging setting for causal inference is the setting with discrete and categorical variables, where the variable labels do not carry any specific meaning. For example, \emph{Occupation} can be mapped to discrete values $\{0,1,2,\hdots\}$ as well as to one-hot encoded vectors. This renders methods which heavily rely on the variable values, such as ANMs, unusable. While extensions of ANMs to the discrete setting exist, they still utilize the variable values and are not robust to permuting the labels of the variables. One related approach proposed in \cite{janzing2010causal} is motivated by Occam's razor and proposes to use the Kolmogorov complexity to capture the complexity of the causal model, and assume that the true direction is "simple". As Kolmogorov complexity is not computable, the authors resort to a proxy, based on minimum description length. 

Another line of work uses the idea that causes are \emph{independent} from the causal mechanisms, which is called the \emph{independence of cause and mechanism} assumption. The notion of independence should be formalized since comparison is between a random variable and a functional relation.  In \cite{janzing2012information,janzing2015justifying}, authors propose using information geometry within this framework to infer the causal direction in deterministic systems. Specifically, they create a random variable using the functional relation based on uniform distribution and utilize the hypothesis that this variable should be independent from the cause distribution. 
\section{Identifiability with Entropic Causality}
\label{sec:identifiability}
Consider the problem of identifying the causal graph between two observed categorical variables $X,Y$. We assume for simplicity that both have $n$ states, although this is not necessary for the results. Similar to the most of the literature, we make the causal sufficiency assumption, i.e., there are no latent confounders and also assume there is no selection bias. Then without loss of generality, if $X$ causes $Y$, there is a deterministic $f$ and an exogenous (unmeasured) variable $E$ that is independent from $X$ such that $Y=f(X,E)$, where $X\sim p(X)$ for some marginal distribution $p(X)$. Causal direction tells us that, if we intervene on $X$ and set $X=x$, we get $Y=f(x,E)$ whereas if we intervene on $Y$ and set $Y=y$, we still get $X\sim p(X)$ since $Y$ does not cause $X$.%

Algorithms that identify causal direction from data introduce an assumption on the model and show that this assumption does not hold in the wrong causal direction in general. Hence, checking for this assumption enables them to identify the correct causal direction. Entropic causality~\cite{Kocaoglu2017} also follows this recipe. They assume that the entropy of the exogenous variable is bounded in the true causal direction. We first present their relevant conjecture, then modify and prove as a theorem. 
\begin{conjecture}[\cite{Kocaoglu2017}]
\label{conj:entropic}
Consider the structural causal model $Y=f(X,E), X\in [n],Y\in[n],E\in[m]$ where $p(X),f,p(E)$ are sampled as follows: Let $p(X)$ be sampled uniformly randomly from the probability simplex in $n$ dimensions $\Delta_n$, and $p(E)$ be sampled uniformly randomly from the set of points in $\Delta_m$ that satisfy $H(E)\leq \log(n)+\mathcal{O}(1)$. Let $f$ be sampled uniformly randomly from all mappings $f:[n]\times[m]\rightarrow [n]$. Then with high probability, any $\tilde{E}\indep Y$ that satisfies $X = g(Y,\tilde{E})$ for some mapping $g:[n]\times [m]\rightarrow [n]$ entails $H(X)+H(E)< H(Y)+H(\tilde{E})$.
\end{conjecture}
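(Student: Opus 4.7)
I will try to bound the four entropies appearing on either side of the target inequality
\[\Delta \;:=\; H(Y)+H(\tilde E)-H(X)-H(E) \;>\; 0\]
under the random model, then combine. The qualitative picture driving the plan is: $H(Y)$ is pushed near $\log n$ by the randomness of $f$; $H(X)$ lies strictly below $\log n$ by an absolute constant because $p(X)$ is a uniform draw from the simplex; the hypothesis caps $H(E)$ at $\log n+\mathcal O(1)$; and the crux is to show that any admissible $\tilde E$ must also have entropy close to $\log n$. The proof will be a four-quantity race in which the $\mathcal O(1)$ slack in the hypothesis is absorbed by a Dirichlet-entropy gap on the $X$ side and a coupling-entropy gap on the $\tilde E$ side.

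\emph{Marginals (Steps 1--2).} For uniformly random $f:[n]\times[m]\to[n]$ and fixed $p(X),p(E)$, the marginal $p(Y{=}y)=\sum_{x,e}p(x)p(e)\mathbbm 1_{f(x,e)=y}$ has mean $1/n$. Viewing $f$ as $nm$ independent uniform variables, a second-moment/bounded-differences argument plus a union bound over $y$ gives $\|p(Y)-\tfrac{1}{n}\mathbf 1\|_\infty = o(1)$ under the implicit control of $\|p(X)\|_2$ and $\|p(E)\|_2$ that comes from the sampling model, and hence $H(Y)=\log n - o(1)$ with high probability. For $p(X)\sim\mathrm{Dirichlet}(1,\dots,1)$, a standard computation gives $\mathbb{E}[H(X)] = \psi(n+1)-\psi(2) = \log n - (1-\gamma)+o(1)$, and concentration of $H(X)$ around its mean (via log-concavity of the Dirichlet or direct variance bounds) yields $H(X)\le \log n - c_0$ with high probability, for $c_0 := 1-\gamma \approx 0.42$.

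\emph{Lower bound on $H(\tilde E)$ (Step 3, main step).} Because $X=g(Y,\tilde E)$ with $Y\indep \tilde E$, the law of $\tilde E$ realises a coupling of the conditional family $\{p(X\mid y)\}_{y\in[n]}$, and therefore $H(\tilde E)$ is at least the entropy of the minimum-entropy coupling of this family. The plan is to show $H(\tilde E)\ge \log n - c_1 + o(1)$ for some explicit constant $c_1$ via two complementary tools: (i) a structural lower bound on min-entropy coupling in terms of the ``spread'' of the constituent distributions---via majorization, or via the fractional-cover LP dual of the coupling polytope; and (ii) a concentration argument showing that under random $f$ the bipartite incidence $\{(x,y):\exists e,\ f(x,e)=y\}$ is rich enough that no low-entropy coupling can simultaneously match every $p(X\mid y)$. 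Concretely, I would work from $p(X\mid y)\propto p(Y{=}y\mid X)\,p(X)$ and show that for a constant fraction of pairs $(y,y')$ the total variation $\|p(X\mid y)-p(X\mid y')\|$ is bounded below by a constant, which in turn forces any coupling to distribute mass across many distinct atoms.

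\emph{Combining and main obstacle.} The three steps give $H(Y)+H(\tilde E)\ge 2\log n-c_1-o(1)$ and $H(X)+H(E)\le 2\log n-c_0+C+o(1)$, where $C$ is the absolute constant in the hypothesis $H(E)\le\log n+\mathcal O(1)$. Hence $\Delta \ge c_0-c_1-C-o(1)$, which is positive whenever $C<c_0-c_1$, which is the sense in which the conjecture's $\mathcal O(1)$ slack is absorbed. The hard part is Step 3: minimum-entropy coupling is \NP-hard and no sharp distribution-free lower bound is available off the shelf; worse, in the regime $H(E)\approx\log n$ the conditionals $p(X\mid y)$ become close to $p(X)$ itself, so the min-entropy coupling can approach $H(X)<\log n$ and the margin $\Delta$ narrows toward zero. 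Extracting a constant $c_1$ strictly less than $c_0$ requires a genuinely quantitative use of the residual randomness in $(f,p(X),p(E))$ beyond naive closeness-to-$p(X)$---most likely via a delicate second-moment computation on the $(X,Y)$ joint or a combinatorial characterization of the vertices of the coupling polytope under the random incidence structure---and that is where the bulk of the technical work would lie.
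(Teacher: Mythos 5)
Your proposal is a research plan with the decisive step missing, and it is worth being explicit that the statement you were given is Conjecture~\ref{conj:entropic} itself, which this paper does \emph{not} prove: the paper proves only a modified variant (Theorem~\ref{thm:main}) in which $H(E)=\mathcal{O}(1)$ rather than $\log(n)+\mathcal{O}(1)$, with the much weaker conclusion $H(\tilde{E})\geq(1-o(1))\log\log(n)$, and comparing only exogenous entropies rather than totals. The authors state outright that the original conjecture ``is a very strong statement and we believe, even if it is true, it requires a much deeper understanding on the minimum entropy couplings than is currently available in the literature.'' Your Step~3 --- a lower bound $H(\tilde{E})\geq\log(n)-c_1$ on the minimum-entropy coupling of $\{p(X|y)\}_y$ in the regime $H(E)\approx\log(n)$ --- is exactly that missing technology, and you concede it is unproven. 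Note also that the paper's own technique cannot be upgraded to supply it: their balls-and-bins argument tracks only the heaviest atom of $E$ (the first column of $\mat{M}$), and they observe that this route is capped at $\mathcal{O}(\log\log(n))$ because the relevant conditional support is logarithmic with high probability; reaching $\log(n)-c_1$ would require accounting for all columns, which no one currently knows how to do.

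Beyond the admitted gap, the assembly logic has two concrete defects. First, the architecture of ``four constant-precision bounds'' cannot prove the conjecture as stated: the $\mathcal{O}(1)$ in $H(E)\leq\log(n)+\mathcal{O}(1)$ is an \emph{arbitrary} constant $C$, so your conclusion ``positive whenever $C<c_0-c_1$'' with $c_0=1-\gamma\approx 0.42$ nats establishes at best a small-constant variant; for larger $C$ the race is simply lost, and a genuine proof must compare the two sides jointly --- e.g., rewrite the target as $H(\tilde{E})-H(X|Y)>H(E)-H(Y|X)$, so both sides are coupling overheads of constant order in the critical regime, which lands you back at the unavailable quantitative MEC-versus-conditional-entropy gap (the very gap the paper flags as open in its ``Potential Improvements and Limitations'' discussion). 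Second, Step~1's claim $H(Y)=\log(n)-o(1)$ is false for admissible models: the conjecture leaves $m$ free, and if, say, $m=\mathcal{O}(1)$ then $H(E)\leq\log(m)$ satisfies the hypothesis trivially while $p(E)$ has constant-mass atoms, so the bin masses $p(Y=y)$ fluctuate by constant multiplicative factors around $1/n$ and $H(Y)=\log(n)-\Theta(1)$; your ``implicit control of $\|p(E)\|_2$ from the sampling model'' is asserted rather than established and depends on $m$ in a way the conjecture does not constrain. Your Step~2 (the Dirichlet mean entropy $\psi(n+1)-\psi(2)$ and its concentration) is the only component that is both correct and routine.
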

In words, the conjecture claims the following: Suppose $X$ causes $Y$ with the SCM $Y=f(X,E)$. Suppose the exogenous variable $E$ has entropy that is within an additive constant of $\log(n)$. Then, for most of such causal models, any SCM that generates the same joint distribution in the wrong causal direction, i.e., $Y$ causes $X$, requires a larger amount of randomness than the true model. The implication would be that if one can compute the smallest entropy SCM in both directions, then one can choose the direction that requires smaller entropy as the true causal direction.

We modify their conjecture in two primary ways. First, we assume that the exogenous variable has constant entropy, i.e., $H(E) = \mathcal{O}(1)$. %
Unlike the conjecture, our result holds for any such $E$. %
Second, rather than the total entropy, we were able to prove identifiability by only comparing 
the entropies of the simplest exogenous variables in both directions $H(E)$ and $H(\tilde{E})$.\footnote{Entropy of the exogenous variable, or in the case of Conjecture \ref{conj:entropic} the entropy of the system, can be seen as a way to model complexity and the method can be seen as an application of Occam's razor. In certain situations, especially for ordinal variables, it might be suitable to also consider the complexity of the functions.} In Section \ref{sec:experiments}, we demonstrate that both criteria give similar performance in practice.

Our technical result requires the following assumption on $p(X)$, which,  for constant $\rho$ and $d$ guarantees that a meaningful subset of the support of $p(X)$ is \emph{sufficiently uniform}. We will later show that this condition holds with high probability, if $p(X)$ is sampled uniformly randomly from the simplex. 
\begin{assumption}[($\rho,d$)-uniformity]
\label{ass:1}
Let $X$ be a discrete variable with support $[n]$. Then there exists %
a subset $S$ %
of size $|S| \geq dn$, such that $p(X=x) \in [\frac{1}{\sqrt{\rho} n}, \frac{\sqrt{\rho}}{n}], \forall x\in S$. 
\end{assumption}
Our following theorem establishes that entropy in the wrong direction scales with $n$.
\begin{theorem}[Entropic Identifiability]\label{thm:main}
Consider the SCM $Y=f(X,E), X\indep E$, where $X\!\in\! [n],Y\!\in\![n], E\!\in\! [m]$. Suppose $E$ is any random variable with constant entropy, i.e.,  $H(E)\!=\!c\!=\!\mathcal{O}(1)$. Let $p(X)$ satisfy Assumption \ref{ass:1}$(\rho,d)$ for some constants $\rho\!\geq\! 1,d\!>\!0$. Let $f$ be sampled uniformly randomly from all mappings $f\!:\![n]\!\times\![m]\!\rightarrow\! [n]$. Then, with high probability, any $\tilde{E}$ that satisfies $X \!= \!g(Y,\tilde{E}),\tilde{E}\indep Y$ for some $g$, entails $H(\tilde{E})\!\geq\! (1 - o(1)) \log(\log(n))$. Specifically, for any $0\!<\!r\!<\!q$, $H(\tilde{E}) \!\ge\!\left(1\! -\! \frac{1+r}{1+q}\right) (0.5 \log(\log(n)) \!-\! \log(1+r)\! -\! \mathcal{O}(1)),\forall n \!\ge\! \nu(r,q,\rho,c,d)$ for some %
$\nu$. 
\end{theorem}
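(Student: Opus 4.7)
My plan is to reduce the bound on $H(\tilde{E})$ to a lower bound on $H(X\mid Y=y^\star)$ for a single well-chosen $y^\star$, and then analyze that conditional via a balls-into-bins argument applied to the random function $f$. Since $\tilde{E}\indep Y$ and $X=g(Y,\tilde{E})$, conditional on $Y=y$ the distribution of $X$ is a coarsening of $p(\tilde{E})$ under the partition induced by $g(y,\cdot)$, and coarsening never increases entropy. Hence $H(\tilde{E})\geq \max_y H(X\mid Y=y)$, so it suffices to exhibit, with probability $1-o(1)$ over the draw of $f$, a single $y^\star$ with $H(X\mid Y=y^\star)=\Omega(\log\log n)$ of the right magnitude.

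Because $H(E)=c=\mathcal{O}(1)$, some mode $e^\star$ of $p(E)$ satisfies $p(e^\star)\geq 2^{-c}$. Restricting attention to the map $x\mapsto f(x,e^\star)$, which is a uniform random function $[n]\to[n]$, classical balls-into-bins analysis shows that with high probability its maximum preimage has size $k^\star=(1-o(1))\log n/\log\log n$. Let $y^\star$ be such a max-loaded bin and $S^\star=\{x:f(x,e^\star)=y^\star\}$, so $|S^\star|=k^\star$. By symmetry of the uniform random function, conditional on $|S^\star|=k^\star$ the set $S^\star$ is uniform among all $k^\star$-subsets of $[n]$, so intersecting with the $(\rho,d)$-uniform set $S$ from Assumption~\ref{ass:1} gives $|A|:=|S^\star\cap S|=\Omega(k^\star)$ by hypergeometric concentration.

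For each $x\in A$, the joint mass is $p(x,y^\star)\geq p(x)\,p(e^\star)\geq p(e^\star)/(\sqrt{\rho}\,n)$, so the total contribution of $A$ to $p(Y=y^\star)$ is at least $\Omega(k^\star p(e^\star)/n)$. A matching upper bound $p(Y=y^\star)=\mathcal{O}(k^\star p(e^\star)/n)$ follows from a second-moment/union-bound estimate on the "spurious" mass coming from pairs $(x,e)$ with $e\neq e^\star$ and $f(x,e)=y^\star$: since $p(e)\leq p(e^\star)$ and the expected number of such pairs per bin is $\mathcal{O}(1)$, the contribution from $x\notin S^\star$ cannot dominate. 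Together these yield $p(x\mid y^\star)=\Theta(1/k^\star)$ for $x\in A$, and therefore
\[
H(X\mid Y=y^\star)\;\geq\; \Pr[X\in A\mid Y=y^\star]\cdot(\log k^\star-\mathcal{O}(1))\;\geq\;\Omega(\log\log n),
\]
with the multiplicative constants (which shrink $\log k^\star$ to the $0.5\log\log n$ that appears in the specific bound) and the slack parameters $r,q$ controlling the balls-into-bins and hypergeometric tail events.

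The main obstacle is the joint control required in the third paragraph: $y^\star$ is chosen as a function of $f(\cdot,e^\star)$, and one must simultaneously guarantee, for the same realization of $f$, that (i) the max-load bound $k^\star=(1-o(1))\log n/\log\log n$ holds; (ii) $|S^\star\cap S|=\Omega(k^\star)$ via hypergeometric concentration; and (iii) the bin $y^\star$ does not accidentally accumulate too much extra mass from entries $f(x,e)$ with $e\neq e^\star$. Threading all three failure probabilities into an overall $1-o(1)$ guarantee, with explicit constants tracked through the $r,q$ trade-offs, is the delicate part; the near-uniform-on-$S$ structure of $p(X)$ and the $\Omega(1)$ mass of $p(e^\star)$ are precisely what prevents any of these three steps from collapsing.
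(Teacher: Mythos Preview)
Your high-level strategy matches the paper's: reduce to $\max_y H(X\mid Y=y)$, exploit the mode $e^\star$ of $E$ (with $p(e^\star)\geq 2^{-c}$), view $x\mapsto f(x,e^\star)$ as balls-into-bins, and lower-bound the entropy at a heavily-loaded bin via the ratio control on $S$. The difference lies in how you couple the max-load selection with control of the ``spurious'' mass, and this is where your argument has a gap.

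You select $y^\star$ first (as the max-load bin over all $n$ balls) and then try to argue it is good. Two concrete problems arise. First, you account for spurious mass only from pairs $(x,e)$ with $e\neq e^\star$, but the contribution from $(x,e^\star)$ with $x\in S^\star\setminus S$ is entirely missing from your upper bound on $p(Y=y^\star)$. Since $p(x)$ is unconstrained for $x\notin S$, a single heavy $x_0\notin S$ landing in $S^\star$ can make $p(Y=y^\star)=\Theta(1)$ and collapse $\Pr[X\in A\mid Y=y^\star]$; even in tamer configurations (all $x\notin S$ with $p(x)=\Theta(1/n)$) this contribution is $\Theta(k^\star/n)$, comparable to the good mass, which costs you the $(1-o(1))$ prefactor the theorem claims. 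Second, ``the expected number of such pairs per bin is $\mathcal{O}(1)$'' is false: it equals $m-1$, and $m$ is arbitrary. Presumably you mean expected \emph{mass}, which is $(1-p(e^\star))/n$; but then a second-moment bound is too weak for the needed tail when individual coefficients $p(x)p(e)$ can be $\Theta(1)$.

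The paper avoids both issues by reversing the order of operations. It first defines, \emph{deterministically}, the bad bins $\mathcal{B}=\{y:\sum_{x\notin S}p(x,y)+\sum_{x\in S}p(x,y,E>1)>2/n\}$; a total-mass argument gives $|\mathcal{B}|\leq n/2$. It then throws \emph{only} the $|S|\geq dn$ balls from column $e^\star$ and rows in $S$, and applies the max-load lower bound restricted to bins in $\mathcal{U}=[n]\setminus\mathcal{B}$. This guarantees by construction that (a) the selected bin has spurious mass at most $2/n$, with no probabilistic control required, and (b) the loaded set $S'$ lies entirely inside $S$, so the ratio bound applies to all of it and no hypergeometric intersection step is needed. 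The only price is a slightly smaller load (roughly $dn/4$ balls into $\geq n/2$ bins), which is harmless for the $\log n/\log\log n$ asymptotics.
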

Theorem \ref{thm:main} shows that when $H(E)$ is a constant, under certain conditions on $p(X)$, with high probability, the entropy of any causal model in the reverse direction will be at least $\Omega(\log(\log(n)))$. %
Specifically, if a constant fraction of the support of $p(X)$ contains probabilities that are not too far from $\frac{1}{n}$, our result holds. Note that \emph{with high probability} statement is induced by the uniform measure on $f$, and it is relative to $n$. In other words, Theorem \ref{thm:main} states that the fraction of non-identifiable causal models goes to $0$ as the number of states of the observed variables goes to infinity. If a structure on the function is available in the form of a prior that is different from uniform, this can potentially be incorporated in the analysis although we expect calculations to become more tedious. 

Through the parameters $r,q$ we obtain a more explicit trade-off between the lower bound on entropy and how large $n$ should be for the result. $\nu(r,q,\rho,c,d)$ is proportional to $q$ and inversely proportional to $r$. The explicit form of $\nu$ is given in Proposition \ref{lem:HtildeBd} in the supplement.

We next describe some settings where these conditions hold: We consider the cases when $p(X)$ has bounded element ratio, $p(X)$ is uniformly randomly sampled from the simplex, or $H(X)$ is large. %
\begin{corollary}%
\label{cor:identifiability}
Consider the SCM in Theorem \ref{thm:main}. Let $H(E)\!=\!c\!=\!\mathcal{O}(1)$ and $f$ be sampled uniformly randomly. Let $p(x)$ be %
such that either 
$(a)$ $\frac{\max_xp(x)}{\min_xp(x)}\!\leq\! \rho$, or $(b)$ $p(x)$ is sampled uniformly randomly from the simplex $\Delta_n$, or $(c)$ $p(X)$ is such that $H(X)\!\geq\! \log(n)\!-\!a$ for some %
$a\!=\!\mathcal{O}(1)$. 

Then, with high probability, any $\tilde{E}$ that satisfies $X = g(Y,\tilde{E}),\tilde{E}\indep Y$ for some deterministic function $g$ entails $H(\tilde{E})\geq 0.25\log(\log(n))-\mathcal{O}(1)$. Thus, there exists $n_0$ (a function of $\rho,c$) such that for all $n\geq n_0$, the causal direction is identifiable with high probability.
\end{corollary}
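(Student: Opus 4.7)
The plan is to reduce each of the three hypotheses (a), (b), (c) to Assumption~\ref{ass:1}$(\rho',d)$ for constants $\rho',d$ depending only on the hypothesis, and then invoke Theorem~\ref{thm:main} with $r,q$ chosen so that $1-(1+r)/(1+q)=1/2$. This yields $H(\tilde E)\ge 0.25\log\log n-\mathcal{O}(1)$ with high probability; since $H(E)=c$ is a constant, the bound exceeds $H(E)$ for all $n\ge n_0(\rho,c)$, so the minimum-entropy exogenous variable sits in the true causal direction and identifiability follows.

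For case (a), if $p_{\max}/p_{\min}\le\rho$ then combining the ratio bound with $np_{\min}\le 1\le np_{\max}$ forces $p(x)\in[1/(n\rho),\rho/n]$ for every $x$, so $S=[n]$ witnesses Assumption~\ref{ass:1} with parameters $(\rho^2,1)$. For case (c), the identity $\log n-H(X)=D(p\,\Vert\, U)\le a$, rewritten via $q_x:=np(x)$, becomes $\frac{1}{n}\sum_x\phi(q_x)\le a$ with $\phi(q):=q\log q-(q-1)$. The function $\phi$ is non-negative, convex, vanishes only at $q=1$, and is monotone on each of $(0,1)$ and $(1,\infty)$; hence $\phi\ge c_\rho>0$ on the complement of $[1/\sqrt\rho,\sqrt\rho]$ once $\rho$ is taken large enough. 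Thus at most $an/c_\rho$ indices can fall outside the window, so choosing $\rho$ with $a/c_\rho<1$ leaves a constant fraction $d>0$ of indices satisfying the assumption.

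For case (b), a uniform draw from $\Delta_n$ is $\mathrm{Dirichlet}(1,\dots,1)$, whose marginals are $\mathrm{Beta}(1,n-1)$, giving $\Pr[p(x)\in[1/(\sqrt\rho n),\sqrt\rho/n]]=(1-1/(\sqrt\rho n))^{n-1}-(1-\sqrt\rho/n)^{n-1}\to e^{-1/\sqrt\rho}-e^{-\sqrt\rho}=:\mu(\rho)>0$. The Dirichlet coordinates are negatively associated, so the indicators of the good window inherit negative association, and a standard Chernoff-type bound for negatively associated binary variables shows that the fraction of good indices concentrates around $\mu(\rho)$ with probability $1-e^{-\Omega(n)}$. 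Consequently $|S|\ge dn$ for any fixed $d<\mu(\rho)$ with high probability.

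The main obstacle is case (b), since (a) and (c) reduce to elementary deterministic calculations but (b) requires controlling the failure probability of a concentration statement for non-independent indicators. Once the three reductions are in hand, Theorem~\ref{thm:main} delivers the claimed $H(\tilde E)\ge 0.25\log\log n-\mathcal{O}(1)$, and comparing with the constant $H(E)=c$ finishes the argument for all $n\ge n_0(\rho,c)$.
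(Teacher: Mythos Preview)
Your overall plan matches the paper's: verify Assumption~\ref{ass:1} in each case and then invoke Theorem~\ref{thm:main} with $r=1,q=3$ to get the $0.25\log\log n-\mathcal{O}(1)$ bound. Case~(a) is handled identically in the paper.

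There are, however, two issues.

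\textbf{Case (b), minor.} The claim that ``the indicators of the good window inherit negative association'' is not quite right: negative association is preserved only under coordinatewise \emph{monotone} maps, and $\mathbbm{1}\{q\in[1/\sqrt\rho,\sqrt\rho]\}$ is not monotone. Your conclusion can be salvaged since this indicator equals $\mathbbm{1}\{q\ge 1/\sqrt\rho\}+\mathbbm{1}\{q\le\sqrt\rho\}-1$, and each monotone piece yields a negatively associated family whose sum concentrates; union bound the two tails. The paper avoids this altogether by concentrating only the ``small'' indicators (a decreasing function) via Hoeffding and bounding the number of ``big'' indices deterministically by $n/\sqrt\rho$ via the total-mass constraint.

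\textbf{Case (c), a genuine gap.} Your Markov-type count bound ``at most $an/c_\rho$ indices fall outside the window'' requires $a<c_\rho$. But with $\phi(q)=q\ln q-(q-1)$ you have $\phi(0^+)=1$, so on $[0,1/\sqrt\rho]$ the infimum $\phi(1/\sqrt\rho)$ is strictly less than~$1$ for every finite $\rho$, and hence $\sup_\rho c_\rho\le 1$ nat. Thus the argument is vacuous whenever $a\ge 1$ nat (about $1.44$ bits), whereas the corollary allows any constant $a$. The paper handles this by bounding \emph{mass} rather than count: the entropy constraint forces the mass of ``large'' indices (those with $p(x)\ge 2^{2b}/n$, $b=\max\{a,2\}$) to be at most a constant below~$1$, while the mass of ``small'' indices (those with $p(x)\le 0.075/n$) is trivially at most $0.075$. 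This leaves mass $\ge 1/8$ in the window, and since every window element has $p(x)\le 2^{2b}/n$, the \emph{count} is at least $n/2^{2b+3}$. Your $\phi$-argument still works perfectly for the upper tail (since $\phi(\sqrt\rho)\to\infty$), but you need a separate device for the lower tail; the mass-to-count conversion is the missing idea.
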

The proof is given in Section \ref{app:identifiability}. Note that there is no restriction on the support size of the exogenous variable $E$. 

\textbf{Proof Sketch of Theorem \ref{thm:main}.} The full proof can be found in Appendix \ref{app:main}.
\begin{enumerate}
    \item Bound $H(\tilde{E})$ via $H(\tilde{E})\geq H(X|Y=y), \forall y\in [n]$.
    \item Characterize the sampling model of $f$ as %
    a balls-and-bins game, where each realization of $Y$ corresponds to a particular bin, each combination $(X\!=\!i,E\!=\!k)$ corresponds to a ball. %
    \item Identify a subset of ``good" bins $\mathcal U\subseteq [m]$. Roughly, a bin is ``good" if it does not contain a large mass from the balls other than the ones in $\{(i,1): i\in S\}$.
    \item Show one of the bins in $\mathcal U$, say $y=2$, has many balls from $\{(i,1):i\in S\}$. 
    \item Bound the contribution of the most-probable state of $E$ to the distribution $p(X|Y=2)$. 
    \item Characterize the effect of the other states of $E$ and identify a support for $X$ contained in $S$ on which the conditional entropy can be bounded. Use this to %
    lower bound for $H(X|Y=2)$.
\end{enumerate}

\textbf{Conditional Entropy Criterion:} 
From the proof of Proposition \ref{lem:HtildeBd} in Appendix \ref{app:main}, we have $H(\tilde{E})\! \geq\! \max_y H(X|Y\!=\!y) \geq (1 - o(1))\log (\log (n))$. Further, we have $\max_x H(Y|X\!=\!x) \leq H(E)\! \leq\! c \!=\! \mathcal{O}(1)$. Hence not only is $H(\tilde E) \!>\! H(E)$ for large enough $n$, but $\max_y H(X|Y\!=\!y) \!>\! \max_x H(Y|X\!=\!x)$ as well. Therefore, under the assumptions of Theorem \ref{thm:main}, $\max_y H(X|Y\!=\!y)$ and $\max_x H(Y|X\!=\!x)$ are sufficient to identify the causal direction: %
\begin{corollary}
Under the conditions of Theorem \ref{thm:main}, we have that $\max\limits_y H(X|Y\!\!=\!\!y) \!>\! \max\limits_x H(Y|X\!\!=\!\!x)$.
\end{corollary}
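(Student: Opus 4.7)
The plan is to combine two bounds that are almost entirely supplied by the preceding development. First I would establish the upper bound on $\max_x H(Y|X=x)$. Since $Y = f(X,E)$ with $X \indep E$, conditioning on $X=x$ makes $Y$ a deterministic function of $E$ alone, and the independence of $X$ and $E$ means the conditional distribution of $E$ given $X=x$ is just $p(E)$. Hence $H(Y|X=x) = H(f(x,E)) \leq H(E) \leq c = \mathcal{O}(1)$, uniformly in $x$, and so $\max_x H(Y|X=x) \leq c$.

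Next I would harvest the matching lower bound on $\max_y H(X|Y=y)$ from the proof of Theorem \ref{thm:main} (Proposition \ref{lem:HtildeBd}). According to the proof sketch, the very first step is to use $H(\tilde{E}) \geq H(X|Y=y)$ for every $y$, and all subsequent steps (identifying the ``good'' bins $\mathcal U$, picking a particular bin $y=2$ with many balls from $\{(i,1) : i \in S\}$, and bounding $H(X|Y=2)$ from below) actually produce a specific $y$ for which the conditional entropy $H(X|Y=y)$ meets the claimed bound. Consequently the same chain of inequalities yields $\max_y H(X|Y=y) \geq (1-o(1)) \log\log(n)$ with high probability over the sampling of $f$, not merely $H(\tilde{E}) \geq (1-o(1))\log\log(n)$.

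Finally I would combine these two inequalities. Since $c$ is a fixed constant while $\log\log(n) \to \infty$, there exists $n_0 = n_0(c,\rho,d,r,q)$ such that for all $n \geq n_0$ the factor $(1-o(1))\log\log(n)$ in the lower bound strictly exceeds $c$. For such $n$, with high probability,
\[
\max_y H(X|Y=y) \;\geq\; (1-o(1))\log\log(n) \;>\; c \;\geq\; \max_x H(Y|X=x),
\]
which is the desired conclusion.

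There is really no obstacle of substance here; the only point requiring a small amount of care is to verify that the lower bound of Theorem \ref{thm:main} is in fact obtained through the intermediate quantity $\max_y H(X|Y=y)$ rather than via some other property of $\tilde{E}$. Once that is observed, the corollary follows immediately because the upper bound on $\max_x H(Y|X=x)$ is an elementary consequence of $X \indep E$ and the determinism of $f$.
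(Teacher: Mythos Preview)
Your proposal is correct and follows essentially the same argument as the paper. The paper's justification, given in the paragraph immediately preceding the corollary, is exactly your two-step bound: it invokes Proposition~\ref{lem:HtildeBd} to get $\max_y H(X|Y=y) \geq (1-o(1))\log(\log(n))$ and uses $\max_x H(Y|X=x) \leq H(E) \leq c = \mathcal{O}(1)$ (the latter via the same data-processing observation you spell out), then compares the two for large $n$.
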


\section{Entropic Causality with Finite Number of Samples} 
\label{sec:finite}
In the previous section, we provided identifiability results assuming that we have access to the joint probability distribution of the observed variables. In any practical problem, we can only access a set of samples from this joint distribution. If we assume we can get independent, identically distributed samples from $p(x,y)$, how many samples are sufficient for identifiability? %

Given samples from $N$ i.i.d. random variables $\{(X_i,Y_i)\}_{i\in [N]}$ where $(X_i,Y_i)\sim p(x,y)$, consider the plug-in estimators $\hat{p}(y) \coloneqq \frac{1}{\{N\}}\sum_{i=1}^N\mathbbm{1}_{\{Y_i=y\}}$ and $\hat{p}(x,y)\coloneqq \frac{1}{N}\sum_{i=1}^N\mathbbm{1}_{\{X_i=x\}}\mathbbm{1}_{\{Y_i=y\}}$ and define the estimator of the conditional $p(x|y)$ as $\hat{p}(x|y)\coloneqq \frac{\hat{p}(x,y)}{\hat{p}(y)}$. Define $\hat{p}(x)$ and $\hat{p}(y|x)$ similarly.
\begin{definition}
The minimum entropy coupling of $t$ random variables $U_1,U_2,\hdots,U_t$ is the joint distribution $p(u_1,\hdots,u_t)$ with minimum entropy that respects the marginal distributions of $U_i,\forall i$.
\end{definition}
The algorithmic approach of \cite{Kocaoglu2017} relies on minimum entropy couplings. Specifically, they show the following equivalence: Given $p(x,y)$, let $E$ be the minimum entropy exogenous variable such that $E\indep X$, and there exists an $f$ such that $Y=f(X,E),X\sim p(x)$ induces $p(x,y)$. Then the entropy of the minimum entropy coupling of the distributions $\{p(Y|x):x\in [n]\}$ is equal to $H(E)$.

Therefore, understanding how having a finite number of samples affects the minimum entropy couplings allows us to understand how it affects the minimum entropy exogenous variable in either direction. 
Suppose $\lvert\hat{p}(y|x)-p(y|x)\rvert \leq \delta, \forall x,y$ and $\lvert \hat{p}(x|y)-p(x|y) \rvert\leq \delta, \forall x,y$.  %
Given a coupling for distributions $p(Y|x)$, we construct a coupling for $\hat{p}(Y|x)$ whose entropy is not much larger. As far as we are aware, the minimum entropy coupling problem with sampling noise has not been studied.

Consider the minimum entropy coupling problem with $n$ marginals $\mat{p}_k=[p_k(i)]_{i\in[n]}, k\in [n]$. Let $p(i_1,i_2,\hdots,i_n)$ be a valid coupling, i.e., $\sum_{j\neq k}\sum_{i_j=1}^n p(i_1,i_2,\hdots,i_n)=p_k(i_k),\quad \forall k,i_k$. Consider the marginals with sampling noise shown as $\hat{\mat{p}}_k = [\hat{p}_k(i)]_{i\in[n]}, k\in [n]$. Suppose $\lvert\hat{p}_k(i)-p_k(i)\rvert\leq \delta, \forall i,k$. The following is shown in Section \ref{app:constant_entropy_away} of the supplement.
\begin{theorem}
\label{thm:constant_entropy_away}
Let $p$ be a valid coupling for distributions $\{\mat{p}_i\}_{i\in[n]},$ where $\mat{p}_i\in \Delta_n,\forall i\in [n]$. 
Suppose $\{\mat{q}_i\}_{i\in [n]}$ are distributions such that $\lvert \mat{q}_i(j)-\mat{p}_i(j)\rvert\leq \delta, \forall i,j\in [n]$. If $\delta \leq \frac{1}{n^2\log(n)}$, then there exists a valid coupling $q$ for the marginals $\{\mat{q}_i\}_{i\in [n]}$ such that $H(q)\leq H(p) + e^{-1}\log(e) + 2 + o(1)$.
\end{theorem}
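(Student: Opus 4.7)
The plan is to construct the coupling $q$ as a small explicit perturbation of $p$, controlling the entropy inflation through a convex-combination decomposition $q=(1-\alpha)\,q^{(0)}+\alpha\,q^{(1)}$ with a carefully chosen mixing weight $\alpha$. The key quantitative observation is that, since $|\mat{q}_i(j)-\mat{p}_i(j)|\leq \delta$ for every $i,j$ and both $\mat{q}_i$ and $\mat{p}_i$ sum to $1$, the positive and negative parts of $\mat{q}_i-\mat{p}_i$ each have $\ell_1$ mass at most $n\delta/2\leq 1/(2n\log n)$. Only a vanishing fraction of each marginal's mass has to be moved, and we should be able to absorb the corresponding entropy cost into an $O(1)$ additive term.

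For the bulk of the construction I would decompose each target marginal as $\mat{q}_i=(1-\alpha)\mat{a}_i+\alpha\,\mat{b}_i$ with $\mat{a}_i$ chosen as close to $\mat{p}_i$ as possible, ideally $\mat{a}_i=\mat{p}_i$ and $\mat{b}_i=\mat{p}_i+(\mat{q}_i-\mat{p}_i)/\alpha$, which is valid whenever $\alpha\geq (\mat{p}_i(j)-\mat{q}_i(j))_+/\mat{p}_i(j)$ for every index. Then $q^{(0)}$ can be taken to be $p$ itself, so $H(q^{(0)})=H(p)$, and $q^{(1)}$ can be taken to be the independent product $\bigotimes_i \mat{b}_i$, so $H(q^{(1)})\leq n\log n$. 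The standard mixture-entropy inequality yields
\begin{equation*}
H(q)\leq (1-\alpha)H(p)+\alpha H(q^{(1)})+H(\alpha,1-\alpha)\leq H(p)+\alpha\,n\log n+H(\alpha,1-\alpha),
\end{equation*}
and with $\alpha=\Theta(n\delta)$ both extra terms are bounded: $\alpha\,n\log n=O(1)$ and $H(\alpha,1-\alpha)=o(1)$.

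The main obstacle is feasibility of this decomposition at indices where $\mat{p}_i(j)$ is comparable to or smaller than $\delta$, since the constraint $\mat{b}_i(j)\geq 0$ then forces $\alpha$ close to $1$ and destroys the bound. To handle these indices, I would partition each marginal into a \textit{bulk} set $B_i=\{j:\mat{p}_i(j)\gg\delta\}$ and a \textit{tail} $T_i=[n]\setminus B_i$, apply the clean mixture decomposition on $B_i$, and correct the tail by introducing a separate low-entropy coupling gated by an additional binary indicator. The total tail mass of each marginal is at most $n\delta=o(1)$, so the tail-correction coupling contributes an entropy that can be kept $O(1)$. The explicit constants $e^{-1}\log e$ and $+2$ in the stated bound emerge from tracking these contributions: the $+2$ bounds the binary-entropy overheads of the (at most two) nested mixture indicators, each contributing at most one bit, while $e^{-1}\log e$ is the uniform bound on $-\alpha\log\alpha$ over $\alpha\in[0,1]$, attained at $\alpha=1/e$, which controls the single tail entropy term whose $\alpha$ cannot be driven arbitrarily close to zero. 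The hardest step is carrying out this bookkeeping to yield exactly the stated constants; once this is in place, validity of $q$ as a coupling of $\{\mat{q}_i\}$ and the final bound $H(q)\leq H(p)+e^{-1}\log e+2+o(1)$ follow by assembling the pieces.
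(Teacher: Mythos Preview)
Your mixture construction $q=(1-\alpha)p+\alpha q^{(1)}$ is genuinely different from the paper's argument, but as written it contains a quantitative inconsistency that blocks the proof. You want $\alpha=\Theta(n\delta)$ so that $\alpha\cdot n\log n=O(1)$ (using the product coupling for $q^{(1)}$), and you want the tail $T_i=\{j:\mat{p}_i(j)\not\gg\delta\}$ to have mass at most $n\delta=o(1)$. But feasibility of $\mat{b}_i\ge 0$ on the bulk requires $\alpha\ge \delta/\min_{j\in B_i}\mat{p}_i(j)$; if the bulk threshold is a constant multiple $C\delta$ of $\delta$ (as your tail-mass bound assumes), this forces $\alpha\ge 1/C$, a constant, contradicting $\alpha=\Theta(n\delta)$. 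Conversely, taking $\alpha=\Theta(n\delta)$ forces the bulk threshold to be $\Theta(1/n)$, and then the tail $\{j:\mat{p}_i(j)<c/n\}$ can carry a \emph{constant} fraction of each marginal's mass (for near-uniform $\mat{p}_i$ every index is in the tail), so the tail correction is no longer a lower-order term. One repair is to replace the product by a low-support coupling for $q^{(1)}$ (e.g.\ the greedy MEC, support $O(n^2)$, entropy $\le 2\log n$); then $\alpha=O(1/\log n)$ suffices, the tail threshold becomes $O(\delta\log n)$, and the tail mass is $O(n\delta\log n)=o(1)$---but this is not what you wrote, and it already imports the paper's key tool.

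The paper sidesteps the feasibility issue entirely with a two-phase \emph{reduce-then-add} construction rather than a mixture. Phase~I iteratively decreases entries of $p$ until every induced marginal is coordinatewise $\le \mat{q}_i$; this never creates negativity and needs no mixture weight. Phase~II couples the residual vectors $\mat{q}_i-(\text{Phase I marginals})$, each of total mass $\le n^2\delta$, via the greedy MEC (support $\le n^2$), and adds this to the Phase~I output. The constant $e^{-1}\log e$ comes from a lemma showing that reducing the entries of a probability vector can inflate $\bar H(\cdot)=-\sum p_i\log p_i$ by at most $\max_{x\in[0,1]}(-x\log x)=e^{-1}\log e$; the $+2$ comes from bounding the residual's $\bar H$ by $n^2\delta\log(1/\delta)\le 2+o(1)$ under $\delta\le 1/(n^2\log n)$. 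Neither constant arises from binary mixture indicators as you conjectured.
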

Theorem \ref{thm:constant_entropy_away} shows that if the $l_\infty$ norm between the conditional distributions and their empirical estimators are bounded by $\delta\leq \frac{1}{n^2\log(n)}$, there exists a coupling that is within $3$ bits of the optimal coupling on true conditionals. To guarantee this with the plug-in estimators, we have the following: %
\begin{lemma}
\label{lem:concentration}
Let $X\in [n],Y\in[n]$ be two random variables with joint distribution $p(x,y)$. Let $\alpha = \min\{\min_ip(X=i),\min_jp(Y=j)\}$. Given $N$ samples $\{(X_i,Y_i)\}_{i\in[N]}$ from independent identically distributed random variables $(X_i,Y_i)\sim p(x,y)$, let $\hat{p}(X|Y=y)$, $\hat{p}(Y|X=x)$ be the plug-in estimators of the conditional distributions. If $N=\Omega(n^4\alpha^{-2}\log^3(n))$, then $\lvert\hat{p}(y|x)-p(y|x)\rvert\leq \frac{1}{n^2\log(n)}$ and $\lvert\hat{p}(x|y)-p(x|y)\rvert\leq \frac{1}{n^2\log(n)}, \forall x,y$ with high probability.
\end{lemma}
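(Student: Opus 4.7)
The plan is to reduce the statement to standard concentration for empirical joint/marginal masses and then propagate the error through the conditional ratio. First I would fix a target deviation $t>0$ and apply Hoeffding's inequality (or a Chernoff bound) to each of the indicator-average estimators $\hat p(x,y)$, $\hat p(x)$, $\hat p(y)$ individually, obtaining $\Pr(|\hat p(x,y)-p(x,y)|>t)\le 2\exp(-2Nt^2)$ and similarly for the marginals. A union bound over all $O(n^2)$ such events guarantees that simultaneously $|\hat p(x,y)-p(x,y)|\le t$, $|\hat p(x)-p(x)|\le t$, $|\hat p(y)-p(y)|\le t$ for every $x,y$, with failure probability at most $6n^2\exp(-2Nt^2)$; requiring this to be $1/\mathrm{poly}(n)$ costs only $N\gtrsim \log(n)/t^2$.

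The second step is to turn these joint/marginal errors into a conditional error. Writing $\Delta_{xy}=\hat p(x,y)-p(x,y)$ and $\Delta_x=\hat p(x)-p(x)$, a short algebraic manipulation gives
\[
\hat p(y\mid x)-p(y\mid x)=\frac{\Delta_{xy}\,p(x)-p(x,y)\,\Delta_x}{\hat p(x)\,p(x)}.
\]
Using the lower bound $p(x)\ge \alpha$ together with $\hat p(x)\ge p(x)-t\ge \alpha/2$ (which holds as long as we also impose $t\le \alpha/2$), and bounding $p(x,y)/p(x)\le 1$, I get $|\hat p(y\mid x)-p(y\mid x)|\le 4t/\alpha$. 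An identical argument with the roles of $X$ and $Y$ swapped yields $|\hat p(x\mid y)-p(x\mid y)|\le 4t/\alpha$.

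To land at the claimed accuracy $1/(n^2\log n)$ I would then set $t=\alpha/(4n^2\log n)$. Plugging this into the sample-complexity inequality $N\gtrsim \log(n)/t^2$ yields $N=\Omega(n^4\alpha^{-2}\log^3 n)$, matching the statement; the side condition $t\le \alpha/2$ is automatic for $n\ge 2$. Finally, I would verify that the chosen $N$ also drives the union-bound failure probability $6n^2\exp(-2Nt^2)$ to $o(1)$, which is immediate from the log factor in $N$.

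The argument is essentially routine, so I would not expect a serious obstacle; the only subtle point is the step that turns additive errors on $\hat p(x,y)$ and $\hat p(x)$ into an additive error on the ratio $\hat p(y\mid x)$. This is where the assumption $\alpha>0$ is essential, as it controls the denominator; without it one only gets a multiplicative, not an additive, guarantee on the conditional. Tracking constants carefully in that denominator is what dictates the exact $n^4\alpha^{-2}\log^3 n$ scaling.
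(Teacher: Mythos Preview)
Your proposal is correct and follows essentially the same route as the paper: Hoeffding on each of the $O(n^2)$ joint and marginal plug-in estimates, a union bound to get simultaneous accuracy $t$, an algebraic propagation through the ratio $\hat p(x,y)/\hat p(x)$ using the lower bound on the marginals to control the denominator, and finally choosing $t\asymp \alpha/(n^2\log n)$ to obtain $N=\Omega(n^4\alpha^{-2}\log^3 n)$. The only cosmetic difference is that the paper bounds the ratio by separately upper- and lower-bounding numerator and denominator (arriving at $|\hat p_{i|j}-p_{i|j}|<2t/\alpha$ with their convention $\alpha=\tfrac12\min_j p_j^Y$), whereas you use the exact identity for the difference of two fractions and get $4t/\alpha$; the resulting sample complexity is the same up to constants.
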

Next, we have our main identifiability result using finite number of samples:
\begin{theorem}[Finite sample identifiability]
\label{thm:finite_samples}
Let $\mathcal{A}$ be an algorithm that outputs the entropy of the minimum entropy coupling. Consider the SCM in Theorem \ref{thm:main}. Suppose $E$ is any random variable with constant entropy, i.e.,  $H(E)\!=\!c\!=\!\mathcal{O}(1)$. Let $p(X)$ satisfy Assumption \ref{ass:1}$(\rho,d)$ for some constants $\rho\!\geq\! 1,d\!>\!0$. Let $f$ be sampled uniformly randomly from all mappings $f\!:\![n]\!\times\![m]\!\rightarrow\! [n]$. 
Let $\alpha = \min\{\min_ip(X=i),\min_jp(Y=j)\}$. Given $N = \Omega( n^{4}\alpha^{-2}\log^3(n))$ samples, let $\hat{p}(X|y),\hat{p}(Y|x)$ be the plug-in estimators for the conditional distributions. Then, for sufficiently large $n$, $\mathcal{A}(\{\hat{p}(X|y)\}_y)>\mathcal{A}(\{\hat{p}(Y|x)\}_x)$ with high probability.
\end{theorem}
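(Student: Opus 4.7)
The plan is to chain the three prior results of the section into a single high-probability statement. From Theorem \ref{thm:main} I get an $\Omega(\log\log n)$ gap between the true minimum-entropy exogenous variable $H(E)=\mathcal{O}(1)$ and the reverse one $H(\tilde{E})\geq (1-o(1))\log\log n$. By the equivalence recalled at the start of Section \ref{sec:finite}, these quantities are exactly $\mathcal{A}(\{p(Y|x)\}_x)$ and $\mathcal{A}(\{p(X|y)\}_y)$. So I need to show that the algorithm, when fed the empirical conditionals, still reports values that are within an $\mathcal{O}(1)$ distortion of these ideal values; the $\Omega(\log\log n)$ gap then dominates.

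First I would apply Lemma \ref{lem:concentration}: with $N=\Omega(n^4\alpha^{-2}\log^3 n)$ i.i.d.\ samples, simultaneously for every $x,y$,
\[
\lvert \hat{p}(y|x)-p(y|x)\rvert \leq \tfrac{1}{n^2\log n},\qquad \lvert \hat{p}(x|y)-p(x|y)\rvert \leq \tfrac{1}{n^2\log n},
\]
with probability $1-o(1)$. This is exactly the perturbation regime $\delta\leq 1/(n^2\log n)$ required by Theorem \ref{thm:constant_entropy_away}. I would then invoke Theorem \ref{thm:constant_entropy_away} in both directions. Using the true optimal coupling of $\{p(Y|x)\}_x$ as the starting point, the theorem supplies a valid coupling of $\{\hat{p}(Y|x)\}_x$ whose entropy is at most $H(E)+\mathcal{O}(1)$, so $\mathcal{A}(\{\hat{p}(Y|x)\}_x)\leq H(E)+\mathcal{O}(1)=\mathcal{O}(1)$. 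For the reverse direction, I use the symmetry of the $\ell_\infty$ bound (the roles of $\mathbf{p}$ and $\mathbf{q}$ can be swapped): start from any optimal coupling of $\{\hat{p}(X|y)\}_y$ and perturb it to a valid coupling of $\{p(X|y)\}_y$ with entropy at most $\mathcal{A}(\{\hat{p}(X|y)\}_y)+\mathcal{O}(1)$. Since $\mathcal{A}(\{p(X|y)\}_y)=H(\tilde{E})$, this yields the lower bound $\mathcal{A}(\{\hat{p}(X|y)\}_y)\geq H(\tilde{E})-\mathcal{O}(1)$.

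Combining via a union bound over the concentration event from Lemma \ref{lem:concentration} and the identifiability event from Theorem \ref{thm:main}, both occur with probability $1-o(1)$. On this joint event,
\[
\mathcal{A}(\{\hat{p}(X|y)\}_y)\;\geq\;(1-o(1))\log\log n-\mathcal{O}(1)\;>\;\mathcal{O}(1)\;\geq\;\mathcal{A}(\{\hat{p}(Y|x)\}_x)
\]
for all $n$ larger than some threshold depending only on the constants $\rho,c,d$ and the $\mathcal{O}(1)$ slack from Theorem \ref{thm:constant_entropy_away}.

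I expect the main technical obstacle to be purely bookkeeping rather than new mathematics: verifying that the $\delta$ required by Theorem \ref{thm:constant_entropy_away} really is achieved uniformly across \emph{all} $n$ marginals on a single sample draw (handled by the union bound built into Lemma \ref{lem:concentration}), and checking that the symmetric use of Theorem \ref{thm:constant_entropy_away} is legitimate despite its statement being phrased one-sidedly. A minor wrinkle is ensuring that every $y$ appears in the empirical sample so that $\hat{p}(X|y)$ is well defined; this is guaranteed once $\alpha>0$ and $N\alpha\gg\log n$, which is implied by the stated sample size. No new concentration or coupling lemmas should be needed.
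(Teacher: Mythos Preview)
Your proposal is correct and follows the same overall architecture as the paper's proof: invoke Lemma~\ref{lem:concentration} to land in the perturbation regime of Theorem~\ref{thm:constant_entropy_away}, then combine with the population-level gap from Theorem~\ref{thm:main}. The forward-direction bound $\mathcal{A}(\{\hat{p}(Y|x)\}_x)\leq H(E)+\mathcal{O}(1)$ is argued identically in both.

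The one genuine difference is the reverse direction. You apply Theorem~\ref{thm:constant_entropy_away} symmetrically (swapping the roles of $\mathbf{p}$ and $\mathbf{q}$, which is legitimate since the hypothesis is an $\ell_\infty$ bound) to obtain $\mathcal{A}(\{\hat{p}(X|y)\}_y)\geq H(\tilde{E})-\mathcal{O}(1)$. The paper instead uses the data-processing inequality $\mathcal{A}(\{\hat{p}(X|y)\}_y)\geq \max_y \hat{H}(X|Y=y)$ and then controls $|\hat{H}(X|Y=y)-H(X|Y=y)|=o(1)$ via an external entropy-continuity result (Theorem~6 of \cite{ho2010interplay}). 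Your route is more self-contained, needing only the paper's own Theorem~\ref{thm:constant_entropy_away}; the paper's route has the advantage that it explicitly surfaces the mechanism---a single large conditional entropy $H(X|Y=y)$---already isolated in the proof of Theorem~\ref{thm:main}, which is exactly what is reused to obtain the sharper sample complexity in Theorem~\ref{thm:finite_samples_conditionals}. Both arguments are valid.
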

From the equivalence between minimum entropy couplings and minimum exogenous entropy, Theorem \ref{thm:finite_samples} shows identifiability of the causal direction using minimum-entropy exogenous variables. Similar to Corollary \ref{cor:identifiability}, the result holds when $p(X)$ is chosen uniformly randomly from the simplex:
\begin{corollary}
\label{cor:finite_simplex}
Consider the SCM in Theorem \ref{thm:main}, where $H(E)\!\!=\!\! c\!\! =\!\! \mathcal{O}(1)$, $f$ is sampled uniformly randomly. Let $p(X)$ be sampled uniformly randomly from the simplex $\Delta_n$. Given $N \!=\! \Omega( n^{8}\log^5(n))$ samples, let $\hat{p}(X|Y=y)$, $\hat{p}(Y|X=x)$ be the plug-in estimators for the conditional distributions. Then, for large enough $n$, $\mathcal{A}(\{\hat{p}(X|Y=y)\}_y)>\mathcal{A}(\{\hat{p}(Y|X=x)\}_x)$ with high probability.
\end{corollary}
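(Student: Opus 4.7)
My plan is to derive the corollary from Theorem \ref{thm:finite_samples} by showing that, with high probability over the draw of $p(X)$ from $\Delta_n$ and the uniform draw of $f$, all of its hypotheses are satisfied with parameters that yield the stated $N$. The SCM structure and uniformity of $f$ are inherited directly, so I need to verify two probabilistic facts: (i) $p(X)$ satisfies Assumption \ref{ass:1} with some constants $(\rho,d)$, and (ii) $\alpha \coloneqq \min\{\min_i p(X\!=\!i),\, \min_j p(Y\!=\!j)\} \geq c_0/(n^2 \log n)$ for a constant $c_0 > 0$. Plugging the resulting $\alpha$ into Theorem \ref{thm:finite_samples} then gives $N = \Omega(n^4 \alpha^{-2} \log^3 n) = \Omega(n^8 \log^5 n)$.

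Item (i) is already supplied by (the proof of) Corollary \ref{cor:identifiability}(b): the uniform Dirichlet places a constant fraction of coordinates into a constant-factor window around $1/n$ with high probability. For the $\min_i p(X\!=\!i)$ half of item (ii), the marginals of $\mathrm{Unif}(\Delta_n)$ are $\mathrm{Beta}(1, n-1)$, so $\Pr[p(X\!=\!i) \leq t] \leq (n-1)t$; a union bound over $i \in [n]$ with $t = 1/(n^2 \log n)$ makes the bad event have probability $O(1/\log n) = o(1)$.

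The crux is the lower bound on $\min_j p(Y\!=\!j)$. Writing
\[
p(Y\!=\!j) = \sum_{x,e} p(X\!=\!x)\, p(E\!=\!e)\, \mathbbm{1}_{\{f(x,e) = j\}},
\]
this is, conditional on $p(X)$ and $p(E)$, a weighted sum of independent $\mathrm{Bernoulli}(1/n)$ indicators with mean $1/n$. I would apply Bernstein's inequality on the high-probability event $\sum_x p(X\!=\!x)^2 = \Theta(1/n)$ (standard Dirichlet concentration), which, together with $\sum_e p(E\!=\!e)^2 \leq 1$, bounds the variance of $p(Y\!=\!j)$ by $O(1/n^2)$. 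To absorb the fact that the maximum summand can be as large as $\Theta(\log n/n)$ (since $\max_x p(X\!=\!x) = \Theta(\log n/n)$), I would split the sum into an $O(1)$-size ``heavy'' part and a ``light'' remainder with summands $O(1/n)$, apply multiplicative Chernoff to the former and Bernstein to the latter. This should give $p(Y\!=\!j) \geq 1/(n^2 \log n)$ per $j$ with probability $1 - o(1/n)$, and a union bound over $j \in [n]$ finishes the argument; combining with (i) and the $p(X)$-half of (ii) and invoking Theorem \ref{thm:finite_samples} produces the claimed $N$.

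The main obstacle is precisely this split-and-concentrate step for $\min_j p(Y\!=\!j)$: a naive Bernstein on the full weighted sum yields only an $\exp(-O(1/\log n))$ tail, too weak to survive the union bound over $j \in [n]$. The fact that $H(E) = O(1)$ forces the sorted probabilities $p(E\!=\!e_k) \leq 1/k$, so only a constant number of $(x,e)$ pairs have weight $\omega(1/n)$, which is what makes the split work. A delicate corner case is when $E$ is near-deterministic so that $p(Y\!=\!j) = 0$ for some $j$ with constant probability; in that regime one has to either argue that $\hat p(Y\!=\!j) = 0$ as well (so that state drops out of both the true and empirical coupling problems) or restrict $\alpha$ to the observed support of $Y$.
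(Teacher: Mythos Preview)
Your items (i) and the $\min_i p(X=i)$ half of (ii), together with the final plug-in into Theorem~\ref{thm:finite_samples}, are exactly what the paper does. The paper's proof invokes Corollary~\ref{cor:identifiability}(b) for Assumption~\ref{ass:1}, then uses the $\mathrm{Beta}(1,n-1)$ marginals of the uniform simplex (via $P(x_i \le z)=1-(1-z)^{n-1}$) and a union bound to get $\min_i p(X=i)\ge 1/(n^2\log n)$ with probability $1-O(1/\log n)$; substituting this directly for $\alpha$ in the $\Omega(n^4\alpha^{-2}\log^3 n)$ requirement of Theorem~\ref{thm:finite_samples} gives $\Omega(n^8\log^5 n)$.

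The difference is that the paper does \emph{not} attempt to bound $\min_j p(Y=j)$ at all: it simply treats the bound on $\min_i p(X=i)$ as the bound on $\alpha$ and stops there. You are right that, as $\alpha$ is defined, this leaves something unverified, and your corner case with near-deterministic $E$ is a real obstruction (when $E$ is constant, $Y=f(X)$ is a uniformly random map $[n]\to[n]$ and $\Theta(n/e)$ states of $Y$ are empty with high probability, so $\min_j p(Y=j)=0$). Your Bernstein-split heuristic does not obviously close the gap either: the claim that only a constant number of $(x,e)$ pairs have weight $\omega(1/n)$ is not correct as stated---with $e_1=\Theta(1)$, the pairs $\{(x,1):x\in[n]\}$ alone contribute $\Theta(n)$ weights of order $1/n$, and to make Bernstein survive a union bound over $n$ bins you would need to split at a threshold like $1/(n\log n)$, which again leaves $\Theta(n)$ ``heavy'' terms. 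So you have correctly identified a point the paper elides rather than resolves; fixing it properly would likely require either restricting $\alpha$ to the realized support of $Y$ (as you suggest) or a direction-specific argument that avoids the $\min_j p(Y=j)$ dependence altogether.
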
 

\textbf{Conditional Entropy Criterion with Finite Samples: }
Note that the sample complexity in Theorem \ref{thm:finite_samples} scales with $\alpha^{-2}$ where $\alpha\coloneqq \min\{\min_i p(X\!=\!i), \min_j p(Y\!=\!j)\}$. If either of the marginal distributions are not strictly positive, this can make the bound of Theorem \ref{thm:finite_samples} vacuous. To address this, we use an internal result from the  %
proof of Theorem \ref{thm:main}. 
In the proof %
we show that for some $i$, $p(Y\!=\!i) = \Omega(\frac{1}{n})$ and $H(X|Y\!=\!i) = \Omega(\log(\log(n)))$. Then, it is sufficient to obtain enough samples to accurately estimate $p(X|Y\!=\!i)$. Even though $i$ is not known a priori, since $p(Y=i)=\Omega(\frac{1}{n})$, estimating conditional entropies $H(X|Y\!=\!j)$ where the number of samples $\lvert\{(x,Y=j)\}_x\rvert$ exceeds a certain threshold guarantees that $p(X|Y=i)$ is estimated accurately. We have the following result:
\begin{theorem}
[Finite sample identifiability via conditional entropy]\label{thm:finite_samples_conditionals}
Consider the SCM in Theorem \ref{thm:main}, where $H(E)\!=\! c\! =\! \mathcal{O}(1)$, $f$ is sampled uniformly randomly. Let $p(X)$ satisfy Assumption \ref{ass:1}$(\rho,d)$ for some constants $\rho\!\geq\! 1,d\!>\!0$. Given $N = \Omega( n^{2} \log (n))$ samples, let $N_x$ be the number of samples where $X\!=\!x$ and similarly for $N_y$. Let $\hat{H}$ denote the entropy estimator of \cite{valiant2017estimating}. Then, for $n$ large enough, $\max_{\{y:N_y \geq n\}} \hat{H}(X|Y\!=\!y) > \max_{\{x: N_x \geq n\}} \hat{H}(Y|X\!=\!x)$ with high probability.
\end{theorem}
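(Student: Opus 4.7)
The plan is to extract from the proof of Theorem \ref{thm:main} a specific bin $y^\star \in [n]$ satisfying both (i) $p(Y=y^\star) = \Omega(1/n)$, and (ii) $H(X\mid Y=y^\star) = \Omega(\log\log n)$, with high probability over the random draw of $f$. Property (i) follows from the balls-and-bins analysis underlying Theorem \ref{thm:main}: the chosen good bin receives $\Omega(n)$ balls from $\{(i,1) : i\in S\}$, and each such ball contributes a mass of at least $p(E=1)/(\sqrt{\rho}\,n)$ to $p(Y=y^\star)$, with $p(E=1) = \Omega(1)$ because $H(E) = c = \mathcal{O}(1)$ forces the mode of $E$ to carry constant mass. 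Property (ii) is precisely the lower bound on $H(X\mid Y=y^\star)$ that drives Theorem \ref{thm:main}. For the other direction, the deterministic inequality $H(Y\mid X=x) \le H(E) = c$ holds for every $x$, independent of any sampling.

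Given these two structural facts, the next step is to import concentration. With $N = \Omega(n^{2}\log n)$ i.i.d.\ samples, Chernoff bounds give $N_{y^\star} = \Omega(n\log n) \gg n$ with probability $1-o(1)$, so $y^\star$ survives the filter $\{y : N_y \ge n\}$. Conditional on $N_{y^\star}$, the $X$-values associated with $Y=y^\star$ are i.i.d.\ from $p(X\mid Y=y^\star)$, a distribution supported on at most $n$ symbols. The Valiant--Valiant estimator of \cite{valiant2017estimating} achieves additive error $\epsilon$ on the entropy of a distribution over $[k]$ using $O(k/(\epsilon \log k))$ samples with constant success probability, boosted to $1-o(1)$ by median-of-means. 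Plugging in $k\le n$ and $s = N_{y^\star} = \Omega(n\log n)$ yields an additive error of $o(1)$, so $\hat{H}(X\mid Y=y^\star) \ge H(X\mid Y=y^\star) - o(1) = \Omega(\log\log n)$. Symmetrically, for every $x$ with $N_x \ge n$ the same estimator guarantees $\hat{H}(Y\mid X=x) \le H(Y\mid X=x) + o(1) \le c + o(1)$, and a union bound over the at most $n$ surviving $x$'s preserves the high-probability guarantee. Comparing the two maxima yields the theorem whenever $n$ is large enough that $\log\log n$ dominates the constant $c$.

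The main obstacle is bookkeeping two independent sources of randomness cleanly. The random function $f$ (together with $p(X)$ and $p(E)$) determines the joint distribution and, via Theorem \ref{thm:main}, the bin $y^\star$ and its properties (i) and (ii); this succeeds with high probability over $f$. Conditional on a good $f$, the i.i.d.\ sampling induces its own randomness in $N_y, N_x, \hat{H}(X\mid Y=y)$, and $\hat{H}(Y\mid X=x)$, which concentrate as above. The crucial improvement over Theorem \ref{thm:finite_samples} is that we no longer need the $\ell_\infty$ accuracy of \emph{every} conditional (so the $\alpha^{-2}$ factor disappears); we only need one particular conditional to be estimated well, and the filter $N_y \ge n$ ensures that any estimate we actually compare is drawn from enough samples for the Valiant--Valiant guarantee to kick in.
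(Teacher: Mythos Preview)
Your approach mirrors the paper's almost exactly: both extract the bin $y^\star$ (labeled $y=2$ in the paper) from the proof of Theorem~\ref{thm:main}, verify $p(Y=y^\star)=\Omega(1/n)$ so that Hoeffding/Chernoff places $y^\star$ in the filter $\{y:N_y\ge n\}$, invoke the Valiant--Valiant estimator to control all retained conditional-entropy estimates within an additive constant, and compare against the deterministic bound $H(Y\mid X=x)\le H(E)=c$.

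Two slips to fix. First, the good bin does \emph{not} receive $\Omega(n)$ balls from $\{(i,1):i\in S\}$; the maximum load in the balls-and-bins game is only $\Theta\!\left(\frac{\log n}{\log\log n}\right)$. The conclusion $p(Y=y^\star)=\Omega(1/n)$ still holds because each of those $\Theta(\log n/\log\log n)$ balls contributes mass $\Omega(1/n)$, but your stated justification is wrong (and would, taken literally, force $p(Y=y^\star)=\Omega(1)$). Second, your median-of-means boosting is loose on the $x$ side: to survive a union bound over up to $n$ conditionals you need per-conditional failure probability $o(1/n)$, which costs $\Theta(\log n)$ repetitions; splitting the $N_x\ge n$ samples into $\Theta(\log n)$ groups degrades the per-group sample size to $n/\log n$ and the resulting error from $o(1)$ to $O(1)$. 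The paper avoids this by quoting the Valiant--Valiant corollary that directly delivers additive error at most $1$ with failure probability $e^{-n^{\Theta(1)}}$ from $\Omega(n/\log n)$ samples, making the union bound immediate. Either route suffices, since $O(1)$ additive error is all that is needed: $\Omega(\log\log n)-O(1)>c+O(1)$ for large $n$.
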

Theorem \ref{thm:finite_samples_conditionals} shows that $\mathcal{O}(n^2\log(n))$ samples are sufficient to estimate the large conditional entropies of the form $H(Y|x),H(X|y)$, which is sufficient for identifiability even for sparse $p(x,y)$.

\begin{figure*}[t!]
	\centering
	\begin{subfigure}[b]{0.3\linewidth}
	\includegraphics[width=\textwidth]{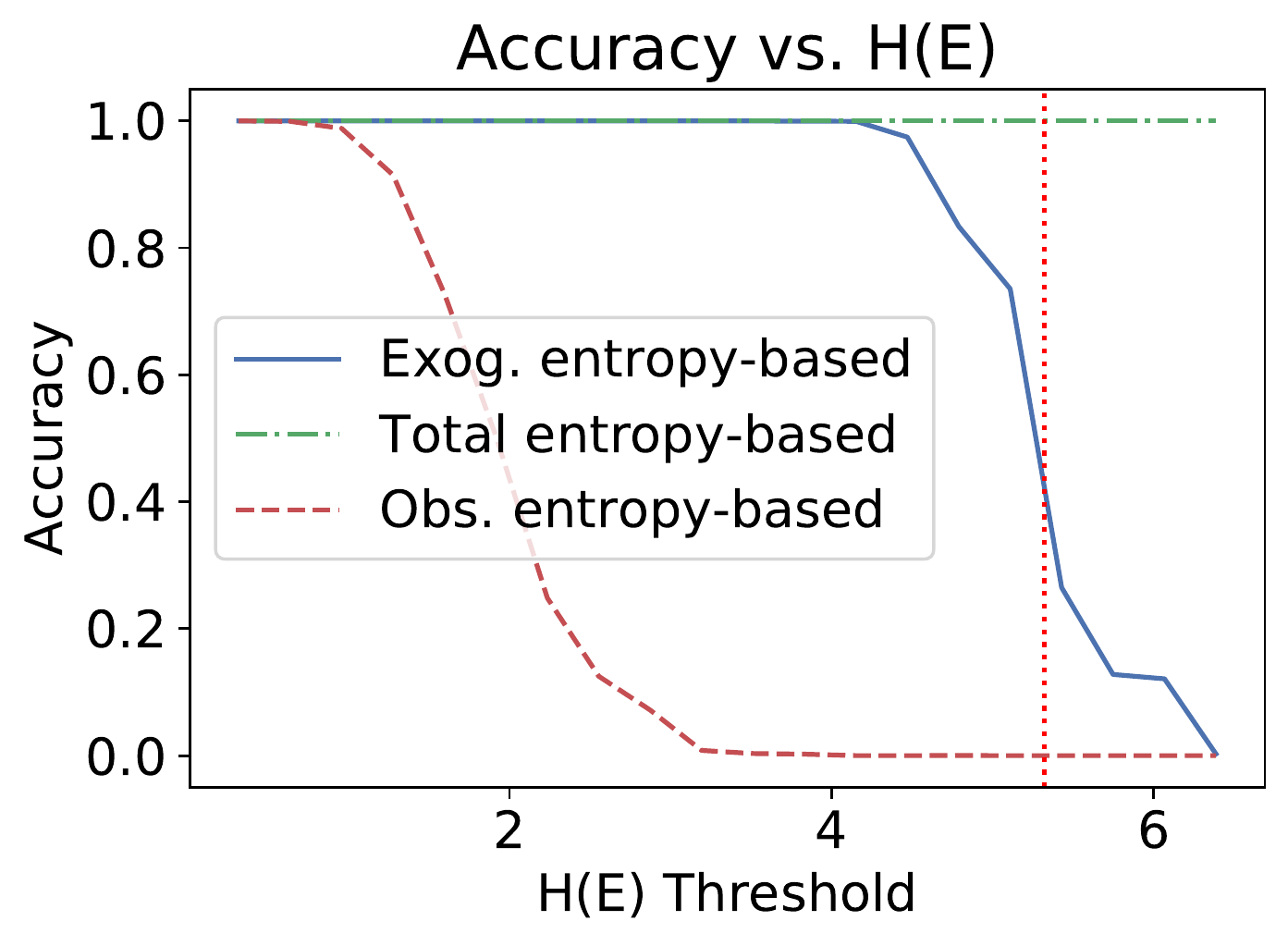}
	\caption{}
	\label{fig:impl_40_40}
	\end{subfigure}
	\begin{subfigure}[b]{0.3\linewidth}
    \includegraphics[width=\textwidth]{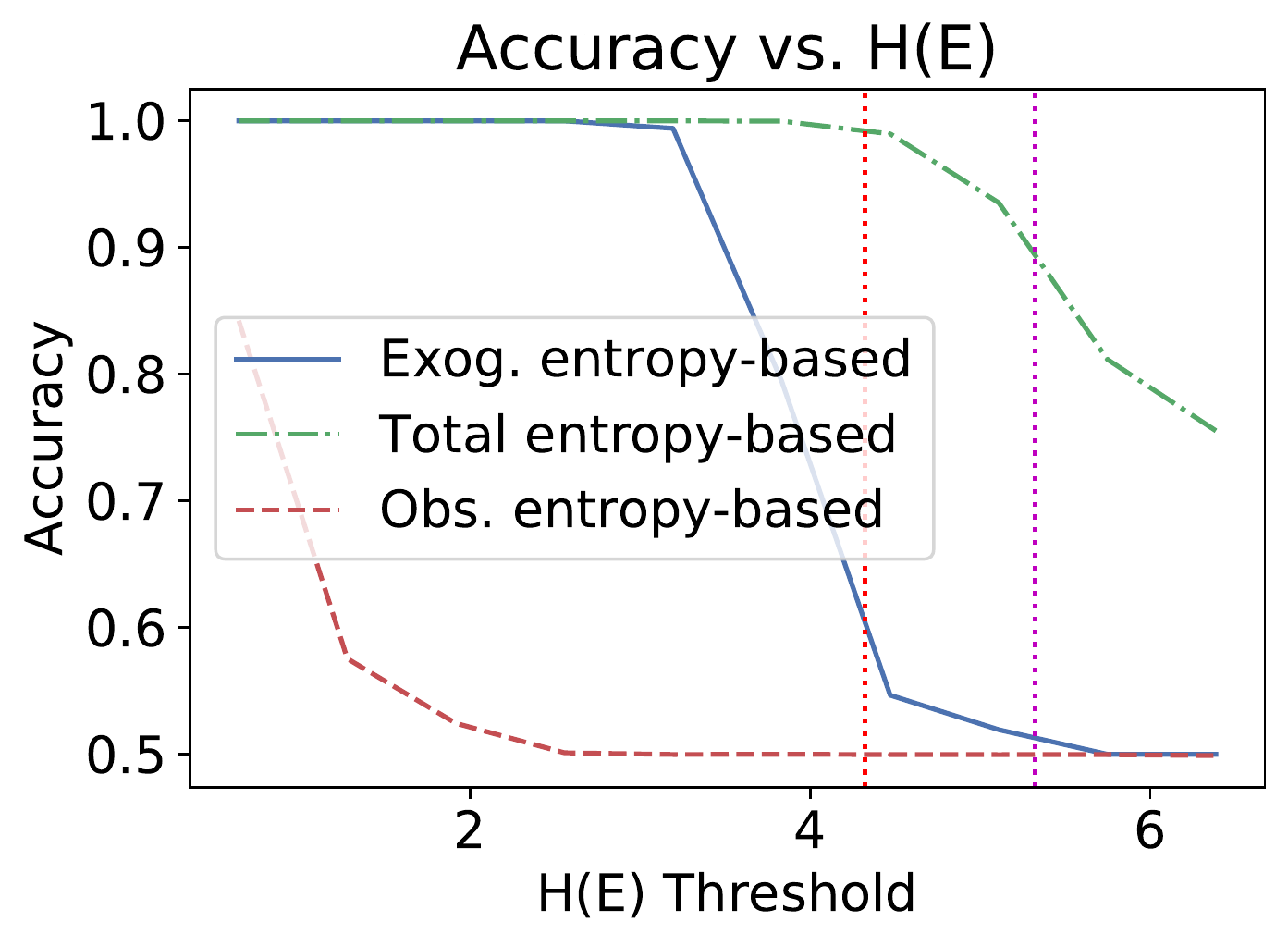}
	\caption{}
	\label{fig:impl_20_40}
	\end{subfigure}
	\begin{subfigure}[b]{0.3\linewidth}
	\includegraphics[width=\textwidth]{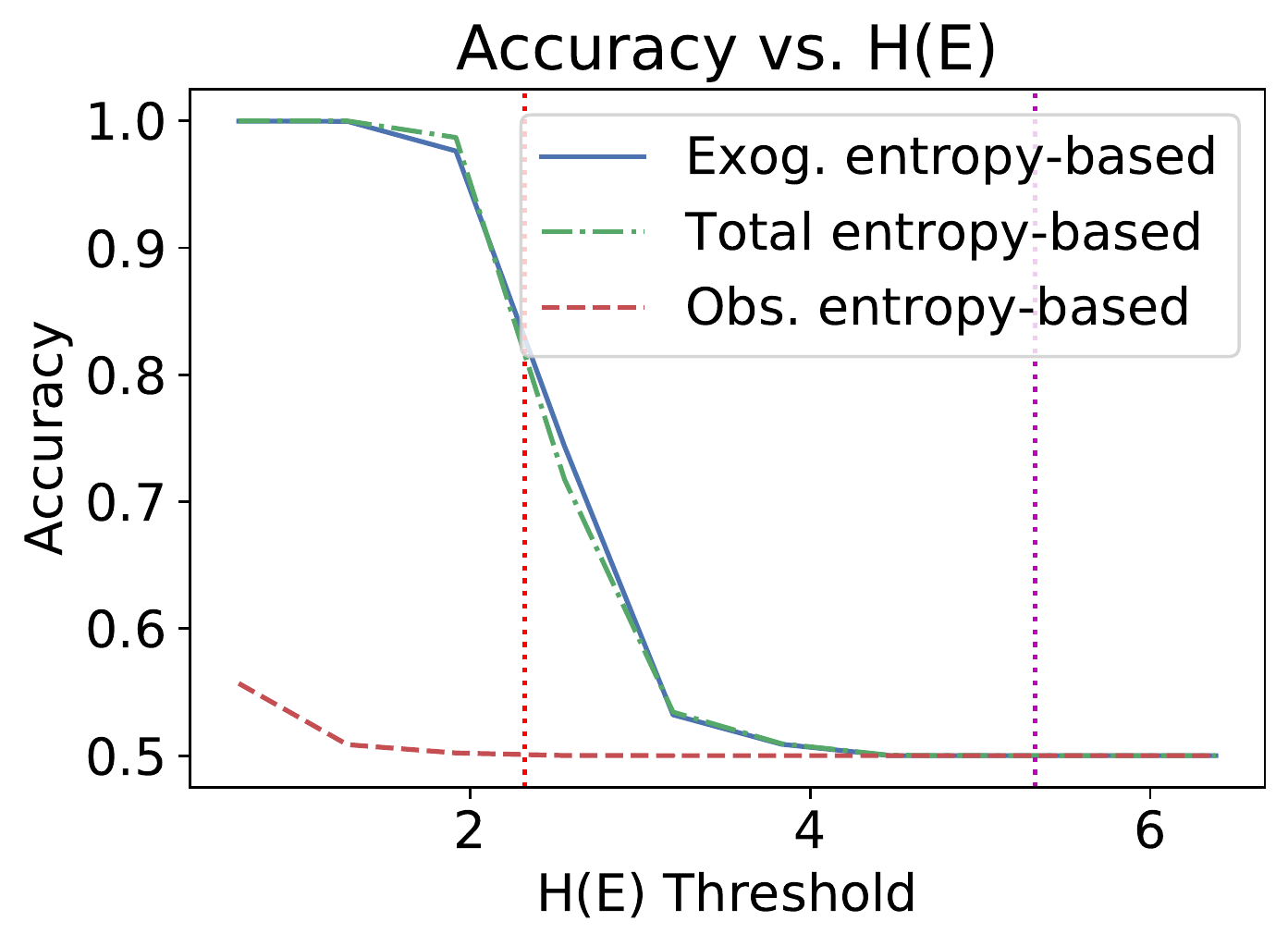}
	\caption{}
	\label{fig:impl_5_40}
	\end{subfigure}
	\caption{$m:$ number of states of $X$, $n:$ number of states of $Y$ in causal graph $X\rightarrow Y$. \textbf{(a)} $n=40, m=40$. Accuracy on simulated data: \emph{Obs. entropy-based} declares $X\rightarrow Y$ if $H(X)>H(Y)$ and $Y\rightarrow X$ otherwise; \emph{Exog. entropy-based} compares the exogenous entropies in both direction and declares $X\rightarrow Y$ if the exogenous entropy for this direction is smaller, and $Y\rightarrow X$ otherwise; \emph{Total entropy-based} compares the total entropy of the model in both directions and declares the direction with smaller entropy as the true direction as proposed in \cite{Kocaoglu2017}. \textbf{(b)} uses uniform mixture data from when $m=40,n=20$ and $m=20,n=40$. Similarly for \textbf{(c)} for $m=40,n=5$ and $m=5,n=40$. Magenta and red dashed vertical lines show $\log_2(\min\{m,n\})$ and  $\log_2(\max\{m,n\}),$ respectively.}
	\label{fig:implications_on_observed}	
\end{figure*}
\section{Experiments}
\label{sec:experiments}
In this section, we conduct several experiments to evaluate the robustness of the framework. Complete details of each experiment are provided in the supplementary material. Unless otherwise stated, the greedy minimum entropy coupling algorithm of \cite{Kocaoglu2017} is used to approximate $H(E)$ and $H(\tilde{E})$.

\textbf{Implications of Low-Exogenous Entropy Assumption.}
We investigate the implications of this assumption. Specifically, one might ask if having low exogenous entropy implies $H(X)\!>\!H(Y)$. This would be unreasonable, since there is no reason for cause to always have the higher entropy.

In Figure \ref{fig:implications_on_observed},  we evaluate the accuracy of the algorithm on synthetic data for different exogenous entropies $H(E)$. To understand the impact of the assumption on $H(X),H(Y)$, in addition to comparing exogenous entropies (\emph{Exog. entropy-based}) and total entropies (\emph{total entropy-based}) \cite{Kocaoglu2017}, we also show the performance of a simple baseline that compares $H(X)$ and $H(Y)$ (\emph{obs. entropy-based}) and declares $X\rightarrow Y$ if $H(X)>H(Y)$ and vice versa. 

We identify three different regimes, e.g., see Figure \ref{fig:impl_40_40}: Regime $1$: If $H(E)<0.2\log(n)$, we get $H(X)>H(Y)$ most of the time. All methods perform very well in this regime which we can call \emph{almost deterministic}. Regime $2$: If $0.2\log(n)<H(E)<0.6\log(n)$, accuracy of \emph{obs. entropy-based} method goes to $0$ since, on average, we transition from the regime where $H(X)>H(Y)$ to $H(X)<H(Y)$. Regime $3$: $0.6\log(n)<H(E)<0.8\log(n)$ where $H(X)<H(Y)$ most of the time. As can be seen, \emph{total entropy-based} and \emph{exog. entropy-based} methods both show (almost) perfect accuracy in Regime $1,2,3$ whereas \emph{obs. entropy-based} performs well only in Regime $1$. 

We also evaluated the effect of the observed variables having different number of states on mixture data in Figure \ref{fig:impl_20_40}, \ref{fig:impl_5_40}. In this case, framework performs well up until about $0.8\log(\min\{m,n\})$.

\textbf{Relaxing Constant Exogenous-Entropy Assumption.}
In Section \ref{sec:identifiability}, we demonstrated that the entropic causality framework can be used when the exogenous randomness is a constant, relative to the number of states $n$ of the observed variables. For very high dimensional variables, this might be a strict assumption. In this section, we conduct synthetic experiments to evaluate if entropic causality can be used when $H(E)$ scales with $n$. In particular, we test for various $\alpha\!<\!1$ the following: \emph{Is it true that the exogenous entropy in the wrong direction will always be larger, if the true exogenous entropy is $\leq\! \alpha log(n)?$} For $\alpha\!=\!\{0.2,0.5,0.8\}$, we sampled $10$k $p(E)$ from Dirichlet distribution such that $H(E)\!\approx\! \alpha \log(n)$ and calculated exogenous entropy in the wrong direction $H(\tilde{E})$. Figure \ref{fig:relaxing0.8} shows the histograms of $H(\tilde{E})$ for $\alpha\!=\! 0.8$ and $n\!=\!\{16,64,128\}$. We observe that $H(\tilde{E})$ tightly concentrates around $\beta\log(n)$ for some $\beta\!>\!\alpha$. For reference, $\alpha\log(n)$ is shown by the vertical yellow line. Similar results are observed for other $\alpha$ values which are provided in the supplementary material.

\begin{figure}
	\centering
	\begin{subfigure}[b]{0.28\linewidth}
		\includegraphics[width=\textwidth]{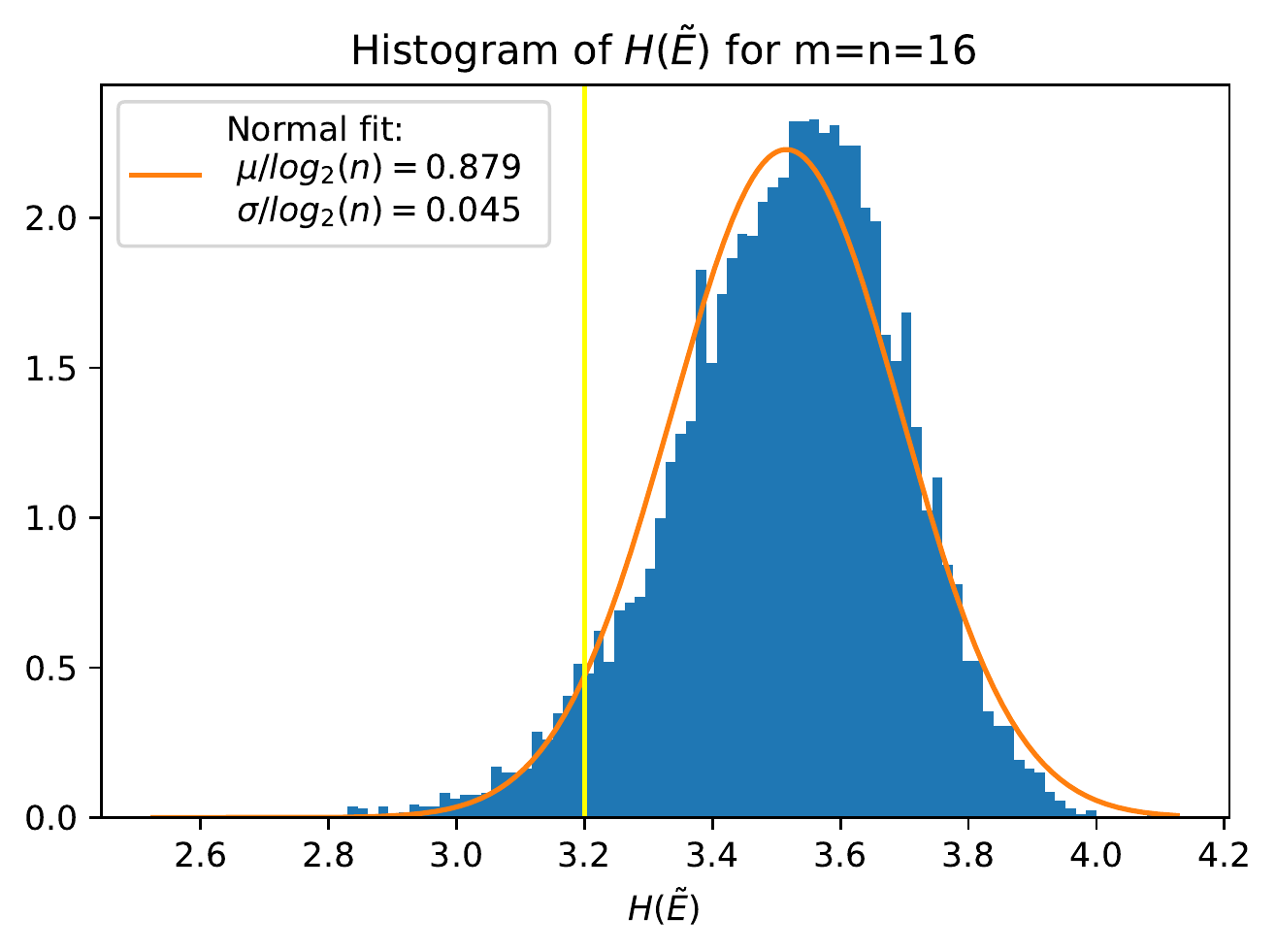}
		\label{fig:normal_fit_16}
	\end{subfigure}
	\begin{subfigure}[b]{0.26\linewidth}
		\includegraphics[width=\textwidth]{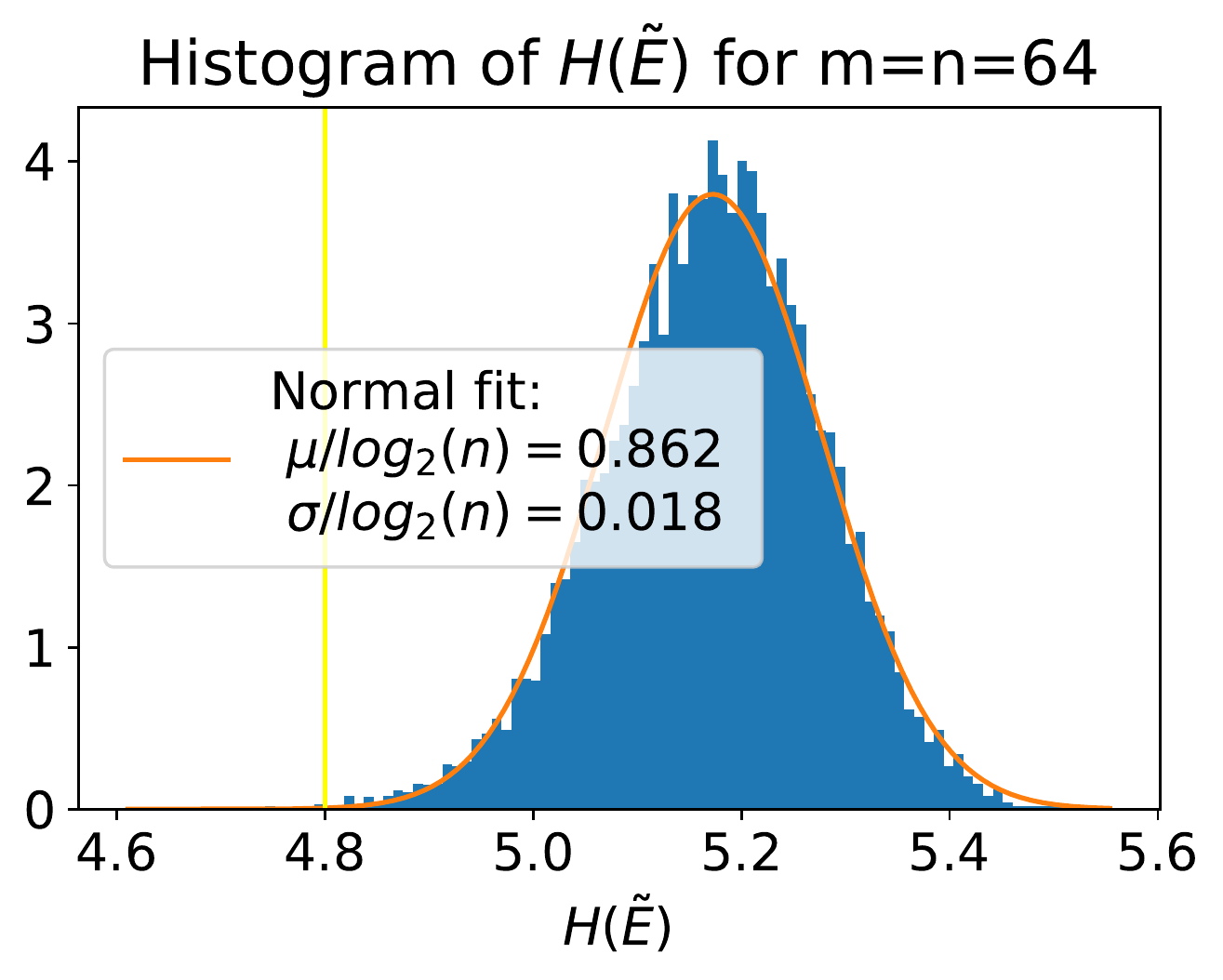}
		\label{fig:normal_fit_64}
	\end{subfigure}
	\begin{subfigure}[b]{0.28\linewidth}
		\includegraphics[width=\textwidth]{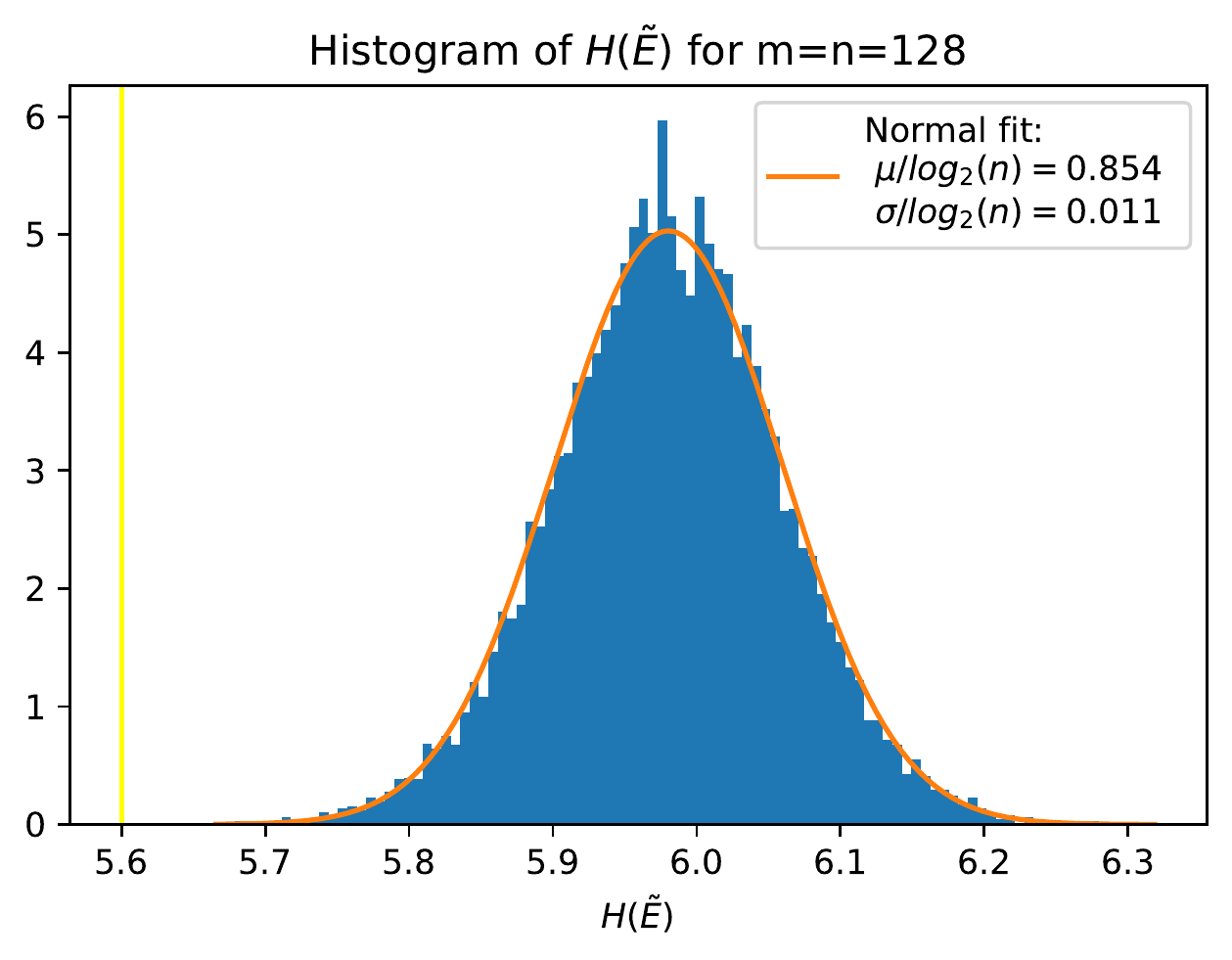}
		\label{fig:normal_fit_128}
	\end{subfigure}	
\caption{Histogram of $H(\tilde{E})$ when $H(E)\approx 0.8\log_2(n)$.  Yellow line shows $x=0.8\log_2(n)$}
\label{fig:relaxing0.8}
\end{figure}
\begin{figure*}[t!]
	\centering
	\begin{subfigure}[b]{0.3\linewidth}
		\includegraphics[width=\textwidth]{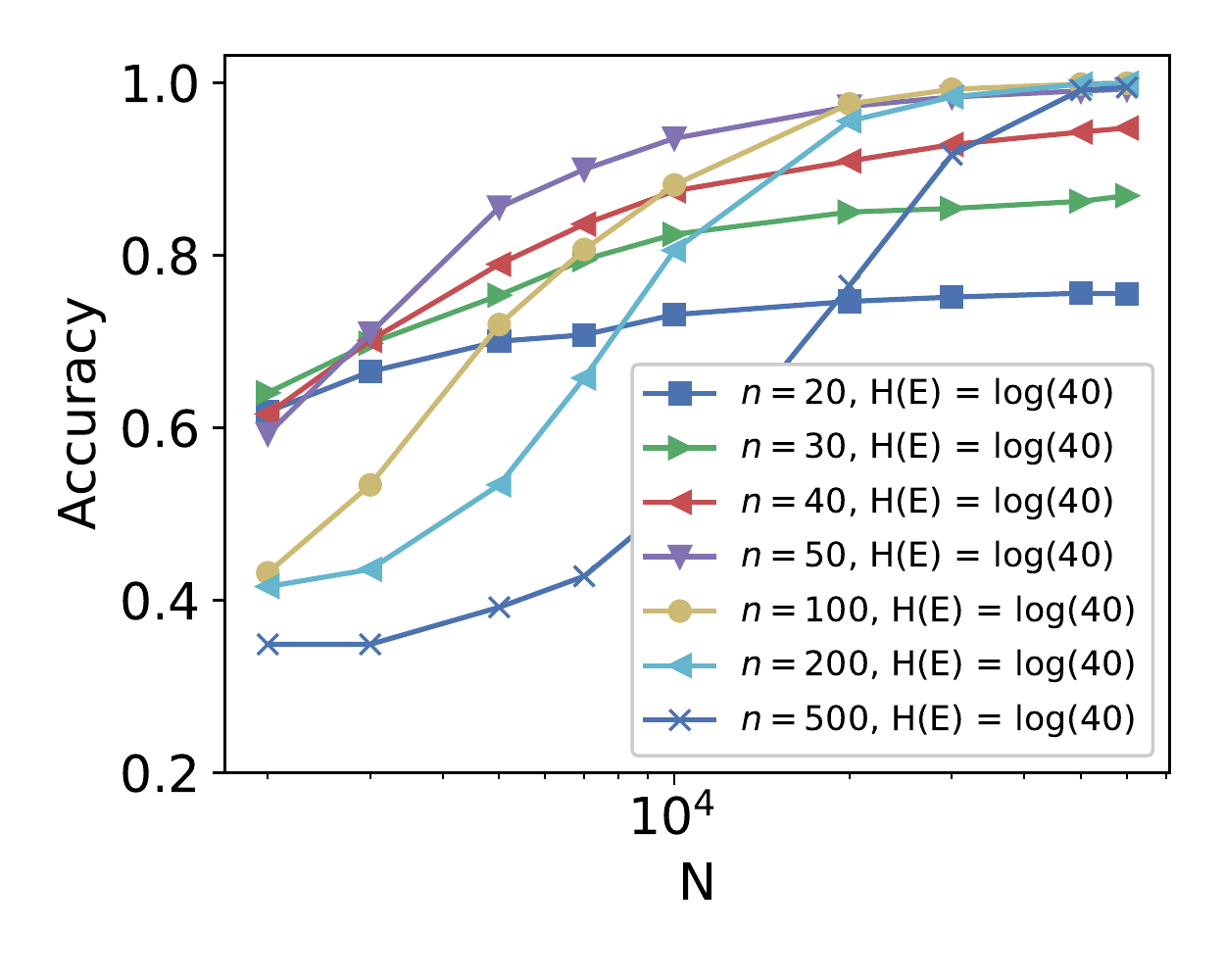}
		\caption{Identification via conditional entropies ($H(E) \approx \log(40)$).}
		\label{fig:conditional}
	\end{subfigure}
	\begin{subfigure}[b]{0.3\linewidth}
		\includegraphics[width=\textwidth]{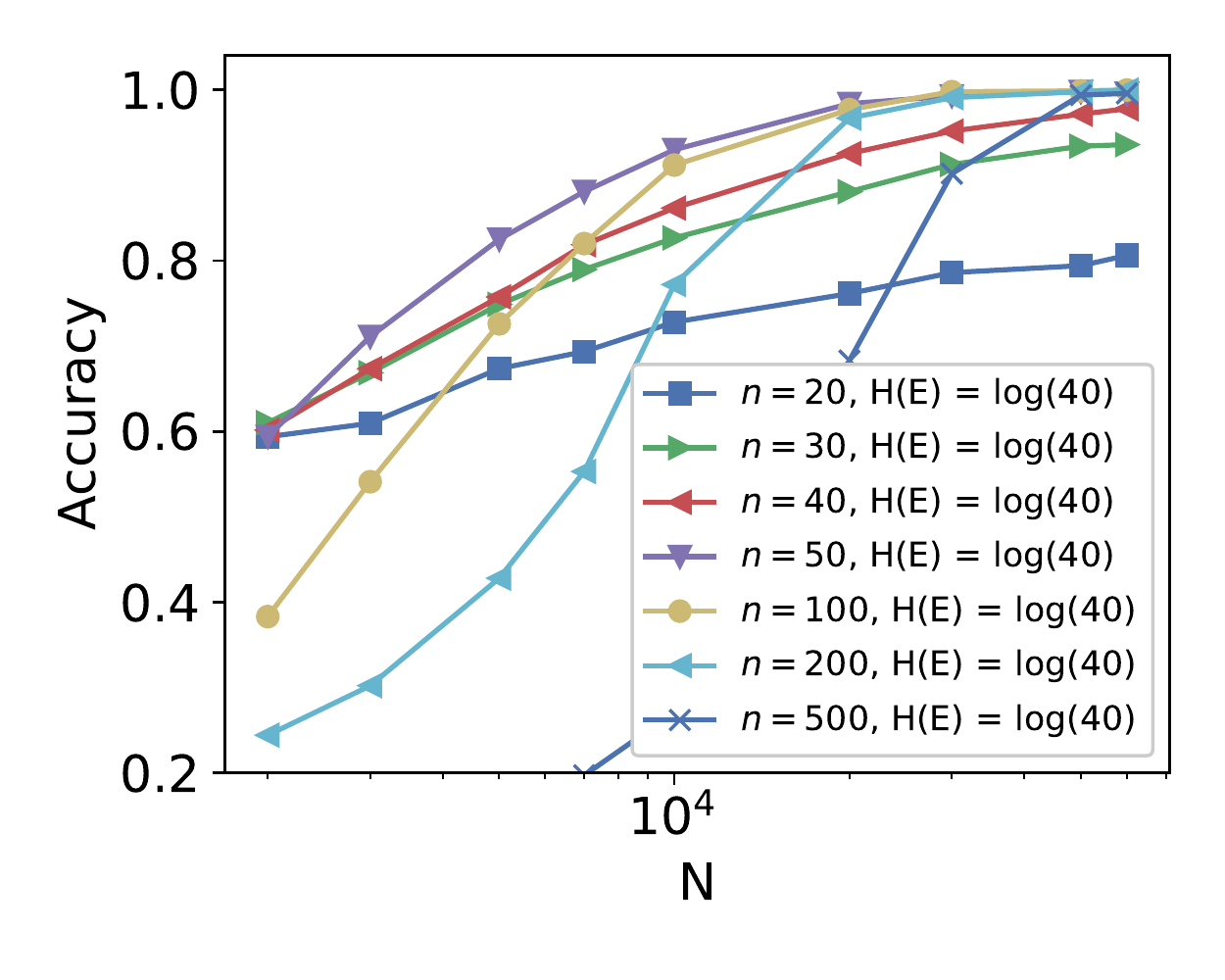}
		\caption{Identification via MEC algorithm ($H(E) \approx \log(40)$).}
		\label{fig:conditionalMEC}
	\end{subfigure}
	\begin{subfigure}[b]{0.3\linewidth}
		\includegraphics[width=\textwidth]{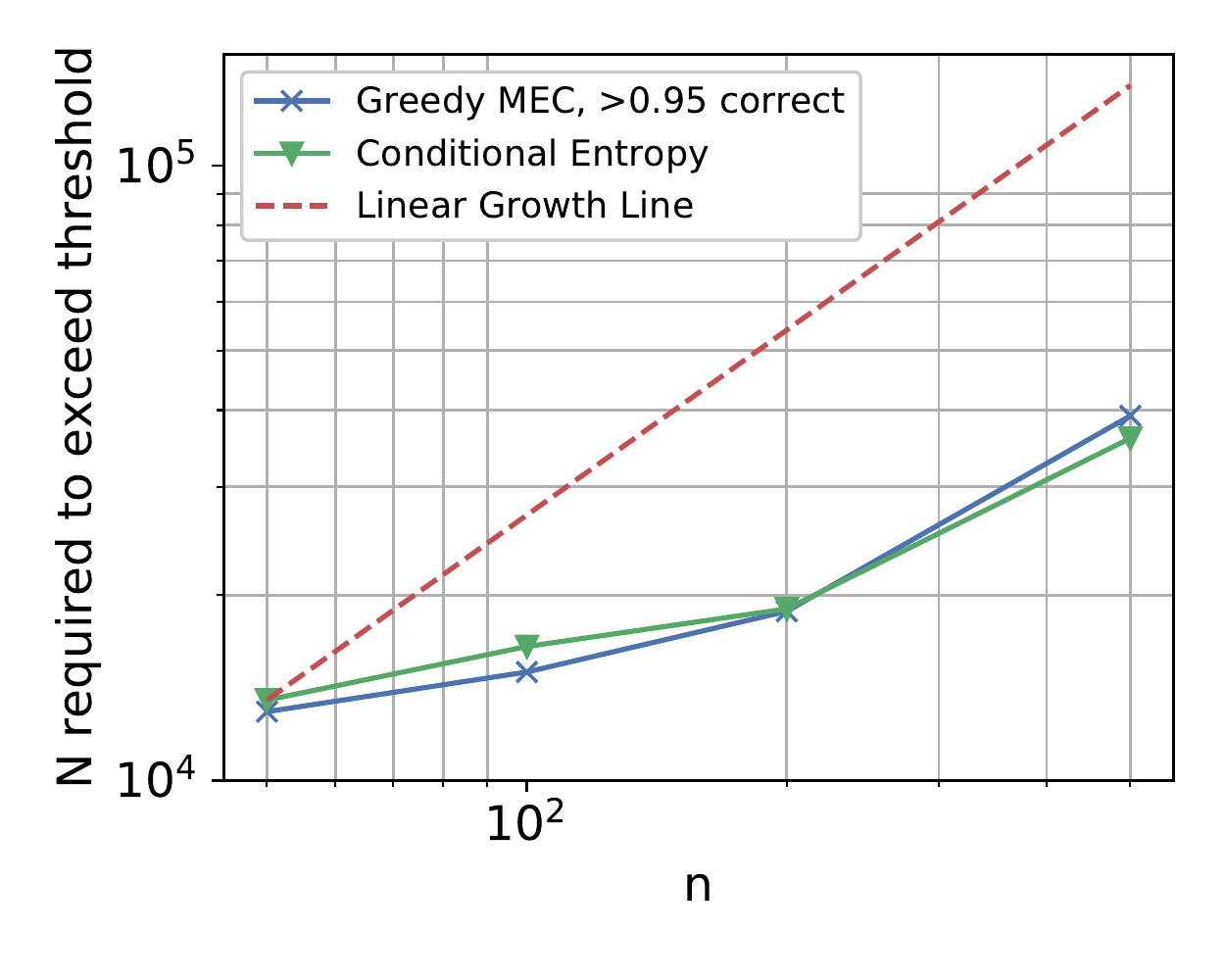}
		\caption{Number of samples vs. support size of observed variables.}
		\label{fig:NreqdLN40}
	\end{subfigure}
	\caption{%
	(a) Probability of correctly discovering the causal direction $X \rightarrow Y$ as a function of $n$ and number of samples $N$, using the conditional entropies as the test. (b) Probability of correctly discovering the causal direction $X \rightarrow Y$ using the greedy MEC algorithm. (c) Samples $N$ required to reach 95\% correct detection as a function of $n$, derived from the plots in Figure \ref{fig:conditional} and Figure \ref{fig:conditionalMEC}.  }
\end{figure*}
\textbf{Effect of Finite Number of Samples.}
In Section \ref{sec:finite}, we identified finite sample bounds for entropic causality framework, both using the exogenous entropies $H(E),H(\tilde{E})$ and using conditional entropies of the form $\max_yH(X|Y\!=\!y),\max_xH(Y|X\!=\!x)$. We now test if the bounds are tight.

We observe two phases and a transition phenomenon in between. The first phase occurs for small values of $n$, for $n\in\{20,30,40\}$. Here, the fraction of identifiable causal models does not reach $1$ as the number of samples is increased, but saturates at a smaller value. This is expected since exogenous noise is relatively high, i.e., $H(E)\geq \log(n)$. For $n>40$, or equivalently, when $H(E)\leq \log(n)$, increasing number of samples increases accuracy to $1$, as expected. 

The greedy MEC criterion has slightly better performance (by $\approx 5\%$), indicating more robustness. This may be due to a gap between $H(\tilde{E})$ and $H(X|Y\!=\!y)$ since greedy-MEC output is not limited by $\log(n)$ unlike conditional entropy. %
In contrast to the $\tilde{O}(n^8)$ bound, the number of samples needed has a much better dependence on $n$. Figure \ref{fig:NreqdLN40} includes a dashed linear growth line for comparison.

\textbf{Effect of Confounding}
The equivalence between finding the minimum entropy exogenous variable and finding the minimum entropy coupling relies on the assumption that there are no unobserved confounders in the system. Despite lack of theory, it is useful to experimentally understand if the method is robust to \emph{light confounding}. One way to assess the effect of confounding is through its entropy: If a latent confounder $L$ is a constant, i.e., it has zero entropy, it does not affect the observed variables. In this section, we simulate a system with light confounding by limiting the entropy of the latent confounder and observing how quickly this degrades the performance of the entropic causality approach. 

The results are given in Figure \ref{fig:effect_of_confounding}. The setting is similar to that of Figure \ref{fig:implications_on_observed}. We set $H(E)\approx 2$ and show accuracy of the method as entropy of the latent $L$ is increased. Perhaps surprisingly, the effect of increasing the entropy of the confounder is very similar to the effect of increasing the entropy of the exogenous variable. This shows that the method is robust to light latent confounding. 
\begin{figure}[t!]
	\centering
	\begin{subfigure}[b]{0.28\linewidth}
	\includegraphics[width=\textwidth]{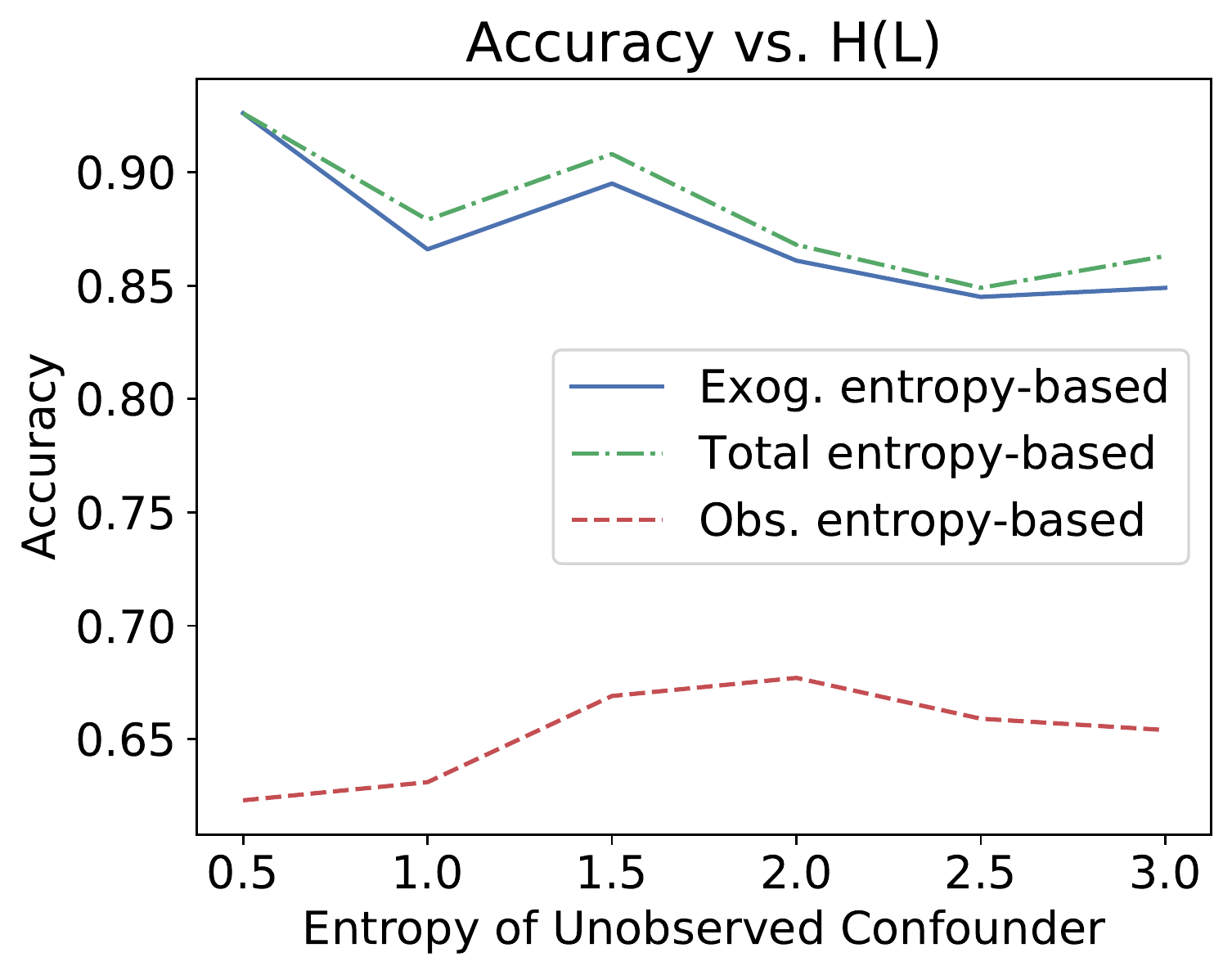}
	\caption{}
	\label{fig:conf_40_40}
	\end{subfigure}
	\begin{subfigure}[b]{0.28\linewidth}
	\includegraphics[width=\textwidth]{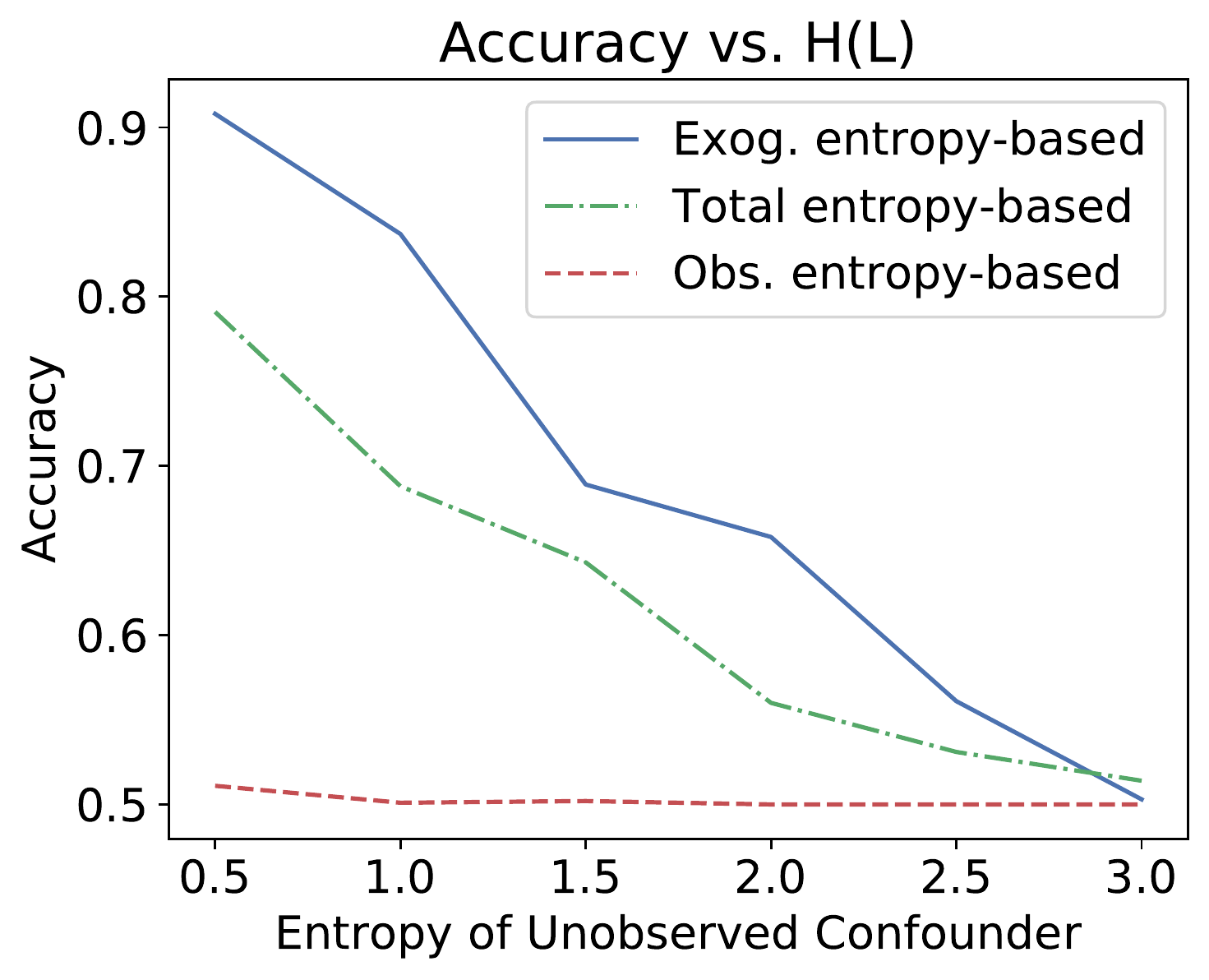}
	\caption{}
	\label{fig:conf_20_40}
	\end{subfigure}
	\begin{subfigure}[b]{0.28\linewidth}
	\includegraphics[width=\textwidth]{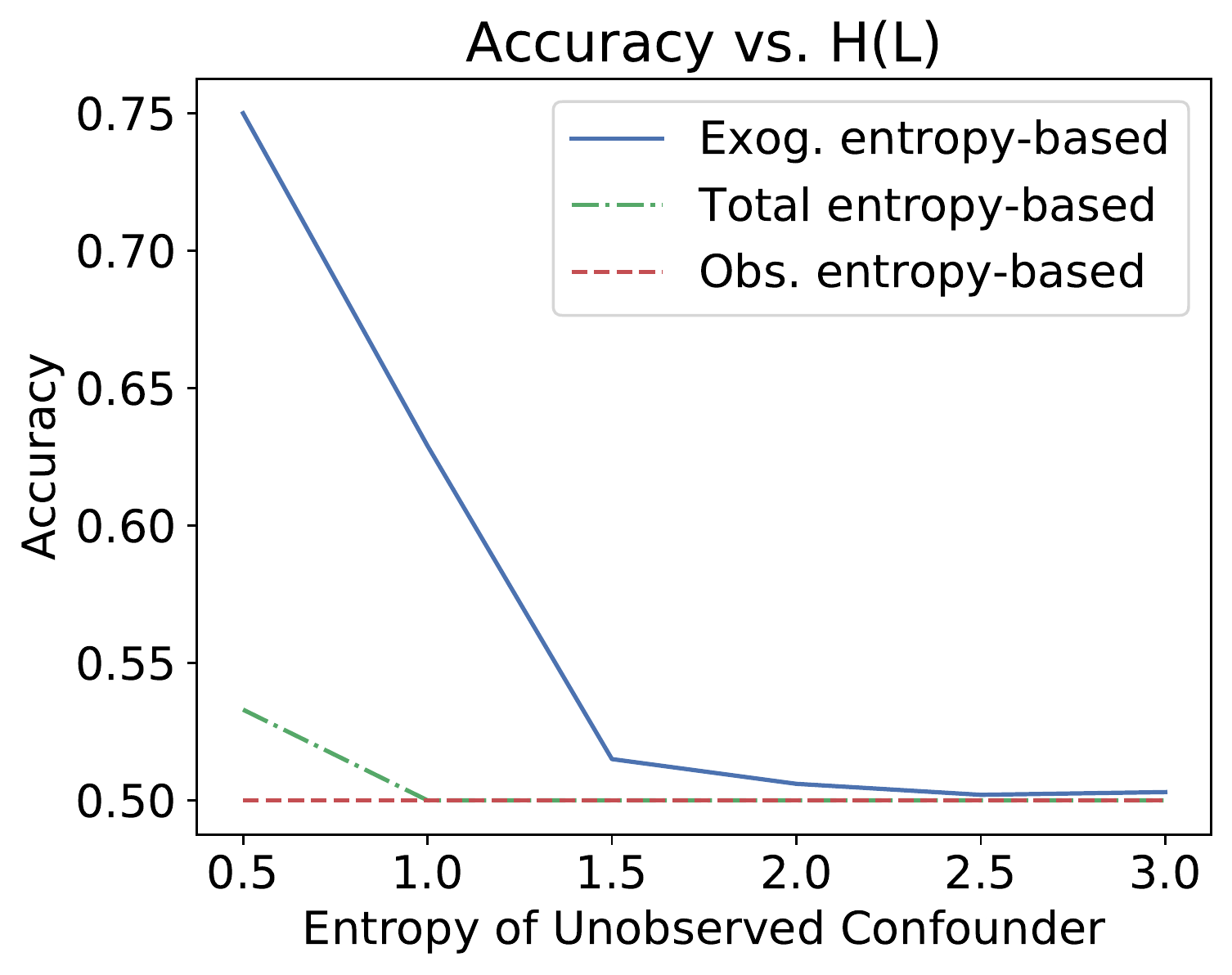}	
	\caption{}
	\label{fig:conf_5_40}
	\end{subfigure}
	\caption{Accuracy on simulated data with \emph{light} confounding. Number of states and data are identical to those in Figure \ref{fig:implications_on_observed}. We use exogenous entropy of $2$ bits and add a confounder $L$. This can be interpreted as replacing some bits of the exogenous variable in Figure \ref{fig:implications_on_observed}  with those of a latent confounder. %
	Surprisingly, performance for $H(E)\!=\!2,H(L)\!=\!t$ is similar to the performance when $H(E)\!=\!2+t$ in Figure \ref{fig:implications_on_observed}. This indicates that the proposed method is robust to latent confounders, as long as the total exogenous and confounder entropy %
	is not very close to 
	$\min\{\log(n),\log(m)\}$.} 
	\label{fig:effect_of_confounding}	
\end{figure}
\begin{table}[t!]
\footnotesize
\centering
\scalebox{0.85}
{
\parbox{.25\linewidth}{5-state quantization}\:%
\parbox{.75\linewidth}{
\begin{tabular}{|c|c|c|c|c|c|c|}
\hline
Threshold ($\times \log$ support) & 0.7 & 0.8 & 0.85 & 0.9 & 1.0 & 1.2 \\\hline
\# of pairs & 14 & 25 & 34 & 42 & 57 & 85 \\\hline
Accuracy (\%) & 85.7 & 64.0 & 58.8 & 57.1 & 63.2 & 60.0\\\hline 

\end{tabular}
}
}
\scalebox{0.85}
{
\parbox{.25\linewidth}{\textbf{10-state quantization}}\:%
\parbox{.75\linewidth}{
\begin{tabular}{|c|c|c|c|c|c|c|}
\hline
Threshold ($\times \log$ support) & 0.7 & 0.8 & 0.85 & 0.9 & 1.0 & 1.2 \\\hline
\# of pairs & 13 & 23 & 34 & 46 & 67 & 85 \\\hline
Accuracy (\%) & 84.6 & 73.9 & 70.6 & 63.0 & 61.2 & 56.5\\\hline 

\end{tabular}
}
}
\scalebox{0.85}
{
\parbox{.25\linewidth}{20-state quantization}\:%
\parbox{.75\linewidth}{
\begin{tabular}{|c|c|c|c|c|c|c|}
\hline
Threshold ($\times \log$ support) & 0.7 & 0.8 & 0.85 & 0.9 & 1.0 & 1.2 \\\hline
\# of pairs & 12 & 21 & 41 & 52 & 76 & 85 \\\hline
Accuracy (\%) & 75.0 & 61.9 & 53.7 & 51.9 & 51.3 & 49.4\\\hline 

\end{tabular}
}
}

\caption{{Performance on T{\"u}bingen causal pairs with low exogenous entropy in at least one direction.}}
\label{tab:tuebingen}
\vspace{-5mm}
\end{table}

\textbf{T{\"u}bingen Cause-Effect Pairs}
In \cite{Kocaoglu2017}, authors employed the total entropy-based algorithm on T{\"u}bingen data~\cite{mooij2016distinguishing} and showed that it performs similar to additive noise models with an accuracy of $~64\%$. %
Next, %
we test if entropic causality can be used when we only compare exogenous entropies. 

The challenge of applying entropic causality on T{\"u}bingen data is that most of the variables are continuous. Therefore, before applying the framework, one needs to quantize the data. The authors chose a uniform quantization, requiring both variables have the same number of states. We follow a similar approach. For $b\in\{5,10,20\}$, the value of $n$ is chosen for both $X,Y$ as the minimum of $b$, $N/10$, $N_x^{uniq}$ and $N_y^{uniq}$, where $N$ is the number of samples available for pair $X,Y$ and $N_x^{uniq},N_y^{uniq}$ are the number of unique realizations of $X,Y$, respectively. 

As a practical check for the validity of our key assumption, we make a decision based on the following: For a threshold $t$, algorithm makes a decision only for pairs for which either $H(E)\!\leq\! t\log(n)$ or $H(\tilde{E})\!\leq\! t\log(n)$. We report the accuracies in Table \ref{tab:tuebingen}. As we expect, for stricter thresholds, accuracy is improved, supporting the assumption that in real data, the direction with the smaller exogenous entropy is likely to be the true direction. Performance is most consistent with $b=10$. 

To check the stability of performance in regards to quantization, we conducted an experiment where we perturb the quantization intervals and take majority of $5$ independent decisions. This achieves qualitatively similar (it is sometimes better, sometimes worse) performance shown in Table \ref{tab:tuebingenQ} in the appendix.
Exploring best practices for how to quantize %
continuous data %
is an interesting avenue for future work.

We now compare performance with other leading methods on this dataset. The total-entropy approach for Entropic Causal Inference achieved $64.21\%$ accuracy at $100\%$ decision rate in \cite{Kocaoglu2017}. ANM methods are evaluated on this data in \cite{mooij2016distinguishing}, where they emphasize two ANM methods with consistent performance that achieve $63 \pm 10\%$ and $69 \pm 10\%$ accuracy. IGCI methods are also evaluated in \cite{mooij2016distinguishing} and were found to vary greatly with implementation and perturbations of data. No IGCI method had consistent performance. LiNGAM methods are evaluated in \cite{hyvarinen2013pairwise} and reported nonlinear approaches with $62\%$ and $69\%$ accuracy. Of these, only Entropic Causal Inference and IGCI can handle categorical data. As comparison with different approaches is difficult given limited data, we suggest assessing the MEC in both directions when deciding how to use our approach in combination with other methods.

\section{Discussion}
In this section we discuss several aspects of our method in relation with prior work. First, note that our identifiability result holds \emph{with high probability} under the measure induced by our generative model. This means that, even under our assumptions, not all causal models will be identifiable. However, the non-identifiable fraction vanishes as $n$, i.e., the number of states of $X,Y$ goes to infinity. In essence, this is similar to many of the existing identifiability statements that show identifiability except for an adversarial set of models~\cite{Hoyer2008}. Specifically in \cite{Kocaoglu2017}, the authors show that under the assumption that the exogenous variable has small support size, causal direction is identifiable with probability $1$. This means that the set of non-identifiable models has Lebesgue measure zero. This is clearly a stronger identifiability statement. %
However, this is not surprising if we compare the assumptions: Bounding the support size of a variable bounds its entropy, but not vice verse. Therefore, our assumption can be seen as a relaxation of the assumption of \cite{Kocaoglu2017}. Accordingly, a weaker identifiability result is expected.

Next, we emphasize that our key assumption, that in the true causal direction the exogenous variable has small entropy, is not universal, i.e., one can construct cause-effect pairs where the anti-causal direction requires less entropy. \cite{janzing2019cause} provides an example scenario: Consider a ball traveling at a fixed and known velocity from the initial position $X$ towards a wall that may appear or disappear at a known position with some probability. Let $Y$ be the position of the ball after a fixed amount of time. Clearly we have $X\rightarrow Y$. If the wall appears, the ball ends up in a different position ($y_0$) from the one it would if the wall does not ($y_1$). Then the mapping $X\rightarrow Y$ requires an exogenous variable to describe the behavior of the wall. However, simply by looking at the final position, we can infer whether wall was active or not, and accordingly infer what the initial position was deterministically. This shows that our key assumption is not always valid and should be evaluated depending on the application in mind. 

Finally note that the low-entropy assumption should not be enforced on the exogenous variable of the cause, since this would imply that $X$ has small entropy. This brings about a conceptual issue to extend the idea to more than two variables: Which variables' exogenous noise should have small entropy? For that setting, we believe the original assumption of \cite{Kocaoglu2017} may be more suitable: Assume that the total entropy of the system is small. In the case of more than two variables, this means total entropy of all the exogenous variables is small, without enforcing bounds on specific ones.

\section{Conclusion}
In this work, we showed the first identifiability result for learning the causal graph between two categorical variables using the entropic causal inference framework. %
We also provided the first finite-sample analysis. We conducted extensive experiments to conclude that the framework, in practice, is robust to some of the assumptions required by theory, such as the amount of exogenous entropy and causal sufficiency assumptions. We evaluated the performance of the method on T{\"u}bingen dataset.

\clearpage
\bibliographystyle{plain}
\bibliography{bibliography}
\clearpage
\section*{\fontsize{13}{15}\selectfont Supplementary Material\\ Entropic Causal Inference: Identifiability and Finite Sample Results}%

\pagenumbering{arabic}
\appendix

\section{\fontsize{11}{15}\selectfont Proof of Theorem \ref{thm:main}}\label{app:main}
\textbf{Step 1. Bounding $H(\tilde{E})$ by $H(\tilde{E})\geq H(X|Y=y), \forall y$:}
Consider any $\tilde{E}\indep Y$ for which there exists a deterministic map $g$ such that $X = g(\tilde{E},Y)$. We have
\begin{align*}
    p(X=x|Y=y) &= p(g(\tilde{E},Y)=x|Y=y)\\
    &=p(g(\tilde{E},y)=x) = p(g_y(\tilde{E})=x),
\end{align*}
for $g_y(e)\coloneqq g(e,y), \forall e,y$, since $\tilde{E}\indep Y$. Due to data processing inequality, it follows that $    H(\tilde{E})\geq H(X|Y=y)$.

In \cite{Kocaoglu2017}, this analysis is used to show that the minimum entropy exogenous variable $\tilde{E}$ can be obtained by solving the minimum entropy coupling problem on the conditional distributions $p(X|Y=y)$. Here, we use the conditional entropies to lower bound the entropy of the exogenous variable $\tilde{E}$. Therefore, in the rest of our analysis we attempt to show that under the given assumptions, with high probability, $H(X|Y=y)$ is large for some value of $y$.

\textbf{Step 2. Generative process as a balls and bins game:}
In order to analyze the conditional distributions $p(X|Y=y)$ we relate the generative model to a balls and bins game:

Consider a deterministic map $f:[n]\times[m] \rightarrow [n]$. Let $p(X=i)=x_i$ and $p(E=k)=e_k$. Without loss of generality, assume that $X$ and $E$ are labeled in decreasing probability order. In other words, 
$e_k\geq e_l$ if $k<l$ and $x_i\geq x_j$ if $i<j$.\footnote{This relabeling of $X,E$ is without loss of generality since realization of $f$ is symmetric across rows and columns.} Let $\mat{M}$ be the matrix defined as $\mat{M}_{i,k} \coloneqq f(i,k)$. The probability distribution $p(Y|X)$ is determined by the causal mechanism, i.e., the structural equation $Y=f(X,E)$. The conditional distributions in the wrong causal direction, i.e., $p(X|Y)$ can then be calculated as follows: 
\begin{equation*}
    p(X=i|Y=j)=\frac{1}{Z}x_i\sum_{k=1}^m\mathbbm{1}_{\{\mat{M}_{i,k}=j\}}e_k.
\end{equation*}
$Z = \sum\limits_{i=1}^nx_i\sum\limits_{k=1}^m\mathbbm{1}_{\{\mat{M}_{i,k}=j\}}e_k$ is the normalizing constant.  

To sample $f$ uniformly randomly from all the mappings is equivalent to filling the entries of $\mat{M}$ independently and uniformly randomly from $\mathcal{Y}=[n]$. A small example is given in Table 1, which shows a realization of $f$ through matrix $\mat{M}$, and illustrates how this affects $p(X|Y=1)$. 
\begin{table}[h]
\parbox{.6\linewidth}{
\centering
\begin{tabular}{| c || c | c | c | c | c | c | }
\hline
            & $\mathcal{E}$ & 1    & 2    & 3    & 4    & 5         \\\hline\hline
$\mathcal{X}$ & \backslashbox{PMF \\of $X$}{PMF \\of $E$}        & $e_1$ & $e_2$ & $e_3$ & $e_4$ & $e_5$  \\\hline
1           & $x_1$        & 2 & 3    & 2    &{\cellcolor{cyan!15} 1 }    & {\cellcolor{cyan!15} 1 }       \\\hline
2           & $x_2$        & 3    & 2   & 3    & 3    & {\cellcolor{cyan!15} 1 }      \\\hline
3           & $x_3$        & 3    & {\cellcolor{cyan!15} 1 }    & 2    & 3    & 2      \\\hline
\end{tabular}}\quad
\parbox{.4\linewidth}{
\centering
{\setlength{\extrarowheight}{7pt}
\begin{tabular}{| c | c | }
\hline
& $\mathbb{P}(X=x|Y=1)$\\\hline
$x=1$ & {\Large $\frac{x_1(e_4+e_5)}{Z}$}\\\hline
$x=2$ & {\Large$\frac{x_2e_5}{Z}$}\\\hline
$x = 3$ & {\Large$\frac{x_3e_2}{Z}$}\\\hline
\end{tabular}%
}}
\caption{Left: Balls and bins representation of function $f:\mathcal{X}\times \mathcal{E}\rightarrow \mathcal{Y}$, where $\mathcal{X}=\mathcal{Y} = [3]$ and $\mathcal{E}=[5]$. The function values for a given $X=i, E=k$ can be seen as realizations of a two dimensional balls and bins game. Right: Conditional probability values of $X$ given $Y=1$ for the given function. $Z = x_1(e_1+e_3) + x_2(e_2) + x_3(e_5) $ is the normalization constant, which also gives $\mathbb{P}(Y=1)$.}\vspace{-3mm}
\label{table:ballsandbins}
\end{table}

Any realization of $f$ corresponds to a realization of matrix $\mat{M}$. The first column is of special interest to us because it corresponds to the value of $E$ with the highest probability. The realization of $\mat{M}$ can be thought of as a balls and bins process, with the cells corresponding to balls and each entry $\mat{M}_{i,k}$ corresponding to which bin that cell's ball landed in.

\textbf{Step 3. Identify a set of ``good" bins:}
Each coordinate $(i, k)$ is a ball and the value of $\mat{M}_{i,k}$ is the identity of the bin this ball is placed in. We utilize the existence of a set $S$ as described in the theorem statement as follows: We focus on the set of balls corresponding to the cells $(i, 1)$ for $i \in S$. Our goal is to identify a bin which contains a large fraction of these balls. We also want this bin to not contain too much probability mass from balls outside of the set $S$ in order to get a close bound in \textbf{Step 6}.

Recall that each bin $y$ contains mass $x_ie_k$ when $\mat{M}_{i,k}=y$. To restrict our search of a good bin, we first discard all the bins that contain a large mass from entries of $\mat{M}$ that are either in rows corresponding to $x \notin S$ or columns other than the first column. Let $p(X,Y,E)$ represent the joint distribution between $X,Y,E$. Then we discard every value of $y$ where $\sum\limits_{x\notin S}\sum\limits_{e=1}^{m} p(x,y,e) + \sum\limits_{x \in S}\sum\limits_{e=2}^{m} p(x,y,e)$ %
is large. We pick the threshold of $\frac{2}{n}$ and define the set $\mathcal{B}$ accordingly: 
\begin{align*}
    \mathcal{B}=\bigg\{y: \sum\limits_{x \notin S} p(x , y)
     + \sum\limits_{x \in S} p(x, y, E>1) > \frac{2}{n}\bigg\}.  
\end{align*} 
We know that $\lvert\mathcal{B}\rvert \leq \frac{n}{2}$, since otherwise the total mass would exceed $1$.\footnote{The probabilities we sum correspond to disjoint events, hence the total probability cannot exceed $1$.} Let $\mathcal{U}\coloneqq [n]\backslash \mathcal{B}$. Then $\lvert \mathcal{U}\rvert \geq n/2$. Note that $\mathcal{B}$ and $\mathcal{U}$ are determined in a manner not affected by the realized values of $\mat{M}_{x,1}$ for $x \in S$. We will next focus on only the values of $y\in \mathcal{U}$, and later quantify the following claim: A significant fraction of the probability mass that falls in any bin in $\mathcal{U}$ is due to entries from $\mat{M}_{x,1}$ for $x\in S$. Therefore, for one of these bins $y \in \mathcal{U}$, we can focus on obtaining a lower bound of $H(X|Y=y,X\in S, \mat{M}_{X,1}=y)$ to later show that $H(X|Y=y)$ cannot be much smaller.

\textbf{Step 4. Show a bin from $\mathcal{U}$ has many balls from the first column of $\mat{M}$ and rows in $S$:} We focus our attention to the balls in $S$ and bins in $\mathcal{U}$. We want to show that %
$\exists y \in \mathcal{U}$ such that $\mat{M}_{x,1}=y$ for a large number of values of $x \in S$. Recall that since $\lvert S \rvert\geq dn$, we have at least $dn$ balls falling into $n$ bins. Moreover, since $\lvert \mathcal{U}\rvert\geq n/2$, at least $n/2$ of these bins are ``good" for us. First, we show that, with high probability, at least $\frac{dn}{4}$ of the $dn$ balls fall in the bins in $\mathcal{U}$. 
\begin{lemma}
\label{lem:sub_ballsandbins}
Consider the process of uniformly randomly throwing $dn=\Theta(n)$ balls into $n$ bins.\footnote{Uniformity follows from uniformity of $f$.} Let $\mathcal{U}$ be an arbitrary, fixed subset of bins with size $|\mathcal{U}| \ge \frac{n}{2}$. Then with high probability, at least $\frac{dn}{4}$ balls fall into the bins in $\mathcal{U}$. Moreover, these balls are also uniformly randomly thrown.
\end{lemma}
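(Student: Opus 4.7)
The plan is to prove the two assertions separately: (1) that with high probability the number of balls that land in $\mathcal{U}$ is at least $dn/4$, and (2) that conditional on which balls land in $\mathcal{U}$, those balls are uniformly distributed over the bins of $\mathcal{U}$. Both reduce to elementary properties of i.i.d. uniform throws.

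For the counting claim, observe that each of the $dn$ balls is placed into a bin drawn independently and uniformly from $[n]$. Hence the indicator that a given ball lands in $\mathcal{U}$ is a Bernoulli variable with parameter $|\mathcal{U}|/n \geq 1/2$, and if $Z$ denotes the total number of balls falling in $\mathcal{U}$, then $Z$ is binomial with mean $\mathbb{E}[Z] = dn \cdot |\mathcal{U}|/n \geq dn/2$. Applying the multiplicative Chernoff bound with deviation parameter $1/2$ yields $\mathbb{P}[Z < dn/4] \leq \mathbb{P}[Z \leq (1/2)\mathbb{E}[Z]] \leq \exp(-\mathbb{E}[Z]/8) \leq \exp(-dn/16)$, which is $o(1)$ since $dn = \Theta(n)$. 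This gives the desired high-probability lower bound.

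For the uniformity claim, since each ball is thrown independently and the unconditional bin distribution is uniform on $[n]$, the conditional distribution of a single ball's bin given that it lands in $\mathcal{U}$ is $\mathbb{P}[\text{bin}=y \mid \text{bin} \in \mathcal{U}] = (1/n)/(|\mathcal{U}|/n) = 1/|\mathcal{U}|$ for every $y \in \mathcal{U}$, i.e., uniform on $\mathcal{U}$. Because the throws are mutually independent, conditioning on the identity of the subset of balls that fall in $\mathcal{U}$ preserves independence and leaves each such ball uniform on $\mathcal{U}$. Thus the balls that do land in $\mathcal{U}$ form an i.i.d. uniform sample on $\mathcal{U}$, as claimed.

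There is essentially no substantive obstacle; the only care needed is to make sure the Chernoff bound is applied with the correct multiplicative form so that the failure probability decays exponentially in $n$ (and hence is absorbed into any polynomial union bound one may later take over the $o(1)$ events arising elsewhere in the proof of Theorem~\ref{thm:main}), and to articulate the conditional-uniformity argument cleanly so that subsequent steps may treat the balls in $\mathcal{U}$ as a fresh uniform balls-and-bins instance with $\geq dn/4$ balls and $\geq n/2$ bins.
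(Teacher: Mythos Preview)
Your proof is correct and follows essentially the same approach as the paper: model the counts via i.i.d.\ Bernoulli indicators with success probability $|\mathcal{U}|/n \geq 1/2$, apply a standard concentration inequality (you use the multiplicative Chernoff bound while the paper uses Hoeffding's inequality, yielding $2e^{-dn/8}$ versus your $e^{-dn/16}$), and then argue conditional uniformity from independence. The structure and ideas are identical.
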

The above lemma, proven in Appendix \ref{app:sub_ballsandbins} is directly applicable to our setting, even though $\mathcal{U}$ is a random variable. This is because the realization of the entries of $\mat{M}$ outside the rows $S$ or outside the first column, which determines the set $\mathcal{U}$ are independent from the entries in $\mat{M}$ in the rows $S$ and in the first column. In other words, how balls are thrown into the bins in $\mathcal{U}$ is not affected by how $\mathcal{U}$ is chosen. 

We want to use this to show that there is a bin $y \in \mathcal{U}$ such that the conditional distribution $p(X\lvert Y=y)$ is due to many balls $x \in S$ where $\mat{M}_{x,1}=y$. We have shown that with high probability at least $\frac{dn}{4}$ balls land in bins corresponding to $y \in \mathcal{U}$. We apply a bound from Theorem 1 of \cite{raab1998balls}, which implies that with high probability when there are $b$ bins and $\eta b$ balls ($\eta = \Theta(1)$), the most loaded bin has at least $\frac{\ln(b)}{\ln(\ln(b)) + \ln\left(\frac{1}{\eta}\right)}$ balls. We know that with high probability we have some number of balls in range $[\frac{nd}{4},nd]$ in some number of good bins in range $[\frac{n}{2},n]$. In terms of the established bound on the most loaded bin, this means $\eta \ge \frac{d}{4}$ and $b \in [\frac{n}{2},n]$. If we substitute valid values of $\eta$ and $b$ that minimize the lower bound, we know that with high probability the heaviest loaded bin among $\mathcal{U}$ conditional distributions has at least $\frac{\ln(n)-\ln(2)}{\ln(\ln(n)) + \ln(\frac{4}{d})}$ balls. Without loss of generality, suppose this bin has label $2$. We show that $H(X | Y=2)$ is large using the above bound.

\textbf{Step 5. Bounding $H(X|Y=2)$:}
Next, we obtain a lower bound for $H(X|Y=2)$. We utilize the following lemma, proved in Section \ref{app:entropy_fraction_mass} of the supplement:
\begin{lemma}
\label{lem:entropy_fraction_mass}
Let $X$ be a discrete random variable with distribution $[p_1,p_2,\hdots,p_n]$. Consider the random variable $X'$ with distribution $[\frac{p_i}{\sum_{j\in S'}p_j}]_i$ for any $S' \subseteq [n]$. Then, $H(X) \ge \mu H(X')$, where $\mu = \sum_{i\in S'}p_i$.
\end{lemma}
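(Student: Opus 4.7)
The plan is to introduce an auxiliary indicator random variable and then use two standard facts: that conditioning on a deterministic function of $X$ can only reduce entropy, and that conditional entropies are nonnegative. Concretely, I would define $Z \coloneqq \mathbbm{1}_{\{X \in S'\}}$, which is a deterministic function of $X$. Then $H(X,Z)=H(X)$, so expanding in the other order gives $H(X) = H(Z) + H(X\mid Z) \ge H(X\mid Z)$.

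Next I would expand the conditional entropy by the law of total entropy:
\begin{equation*}
H(X\mid Z) = \mu\, H(X \mid Z = 1) + (1-\mu)\, H(X \mid Z = 0),
\end{equation*}
where $\mu = \Pr(Z=1) = \sum_{i \in S'} p_i$. Since $H(X \mid Z=0)\ge 0$, dropping that term gives $H(X\mid Z) \ge \mu\, H(X\mid Z=1)$.

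It remains to identify $H(X \mid Z=1)$ with $H(X')$. Conditioning on $Z=1$ means conditioning on $X\in S'$, which by Bayes' rule gives $\Pr(X = i \mid Z=1) = p_i/\mu$ for $i\in S'$ and $0$ otherwise, i.e., exactly the distribution of $X'$ as defined in the lemma (read as the renormalized restriction to $S'$). Hence $H(X\mid Z=1) = H(X')$, and chaining the inequalities yields $H(X) \ge \mu\, H(X')$, as required.

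There is no real obstacle here — the proof is essentially a one-line conditioning argument. The only pitfall is notational: the statement writes the distribution of $X'$ as $[\,p_i/\sum_{j\in S'}p_j\,]_i$, which should be interpreted as supported on $S'$ (otherwise the entries would not sum to one), and the implicit assumption $\mu > 0$ is needed for $X'$ to be defined; the degenerate case $\mu = 0$ makes the claim $0 \ge 0$ trivially.
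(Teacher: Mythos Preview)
Your proof is correct. The paper takes a more direct computational route: it starts from the definition $H(X)=\sum_i p_i\log(1/p_i)$, drops the terms with $i\notin S'$ (first inequality), factors out $\mu$, and then uses $\mu\le 1$ to replace $\log(1/p_i)$ by $\log(\mu/p_i)$ (second inequality), at which point the sum is exactly $\mu H(X')$. Your argument reaches the same conclusion by introducing the indicator $Z=\mathbbm{1}_{\{X\in S'\}}$ and applying the chain rule $H(X)=H(Z)+H(X\mid Z)$, then discarding $H(Z)$ and the $(1-\mu)H(X\mid Z=0)$ term. The two approaches are really the same inequality unpacked differently: dropping $H(Z)$ in your decomposition corresponds to the paper's step $\log(1/p_i)\ge\log(\mu/p_i)$, and dropping the $Z=0$ branch corresponds to the paper's step of discarding the $i\notin S'$ terms. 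Your version has the advantage of making the information-theoretic content transparent (it is just ``conditioning cannot increase entropy'' plus nonnegativity), while the paper's version is a self-contained two-line computation that does not require naming an auxiliary variable. Your remark about the implicit $\mu>0$ and the support of $X'$ is also a fair clarification of the statement.
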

To use this lemma, we consider a specific distribution induced on the support of $X\lvert Y=2$. First, let us define the following: For any subset $S'\subseteq [n],y\in [n]$, let $X_{S',y}$ be the discrete variable with the following distribution: 
\begin{equation}
    p(X_{S',y}\! =\! i)\! =\! \frac{p(X=i\lvert Y=y)}{\sum_{l \in S'} p(X=l\lvert Y=y)}, \forall i \in S'.
\end{equation}
We focus on $X_{S', 2}$, where $S' = \{i: i \in S, \mat{M}_{i,1} = 2\}$. We first show that $H(X_{S',2})$ is large, and then show the total mass $\mu = \sum_{i \in S'} p(X=i\lvert Y=2)$ that $X_{S',2}$ contributes to $(X|Y=2)$ is large, which allows us to use Lemma \ref{lem:entropy_fraction_mass}.

To show $H(X_{S',2})$ is large, we use the following lemma from \cite{Cicalese2017h}:
\begin{lemma}[Theorem 2 of \cite{Cicalese2017h}]
\label{lem:Cicalese}
Let $X$ be a strictly positive discrete random variable on $n$ states such that $ \frac{\max_{i}p(X=i)}{\min_i{p(X=i)}}\leq \rho$. Then \begin{equation*}
    H(X)\geq \log(n)-\left(\frac{\rho\ln(\rho)}{\rho-1}-1-\ln\left(\frac{\rho\ln(\rho)}{\rho-1}\right)\right)\frac{1}{\ln(2)}.
\end{equation*}
\end{lemma}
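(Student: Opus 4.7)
The plan is to show that, over the convex polytope of distributions $p$ on $[n]$ satisfying the ratio constraint, the minimum of $H$ is attained by a distribution that takes only two distinct values, and then to optimize a one-parameter formula in closed form. Since $\phi(x)=x\log x$ is convex, $H(p)=-\sum_i\phi(p_i)$ is a concave functional; its minimum over the polytope $P_\rho=\{p\in\Delta_n : p_i\le \rho p_j\ \forall i,j\}$ must be attained at an extreme point. At any extreme point, enough ratio constraints $p_i=\rho p_j$ are tight to force all coordinates into $\{c,\rho c\}$ for a common $c>0$; the quickest way to see this is a direct perturbation argument: if three coordinates took three distinct values in $(c,\rho c)$, a local two-coordinate swap that preserves normalization and all tight ratio constraints would strictly decrease $H$ by convexity of $\phi$, contradicting minimality.

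Given the two-valued structure, suppose $k$ coordinates equal $\rho c$ and the remaining $n-k$ equal $c$. Normalization gives $c=\frac{1}{n+k(\rho-1)}=\frac{1}{nM}$ with $M:=1+\alpha(\rho-1)$ and $\alpha:=k/n$. A direct computation (grouping $\log(\rho c)=\log\rho+\log c$ and using $\alpha\rho+(1-\alpha)=M$) gives, in nats,
\begin{equation*}
h(p)=\ln(nM)-\frac{\alpha\rho\ln\rho}{M}.
\end{equation*}
Relaxing $\alpha$ to $[0,1]$ can only lower the minimum, so the discrete minimum is bounded below by $\min_{\alpha\in[0,1]}h(p)$. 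Differentiating in $\alpha$ yields
\begin{equation*}
\frac{\partial h}{\partial\alpha}=\frac{\rho-1}{M}-\frac{\rho\ln\rho}{M^2},
\end{equation*}
and the unique interior critical point is $M^\star=\frac{\rho\ln\rho}{\rho-1}$, which lies in $(1,\rho)$ for $\rho>1$ (so it corresponds to an admissible $\alpha^\star\in(0,1)$), and the second derivative check confirms a minimum.

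At $M=M^\star$ we have $\frac{\rho\ln\rho}{M^\star}=\rho-1$, hence $\frac{\alpha^\star\rho\ln\rho}{M^\star}=\alpha^\star(\rho-1)=M^\star-1$, giving
\begin{equation*}
h(p)_{\min}=\ln n-\bigl(M^\star-1-\ln M^\star\bigr)=\ln n-\left(\frac{\rho\ln\rho}{\rho-1}-1-\ln\frac{\rho\ln\rho}{\rho-1}\right).
\end{equation*}
Dividing by $\ln 2$ converts this to bits and yields exactly the stated bound on $H(X)$.

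The main obstacle is the reduction to the two-valued extremal structure: one must argue cleanly that the vertex set of $P_\rho$ consists of distributions taking values in a set of size two. An alternative, more robust route is to skip the extreme-point argument entirely and instead prove a pointwise inequality $\phi(p_i)\le A+B p_i$ for all $p_i\in[c,\rho c]$, tight at the endpoints with $A,B$ chosen so that summing over $i$ (using $\sum p_i=1$ and $\sum 1=n$) yields the stated bound; the values of $A,B$ making the chord to $\phi$ tight at both $c$ and $\rho c$ reproduce the same critical point $M^\star$, so either path leads to the claimed inequality. Everything beyond this reduction is routine one-variable calculus.
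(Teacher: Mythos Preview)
The paper does not prove this lemma; it is quoted as Theorem~2 of \cite{Cicalese2017h} and invoked as a black box in Step~5 of the proof of Theorem~\ref{thm:main}. There is therefore no in-paper proof to compare against.

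Your argument is nonetheless sound. The calculus from the two-valued ansatz onward is correct: the expression $h(p)=\ln(nM)-\alpha\rho\ln\rho/M$, the stationarity condition giving $M^\star=\frac{\rho\ln\rho}{\rho-1}\in(1,\rho)$, and the resulting $h_{\min}=\ln n-(M^\star-1-\ln M^\star)$ all check out and reproduce the claimed bound exactly after conversion to bits. The one soft spot, which you already flag, is the extreme-point reduction. Your perturbation phrasing (``three coordinates took three distinct values'') does not obviously dispose of the case of a single intermediate coordinate flanked by coordinates at the two endpoints, since the natural two-coordinate swaps in that case either violate the ratio bound or increase $H$. This is easily patched by dimension counting: if some $p_k\in(a,c)$ with $c=\rho a$, then every tight constraint $p_i=\rho p_j$ forces $p_i=c$ and $p_j=a$, so no tight constraint involves $p_k$; together with $\sum_i p_i=1$ this leaves at least one residual degree of freedom, and $p$ is not a vertex of $P_\rho$. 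Alternatively, your chord argument (bounding $\phi(p_i)\le A+Bp_i$ on $[c,\rho c]$, summing, and optimizing over $c$) sidesteps the polytope geometry entirely and is the cleaner route. Either way, the proof goes through.
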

To lower bound $H(X_{S',y})$ using the above lemma, we obtain an upper bound to $\rho' \coloneqq \frac{\max_i {p(X_{S',2}=i)}}{\min_i {p(X_{S',2}=i)}}$ by utilizing our knowledge that $H(E) = c$. For each value $i \in S'$, we know that $\mat{M}_{i,1}=2$. Thus, $p(X_{S',2}=i) \ge \frac{x_i e_1}{\mu}$. Also $p(X_{S',2}=i)\leq \frac{x_i\sum_{k=1}^me_k}{\mu} = \frac{x_i}{\mu}$ and $\frac{\max_{i \in S} x_i}{\min_{i \in S} x_i} \le \rho$. Therefore $\rho' \le \frac{\max_i \frac{x_i}{\mu}}{\min_i \frac{x_i e_1}{\mu}} \le \frac{\rho}{e_1}$.

In order to understand how small $e_1$ can be under the given constraints, we obtain a useful characterization for constant entropy distributions. The following lemma shows that the maximum probability value for any discrete distribution with constant entropy is a constant away from zero.
\begin{lemma}
\label{lem:cons_max_prob}
Let $E$ be a discrete random variable with $m$ states, with the probability distribution $[e_1,e_2, \hdots, e_m]$, where without loss of generality $e_i\geq e_j, \forall j>i$. If $H(E)\leq c$ %
then $e_1\geq 2^{-c}$.%
\end{lemma}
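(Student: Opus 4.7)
The plan is to exploit the fact that $e_1$ is the maximum of the probabilities to give a pointwise lower bound on each term $-\log(e_i)$ appearing in the entropy sum. Since the $e_i$ are sorted in decreasing order, $e_i \le e_1$ for every $i$, which after taking $-\log$ yields $-\log(e_i) \ge -\log(e_1)$ for every state $i$ in the support.

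Next I would plug this pointwise inequality into the Shannon entropy expression and use $\sum_i e_i = 1$:
\begin{equation*}
H(E) \;=\; \sum_{i=1}^{m} e_i \bigl(-\log e_i\bigr) \;\ge\; \bigl(-\log e_1\bigr)\sum_{i=1}^{m} e_i \;=\; -\log e_1.
\end{equation*}
Combining with the hypothesis $H(E)\le c$ gives $-\log e_1 \le c$, i.e.\ $e_1 \ge 2^{-c}$, which is the claimed bound.

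I do not expect any real obstacle here; the argument is a one-line consequence of monotonicity of $-\log$ together with the assumption that $e_1$ is the mode of the distribution. The only thing to be slightly careful about is handling any states with $e_i = 0$ in the support-size-$m$ representation, but by the usual convention $0\log 0 = 0$ such states contribute nothing to $H(E)$ and do not affect the comparison with $-\log e_1$, so the bound goes through unchanged.
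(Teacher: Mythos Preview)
Your argument is correct, and it is considerably more direct than the paper's. The paper proves the contrapositive: assuming $e_1<2^{-c}$, it sets up a constrained minimization of $H$ over all distributions with $\max_i e_i\le\varepsilon$, invokes majorization theory to identify the minimizer as (essentially) the uniform distribution on $\lceil 1/\varepsilon\rceil$ states, and then handles the integer and non-integer cases for $1/\varepsilon$ separately to conclude $H(E)>c$. Your approach bypasses all of this by directly applying the pointwise bound $-\log e_i\ge -\log e_1$ inside the entropy sum, which is exactly the standard proof that Shannon entropy dominates min-entropy, $H(E)\ge -\log(\max_i e_i)$. Both arguments yield the same (tight) bound, but yours is a one-liner while the paper's machinery, though valid, is heavier than necessary for this lemma.
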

The proof is given in Section \ref{app:cons_max_prob} in the supplement.

 Applying Lemmas \ref{lem:entropy_fraction_mass}-\ref{lem:cons_max_prob}, with some derivation we show in Section \ref{app:HtildeBd} of the supplement that:
 \begin{proposition}[\textbf{Step 6}]\label{lem:HtildeBd}
 Under the conditions stated above, 
 \begin{align*}
 &H(\tilde{E}) \ge \max_y H(X | Y=y) \ge H(X|Y=2) \\& \ge (1 - o(1)) [\log(\log(n)) - \log(\log(\log(n))) - \mathcal{O}(1)].
 \end{align*}
 Furthermore, to make the trade-off between the strength of the lower bound and assumptions on $n$ more explicit, when $n \ge \nu(r,q,\rho,c,d)$ with \begin{align*}
 \nu(r,q,\rho,c,d) = \max\{4,e^{\left(\frac{4}{d}\right)^{1/r}},2e^{q^22^{2(c+1)}\rho}\},
 \end{align*}
 we have
 \begin{align*}
 &H(\tilde{E}) \ge \max_y H(X | Y=y)\ge H(X|Y=2) \\&\ge \left(1 - \frac{1+r}{1+q}\right)\left(0.5 \log(\log(n)) - \log(1+r) - \mathcal{O}(1)\right).
 \end{align*}
 \end{proposition}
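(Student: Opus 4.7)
The plan is to finish Steps 5--6 of the sketch. From Steps 1--4, I may condition on the high-probability event that (i) at least $dn/4$ of the balls $\{(i,1): i\in S\}$ land in bins belonging to $\mathcal{U}$ (Lemma \ref{lem:sub_ballsandbins}), and (ii) some bin in $\mathcal{U}$, relabeled $y=2$ without loss of generality, receives at least $L \coloneqq \ln(n/2)/(\ln\ln(n/2)+\ln(4/d))$ such balls, by Theorem 1 of \cite{raab1998balls}. Set $S' \coloneqq \{i\in S : \mat{M}_{i,1}=2\}$, so $|S'|\geq L$. The goal is to lower bound $H(X \mid Y=2)$ by combining (a) a lower bound on $H(X_{S',2})$ via Lemma \ref{lem:Cicalese} with (b) a lower bound on the total mass $\mu \coloneqq \sum_{i\in S'} p(X=i\mid Y=2)$, then applying Lemma \ref{lem:entropy_fraction_mass}.

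For (a), bound the ratio of the conditional distribution on $S'$ by $\rho' \leq \rho/e_1 \leq \rho\cdot 2^{c}$ using Lemma \ref{lem:cons_max_prob}, and invoke Lemma \ref{lem:Cicalese} to obtain $H(X_{S',2}) \geq \log|S'| - C(\rho,c)$ with $C(\rho,c)=\mathcal{O}(1)$. For (b), decompose
\[
p(Y=2) \;=\; e_1 \sum_{i\in S'} x_i \;+\; \Delta, \qquad 0 \leq \Delta \leq 2/n,
\]
where the upper bound on $\Delta$ is precisely the defining property of $\mathcal{U}$. Assumption \ref{ass:1} gives $x_i \geq 1/(\sqrt{\rho}\,n)$ for $i\in S$, and $e_1\geq 2^{-c}$ by Lemma \ref{lem:cons_max_prob}, so $e_1\sum_{i\in S'} x_i \geq 2^{-c}|S'|/(\sqrt{\rho}\,n)$, and hence
\[
\mu \;\geq\; \bigl(1 + 2\sqrt{\rho}\,2^{c}/|S'|\bigr)^{-1}.
\]

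For the asymptotic statement, substitute $|S'|\geq L = \Theta(\log n / \log\log n)$: this gives $\mu = 1-o(1)$ and $\log|S'| \geq \log\log n - \log\log\log n - \mathcal{O}(1)$, and Lemma \ref{lem:entropy_fraction_mass} delivers the first inequality. For the explicit parametric bound, the two nontrivial terms in $\nu(r,q,\rho,c,d)$ play complementary roles. The condition $n\geq e^{(4/d)^{1/r}}$ forces $\ln(4/d)\leq r\ln\ln n$, hence $|S'|\geq L \geq \ln(n/2)/((1+r)\ln\ln(n/2))$; absorbing the $\log\log\log$ correction via $\log\log\log n \leq 0.5\log\log n$ (for $n\geq 4$ and large enough) yields $\log|S'|\geq 0.5\log\log n - \log(1+r) - \mathcal{O}(1)$. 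The condition $n\geq 2 e^{q^{2}2^{2(c+1)}\rho}$ makes $|S'|$ large enough that $2\sqrt{\rho}\,2^{c}/|S'| \leq (1+r)/(1+q)$, giving $\mu \geq 1 - (1+r)/(1+q)$. Multiplying the two estimates via Lemma \ref{lem:entropy_fraction_mass} yields the claimed inequality.

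The main obstacle is the independence argument in the balls-and-bins step. Because $\mathcal{U}$ is itself a random set defined through $\mat{M}$-entries outside rows in $S$ and outside the first column, one must verify that conditioning on $\mathcal{U}$ does not disturb the uniform distribution of the bins containing the balls $\{(i,1): i\in S\}$. Under the generative model the relevant cells of $\mat{M}$ are independent, which makes this true, but a clean application of \cite{raab1998balls} requires making this separation explicit. The remaining effort is bookkeeping: tracking how the $\mathcal{O}(1)$ constants from Lemma \ref{lem:Cicalese}, the natural-to-binary logarithm conversion, and constants involving $\rho,c,d$ combine, and confirming that $\nu(r,q,\rho,c,d)$ is large enough for every estimate to hold simultaneously.
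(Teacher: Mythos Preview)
Your proposal is correct and follows essentially the same route as the paper: bound $H(X_{S',2})$ via Lemma~\ref{lem:Cicalese} using $\rho'\le \rho\,2^{c}$ from Lemma~\ref{lem:cons_max_prob}, bound the mass $\mu$ by combining the defining property of $\mathcal U$ (the ``$\Delta\le 2/n$'' term) with $e_1\ge 2^{-c}$ and Assumption~\ref{ass:1}, and then multiply through Lemma~\ref{lem:entropy_fraction_mass}; the explicit constants in $\nu$ are used exactly as you describe. One small slip: from $n\ge \nu$ you can only directly obtain $2\sqrt{\rho}\,2^{c}/|S'|\le (1+r)/q$, not $(1+r)/(1+q)$---but this is harmless, since $\mu\ge \bigl(1+(1+r)/q\bigr)^{-1}=q/(q+1+r)\ge 1-(1+r)/(1+q)$ for $0<r<q$ (the paper gets the extra ``$+1$'' by bounding the fraction $B/(A+B)$ rather than the ratio $B/A$).
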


 This completes the proof of Theorem \ref{thm:main}. \hfill \qed
 
\textbf{Potential Improvements and Limitations: } 
In our analysis, we use $\max_y H(X|Y=y)$ to bound $H(\tilde{E})$. One potential improvement might be obtained by considering the gap between $H(\tilde{E})$ and the collection $\{H(X|Y=y)\}_y$ for a given $p(x,y)$. \cite{Kocaoglu2017} showed that the smallest $H(\tilde{E})$ is given by the minimum entropy coupling of the conditional distributions $\{p(X|Y=y)\}_y$. Follow-up works have developed minimum-entropy coupling algorithms \cite{Cicalese2017,Kocaoglu2017b,rossi2019greedy} and obtained approximation guarantees. However there is currently no tight analysis characterizing this entropy gap. 

Note that the original conjecture proposes that $H(E)\leq \log(n)+\mathcal{O}(1)$ is sufficient. This is a very strong statement and we believe, even if it is true, it requires a much deeper understanding on the minimum entropy couplings than is currently available in the literature. We do, however, provide evidence in Section \ref{sec:experiments} that $H(E)\leq \alpha\log(n)$ for $\alpha<1$ seems sufficient for identifiability.

One point in our analysis that is related to this setting when $H(E)$ scales with $n$, is that we only considered the first column of the matrix $\mat{M}$, i.e., we have only taken into account the probability values of the form $x_i e_1$ contributing to the entropy of $H(X|Y=y)$. As long as the function $f$ is sampled uniformly randomly in the considered generative model, this approach cannot give $H(\tilde{E})\gg\log(\log(n))$ due to the support size of $X$ being upper bounded by $\mathcal{O}(\log(n))$ with high probability from the balls and bins perspective. For when $H(E)$ is very small, we do expect this to be a reasonable approach as the remaining columns have very small probability values, hence very small impact. However, for going beyond the current analysis and for proving identifiability when $H(E)$ scales with $n$, we strongly believe that the effect of the remaining columns should be considered.

\section{\fontsize{11}{15}\selectfont Proof of Lemma \ref{lem:sub_ballsandbins}}
\label{app:sub_ballsandbins}
Let $\varepsilon$ be the event that less than $\frac{dn}{4}$ balls fall in the bins in $\mathcal{U}$. We provide an upper bound for the probability of this event $P(\varepsilon)$. Consider the indicator variables each corresponding to the event that a particular ball lands in $\mathcal{U}$. These indicator variables are independently and identically distributed, where each has probability $\frac{\mathcal{U}}{n} \ge \frac{1}{2}$ of being $1$. We use Hoeffding's inequality to bound $P(\varepsilon)$. Let $S_{dn}$ be the sum of the $dn$ indicator variables (i.e., the number of the balls that land in bins corresponding to $\mathcal{U}$) and $E_{dn}$ be the expected sum of the indicator variables ($E_{dn} = dn\left(\frac{\mathcal{U}}{n}\right)$).
\begin{align}
&P(\varepsilon)= P\left(S_{dn} < \frac{dn}{4}\right)\label{eq:event_bound2}\\
&\le P\left(|S_{dn} - E_{dn}| > \left|E_{dn} - \frac{dn}{4}\right|\right)\label{eq:event_bound3}\\
&\le P\left(|S_{dn} - E_{dn}| >  \frac{dn}{2} - \frac{dn}{4}\right)\label{eq:event_bound4}\\
&\le P\left(|S_{dn} - E_{dn}| > \frac{dn}{4}\right)=2e^{-\frac{dn}{8}}\label{eq:event_bound5}
\end{align} 
(\ref{eq:event_bound3}) to (\ref{eq:event_bound4}) is due the fact that for all valid values of $\mathcal{U}$, it holds that $E_{dn} = dn (\frac{\mathcal{U}}{n}) \ge \frac{dn}{2}$. (\ref{eq:event_bound5}) is due to Hoeffding's inequality. As such, $P(\varepsilon) \le 2e^{-\frac{dn}{8}}$. Thus, with high probability there are at least $\frac{dn}{4}$ balls that fall into bins corresponding to $\mathcal{U}$. Since balls are thrown independently and uniformly at random, conditioned on the balls that land in $\mathcal{U}$, they are thrown independently and uniformly at random.
\hfill\qed

\section{\fontsize{11}{15}\selectfont Proof of Lemma \ref{lem:entropy_fraction_mass}}
\label{app:entropy_fraction_mass}
Recall that $\mu = \sum_{i\in S'}p(X=i)$. We have
\begin{align*}
H(X)
&\ge \sum_{i \in S'} p(X = i) \log{\left(\frac{1}{p(X=i)}\right)}\\
&= \mu\left(\sum_{i \in S'} \frac{p(X = i)}{\mu} \log{\left(\frac{1}{p(X=i)}\right)}\right)\\
&\ge \mu\left(\sum_{i \in S'} \frac{p(X = i)}{\mu} \log{\left(\frac{\mu}{p(X=i)}\right)}\right)\\
&= \mu\left(\sum_{i \in S'} p(X'=i) \log{\left(\frac{1}{p(X'=i)}\right)}\right)\\
&= \mu H(X').\text{\hspace{1.93in}}\qed
\end{align*}

\section{\fontsize{11}{15}\selectfont Proof of Lemma \ref{lem:cons_max_prob}}\label{app:cons_max_prob}
We show the contrapositive. Suppose that $p_1\leq \varepsilon < 2^{-c}$. We have $p_i\leq p_1, \forall i \in [m]$.
We consider all such distributions and find the one with smallest entropy:
\begin{equation}
\begin{aligned}
    \min_{p_1\geq p_2,\hdots \geq p_m} \quad & H([p_1,p_2,\hdots,p_m]) \\
    \mathrm{s.t.} \quad & \sum_i p_i = 1 \\ 
     \quad & \varepsilon \geq p_i \geq 0,  \forall i \in [m]
\end{aligned}
\end{equation}
For simplicity, suppose $\frac{1}{\varepsilon}$ is an integer. We show that the solution to the above optimization problem is strictly greater than $c$ using majorization theory. For any given $p$, define the vector $u_p = [\sum_{j=1}^i p_j]_i$. Recall that a probability distribution $p$ majorizes another distribution $q$ if $u_p(i)\geq u_q(i), \forall i \in [m]$. Also if $p$ majorizes $q$, we have $H(p)\leq H(q)$.

Consider all distributions in the feasible region of the above problem. For any $p^*$, consider the vector $u_{p^*}$. Clearly, $u_{p^*}(1)\geq \varepsilon$. Since $p_2\leq p_1< \varepsilon$, we have that $u_{p^*}(2)\leq 2\varepsilon$. Similarly, we have $u_{p^*}(i)\leq \varepsilon$. The uniform distribution achieves this upper bounding $u$ vector, establishing that the uniform distribution majorizes every other distribution in the feasible set. Then for any distribution in the feasible region, we get that $H(p)\geq \log(\frac{1}{\varepsilon})>c$.

Suppose $\frac{1}{\varepsilon}$ is not an integer. Let $t$ be the largest integer such that $t\varepsilon\leq 1$. Then the above argument leads to the distribution with entropy 
\begin{equation}
    H = t\varepsilon\log\left(\frac{1}{\varepsilon}\right) + (1-t\varepsilon)\log\left(\frac{1}{1-t\varepsilon}\right).
\end{equation}
Next, we show that if $\varepsilon<2^{-c}$, above value is greater than $c$. We can rewrite
\begin{align}
    H &= t\varepsilon\log\left( \frac{1}{\varepsilon} \right)+(1 - t \varepsilon)\log\left(\frac{1}{1-t\varepsilon}\right)\\
    &\geq t\varepsilon\log\left( \frac{1}{\varepsilon} \right)+(1-t\varepsilon)\log\left(\frac{1}{\varepsilon}\right)\\
    &= \log\left( \frac{1}{\varepsilon} \right) > c
\end{align}
since $1-t\varepsilon \le \varepsilon$. This concludes the proof.\hfill\qed

\section{\fontsize{11}{15}\selectfont Proof of Proposition \ref{lem:HtildeBd}}
\label{app:HtildeBd}

By Lemma \ref{lem:cons_max_prob} we then know $\rho' \le \frac{\rho}{e_1} \le \rho 2^c$, and the size of the support of $X_{S',2}$ is the number of balls in the most loaded bin which is at least $\frac{\ln{(n)}-\ln{(2)}}{\ln{(\ln{(n)})} + \ln{(\frac{4}{d})}}$. Using Lemma \ref{lem:Cicalese}, we conclude $H(X_{S',2}) \ge \log{(\frac{\ln{(n)}-\ln{(2)}}{\ln{(\ln{(n)})} + \ln{(\frac{4}{d})}})} - (\frac{\rho 2^c \ln{(\rho 2^c)}}{\rho 2^c -1} - 1 - \ln{(\frac{\rho 2^c \ln{(\rho 2^c)}}{\rho 2^c -1})})\frac{1}{\ln{(2)}}$.

Using our previous results, we know that $\min_{i \in S'} p(X=i , Y=2) \ge \min_{i \in S'} e_1 x_i \ge \frac{e_1}{\sqrt{\rho} n} \ge \frac{2^{-c}}{\sqrt{\rho} n}$. Then, $p(X \in S', Y=2) \ge (\frac{\ln{(n)}-\ln{(2)}}{\ln{(\ln{(n)})} + \ln{(\frac{4}{d})}})(\frac{2^{-c}}{\sqrt{\rho} n}) = \frac{\ln{(n)}-\ln{(2)}}{(\ln{(\ln{(n)})} + \ln{(\frac{4}{d})})\sqrt{\rho} n 2^c}$. Additionally:

\begin{align}
&p(X \notin S', Y=2)= \sum_{i \in S^c}\sum_{j = 1}^m p(X=i,Y=2, E=j)  \nonumber\\
&\hspace{0.5in}+\sum_{i \in S, i \notin S'}\sum_{j=1}^m p(X=i,Y=2,E=j)\label{eq:other_mass2}\\
&= \sum_{i \in S^c}\sum_{j = 1}^m p(X=i,Y=2, E=j) \nonumber\\
&+  \sum_{i \in S, i \notin S'}\sum_{j=2}^m p(X=i,Y=2,E=j)
\label{eq:other_mass3}\\
&\le \sum_{i \in S^c}\sum_{j = 1}^m p(X=i,Y=2, E=j) \nonumber\\
&+ \sum_{i \in S}\sum_{j=2}^m p(X=i,Y=2,E=j) \le \frac{2}{n}. \label{eq:other_mass5}
\end{align}

 We go from \eqref{eq:other_mass2} to (\ref{eq:other_mass3}) by realizing that for any $i \in S$, $p(X=i, Y=2, E=1)>0$ only if $\mat{M}_{x,1}=2$ and thus $i \in S'$. We simplify \eqref{eq:other_mass3} by definition of $\mathcal{U}$. As such, $p(X \in S' | Y=2) = \frac{p(X \in S', Y=2)}{p(X \in S', Y=2) + p(X \notin S', Y=2)} \ge \frac{\frac{\ln{(n)}-\ln{(2)}}{(\ln{(\ln{(n)})} + \ln{(\frac{4}{d})})\sqrt{\rho} n 2^c}}{\frac{\ln{(n)}-\ln{(2)}}{(\ln{(\ln{(n)})} + \ln{(\frac{4}{d})})\sqrt{\rho} n 2^c} + \frac{2}{n}} = \frac{\ln{(n)} - \ln{(2)}}{\ln{(n)}-\ln{(2)} + (\ln{(\ln{(n)})} + \ln{(\frac{4}{d})}) \sqrt{\rho} 2^{c+1}}$. Thus, we have shown that $H(X_{S',2}) \ge \log{(\frac{\ln{(n)}-\ln{(2)}}{\ln{(\ln{(n)})} + \ln{(\frac{4}{d})}})} - (\frac{\rho 2^c \ln{(\rho 2^c)}}{\rho 2^c -1} - 1 - \ln{(\frac{\rho 2^c \ln{(\rho 2^c)}}{\rho 2^c -1})})\frac{1}{\ln{(2)}}$ and $P(X \in S', Y=2) \ge \frac{\ln{(n)} - \ln{(2)}}{\ln{(n)}-\ln{(2)} + (\ln{(\ln{(n)})} + \ln{(\frac{4}{d})}) \sqrt{\rho} 2^{c+1}}$.

 Using Lemma \ref{lem:entropy_fraction_mass} we have:
 \begin{align}
     &H(\tilde{E}) \geq H(X | Y=2) \ge P(X \in S', Y=2)(H(X_{S',2})) \nonumber\\
     &\geq \left(\frac{\ln{(n)} - \ln{(2)}}{\ln{(n)}-\ln{(2)} + (\ln{(\ln{(n)})} + \ln{(\frac{4}{d})}) \sqrt{\rho} 2^{c+1}}\right)\nonumber\\
     &\hspace{0.3in}\left(\log{(\frac{\ln{(n)}-\ln{(2)}}{\ln{(\ln{(n)})} + \ln{(\frac{4}{d})}})} \right.\nonumber\\
     &\left.\hspace{0.3in}- \left(\frac{\rho 2^c \ln{(\rho 2^c)}}{\rho 2^c -1} - 1 - \ln{(\frac{\rho 2^c \ln{(\rho 2^c)}}{\rho 2^c -1})}\right)\frac{1}{\ln{(2)}}\right)\nonumber\\ 
     &= \left(1 - \frac{(\ln{(\ln{(n)})} + \ln{(\frac{4}{d})}) \sqrt{\rho} 2^{c+1}}{\ln{(n)}-\ln{(2)} + (\ln{(\ln{(n)})} + \ln{(\frac{4}{d})}) \sqrt{\rho} 2^{c+1}}\right)\nonumber\\
     &\left(\log{(\frac{\ln{(n)}-\ln{(2)}}{\ln{(\ln{(n)})} + \ln{(\frac{4}{d})}})}\right. \nonumber\\
     &\hspace{0.3in}\left.- \left(\frac{\rho 2^c \ln{(\rho 2^c)}}{\rho 2^c -1} - 1 - \ln{(\frac{\rho 2^c \ln{(\rho 2^c)}}{\rho 2^c -1})}\right)\frac{1}{\ln{(2)}}\right). \label{eq:HeLower}
 \end{align}
 Since $c=O(1)$ and $d = \Theta(1)$, this lower bound is asymptotically $H(\tilde{E}) \ge \max_y H(X | Y=y) \ge H(X | Y=2) \ge (1 - o(1)) (\log{(\log{(n)})} - \log{(\log{(\log{(n)})})} - \mathcal{O}(1))$. 
 
 Now when $n \ge \nu(r,q,\rho,c,d)$, we can lower bound the $(1-o(1))$ term as:
 \begin{align}
     &1 - \frac{(\ln{(\ln{(n)})}+\ln{(\frac{4}{d})})\sqrt{\rho}2^{c+1}}{\ln{(n)}-\ln{(2)}+(\ln{(\ln{(n)})}+\ln{(\frac{4}{d})})\sqrt{\rho}2^{c+1}} \label{eq:first_term1}\\
     &\ge 1 - \frac{(1+r)\ln{(\ln{(n)})} \sqrt{\rho} 2^{c+1}}{\ln{(n/2)}+\ln{(\ln{(n)})}\sqrt{\rho}2^{c+1}} \label{eq:first_term2}\\
     &\ge 1 - \frac{(1+r) \sqrt{\ln{(n/2)}} \sqrt{\rho}2^{c+1}}{\ln{(n/2)}+\sqrt{\ln{(n/2)}} \sqrt{\rho} 2^{c+1}}\label{eq:first_term3}\\
     &= 1 - \frac{1 + r}{1 + \frac{\ln{(n/2)}}{\sqrt{\rho}2^{c+1}}} \label{eq:first_term4}\\
     &\ge 1 - \frac{1+r}{1+q} \label{eq:first_term5}
 \end{align}
 
 We bound from (\ref{eq:first_term1}) to (\ref{eq:first_term2}) by using $n \ge e^{\left(\frac{4}{d}\right)^{1/r}}$ which implies $\ln{(\ln{(n)})} + \ln{(\frac{4}{d})} \le (1+r)\ln{(\ln{(n)})}$. We go from (\ref{eq:first_term2}) to (\ref{eq:first_term3}) by using $\sqrt{\ln{(n/2)}} \ge \ln{(\ln{(n)})}$ when $n \ge 3$. We bound from (\ref{eq:first_term4}) to (\ref{eq:first_term5}) by using $n \ge 2e^{q^22^{2(c+1)}\rho}$. Next, we lower bound the term $\log{\left(\frac{\ln{(n)} - \ln{(2)}}{\ln{(\ln{(n)})} + \ln{(\frac{4}{d})}}\right)}$.
 
 \begin{align}
     &\log{\left(\frac{\ln{(n)} - \ln{(2)}}{\ln{(\ln{(n)})} + \ln{(\frac{4}{d})}}\right)}\label{eq:second_term1}\\
     &\ge \log{\left(\frac{\ln{(n/2)}}{(1+r)\ln{(\ln{(n)})}}\right)}\label{eq:second_term2}\\
     &\ge \log{\left(\sqrt{\ln{(n/2)}}\right)} - \log{(1+r)}\label{eq:second_term3}\\
     &\ge 0.5 \log{(0.5 \log{(n/2)})} - \log{(1+r)}\label{eq:second_term5}\\
     &\ge 0.5 \log{(\log{(n)})} - \log{(1+r)} - 1\label{eq:second_term8}
 \end{align}
 
 We bound from (\ref{eq:second_term1}) to (\ref{eq:second_term2}) by using $\ln{(\ln{(n)})}+\ln{(\frac{4}{d})} \le (1+r) \ln{(\ln{(n)})}$. We bound from (\ref{eq:second_term2}) to (\ref{eq:second_term3}) using $\sqrt{\ln{(n/2)}} \ge \ln{(\ln{(n)})}$. We then substitute all of these bounds into our previous lower bound on $H(\tilde{E})$ \eqref{eq:HeLower} yielding:
 \begin{align*}
     &H(\tilde{E}) \ge \left(1 - \frac{1+r}{1+q}\right) \bigg(0.5 \log{(\log{(n)})} - \log{(1+r)}\\
     & \left. - \mathcal{O}(1) -\frac{1}{\ln(2)}\!\!\left(\frac{\rho 2^c \ln(\rho 2^c)}{\rho 2^c -1} - 1 - \ln\left(\frac{\rho 2^c \ln{(\rho 2^c)}}{\rho 2^c -1}\right)\!\!\right)\!\!\right)\!\!\\
     &= \left(1 - \frac{1+r}{1+q}\right)\left(0.5 \log(\log(n)) - \log(1+r) - \mathcal{O}(1)\right).
 \end{align*}

\section{\fontsize{11}{15}\selectfont Proof of Corollary \ref{cor:identifiability}}
\label{app:identifiability}
\subsubsection{Condition (a): Bounded Ratio}
We know that $\frac{\max_xp(x)}{\min_xp(x)}\leq \rho$. Since $\sum_x p(x) = 1$, $\min_xp(x)\leq \frac{1}{n}\leq \max_xp(x)$ and we have $\frac{\max_xp(x)}{1/n}\leq \rho\Rightarrow \max_xp(x)\leq \frac{\rho}{n}$ and similarly $\min_xp(x)\geq \frac{1}{\rho n}$. Then using Theorem \ref{thm:main}, when $n\ge \nu(r=1, q=3, \rho^2, c, d = 1)$, $H(\tilde{E}) \ge \max_{y} H(X|Y=y) \ge 0.25 \log{(\log{(n)})} - \mathcal{O}(1)$ with high probability (where the $\mathcal{O}(1)$ term is a function of only $\rho,c$). As such, there exists an $n_0$ (which is a function of only $\rho,c$) such that for all $n > n_0$, the causal direction is identifiable with high probability.

\subsection{Condition (b): Sampled Uniformly on the Simplex}
We first show that when the distribution of $X$ is uniformly sampled from the simplex, there exist a set $S$ that satisfies the assumptions of Theorem \ref{thm:main} with high probability.
\begin{lemma}
When the $x_i$ are sampled uniformly from the simplex, there exists a subset of the support with size at least $ (e^{-\frac{1}{\sqrt{\rho}}} - \frac{1}{\sqrt{\rho}} - \delta)n$ for which all $x_i$ are within a factor of $\sqrt{\rho}$ from $\frac{1}{n}$ and make up total probability mass $\geq \left(e^{-\frac{1}{\sqrt{\rho}}} - \frac{1}{\sqrt{\rho}} - \delta\right)  \frac{1}{\sqrt{\rho}}$, with probability $> 1 - 2e^{-2 \delta^2 n}$ for $\rho, n \ge 1$, $\delta > 0$.
\end{lemma}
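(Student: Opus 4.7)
The plan is to use the standard exponential representation of the uniform distribution on $\Delta_n$: sample $E_1, \ldots, E_n$ i.i.d.\ from $\mathrm{Exp}(1)$, set $T = \sum_j E_j$, and let $x_i = E_i/T$. Under this coupling each marginal $x_i$ is $\mathrm{Beta}(1, n-1)$-distributed, so
\begin{equation*}
P\bigl(x_i \in [1/(\sqrt{\rho}\,n),\, \sqrt{\rho}/n]\bigr) \;=\; \bigl(1 - 1/(\sqrt{\rho}\,n)\bigr)^{n-1} - \bigl(1 - \sqrt{\rho}/n\bigr)^{n-1}.
\end{equation*}
Applying $(1-x)^{k} \geq e^{-kx/(1-x)}$ to the first term (which for $\rho \geq 1$ gives $\geq e^{-1/\sqrt{\rho}}$) and $(1-x)^{k} \leq e^{-kx}$ to the second (giving $\leq e^{-\sqrt{\rho}}$ up to an $e^{O(1/n)}$ factor), followed by the elementary bound $e^{-\sqrt{\rho}} \leq 1/\sqrt{\rho}$ (which is just $t \leq e^{t}$ applied at $t=\sqrt{\rho}$), shows the marginal is at least $p^{*} := e^{-1/\sqrt{\rho}} - 1/\sqrt{\rho}$, with any $O(1/n)$ correction being harmless whenever $p^{*}$ is not already trivially non-positive.

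Second, for the concentration I set $Z_i = \mathbbm{1}\{x_i \in [1/(\sqrt{\rho}\,n), \sqrt{\rho}/n]\}$, so $|S| = \sum_i Z_i$ with $E[|S|] \geq p^{*}\, n$. The $Z_i$ are not independent, but decomposing $Z_i = \mathbbm{1}\{x_i \geq 1/(\sqrt{\rho}\,n)\} - \mathbbm{1}\{x_i > \sqrt{\rho}/n\}$ writes $|S|$ as $N_{-} - N_{+}$, where each of $N_{-}, N_{+}$ is a sum of coordinatewise-monotone indicators. Because the marginals of the uniform Dirichlet are negatively associated (Joag-Dev and Proschan), Hoeffding-type bounds apply to the lower tail of $N_{-}$ and to the upper tail of $N_{+}$ with the i.i.d.\ constants. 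Splitting the slack $\delta$ across the two tails and applying a union bound then yields $P\bigl(|S| < (p^{*} - \delta)\, n\bigr) < 2\, e^{-2 \delta^{2} n}$. The probability-mass conclusion is then immediate: every $i \in S$ satisfies $x_i \geq 1/(\sqrt{\rho}\,n)$, so on the good event $\sum_{i \in S} x_i \geq |S|/(\sqrt{\rho}\,n) \geq (p^{*} - \delta)/\sqrt{\rho}$.

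The main obstacle is the concentration step: because the coordinates of the uniform Dirichlet are dependent, the negative-association property must be invoked carefully to reduce to an i.i.d.-style Hoeffding bound while preserving the stated $2\, e^{-2\delta^{2} n}$ form, and in particular the split of $\delta$ between the two monotone pieces must be accounted for. An alternative would be to work directly with the independent exponentials $E_i$, separately concentrating the count $\#\{i : E_i \in [1/\sqrt{\rho}, \sqrt{\rho}]\}$ and the normalizer $T/n$, and then translating back to the $x_i$-event; this is technically cleaner in terms of independence but introduces slack that makes matching the exact constants less transparent.
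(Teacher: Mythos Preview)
Your overall structure---Beta marginals plus negative association plus Hoeffding---matches the paper's, but there is one substantive difference that prevents your argument from reaching the stated constant $2e^{-2\delta^{2}n}$.

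The paper does \emph{not} apply concentration to the ``big'' side $N_{+}=\#\{i:x_i>\sqrt{\rho}/n\}$. Instead it observes that $N_{+}\le n/\sqrt{\rho}$ \emph{deterministically}, since otherwise $\sum_i x_i>1$. With this in hand, only a single Hoeffding application is needed---on the number of ``small'' indices (equivalently, on your $N_{-}$)---and the full slack $\delta$ goes there, yielding $P(N_{-}<(e^{-1/\sqrt{\rho}}-\delta)n)<2e^{-2\delta^{2}n}$ and hence $|S|\ge N_{-}-N_{+}\ge (e^{-1/\sqrt{\rho}}-1/\sqrt{\rho}-\delta)n$ on that event.

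By contrast, your proposal applies Hoeffding to both $N_{-}$ (lower tail) and $N_{+}$ (upper tail), which forces you to split $\delta=\delta_{1}+\delta_{2}$. Each piece then contributes $e^{-2\delta_{k}^{2}n}$, so the union bound gives at best $2e^{-\delta^{2}n/2}$ (with the even split), not $2e^{-2\delta^{2}n}$. You flag this issue yourself, but it is not a cosmetic concern: it is precisely the gap between your argument and the claimed bound. Replacing your probabilistic control of $N_{+}$ with the deterministic Markov-type bound $N_{+}\le n/\sqrt{\rho}$ closes the gap immediately, and incidentally also removes the $e^{O(1/n)}$ correction you had to absorb when bounding $(1-\sqrt{\rho}/n)^{n-1}$.
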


\begin{proof}
 Let us call a probability ``small'' if $x_i \le \frac{1}{\sqrt{\rho}n}$. We want to show that with high probability (at least $ 1 - 2e^{-2 \delta^2 n}$), there are at most $(1 - e^{-\frac{1}{\sqrt{\rho}}} + \delta)n$ small $x_i$. Using Theorem 3 of \cite{marsaglia1961uniform}, we know that for each $x_i$ in a Dirichlet distribution with $\alpha = 1$ (i.e., the uniform distribution over the probability simplex) and support size $n$, $P(x_i > z) = (1-z)^{n-1}$ (This is by setting $a_i=z$ and $a_j=0,\forall j\neq i$ and using the fact that $P(x_i=0)=0,\forall i\in[n]$). As such, $P(x_i \le z) = 1 - (1-z)^{n-1}$. The probability that $x_i$ is small is then equal to $P(x_i \le \frac{1}{\sqrt{\rho}n}) = 1 - (1-\frac{1}{\sqrt{\rho}n})^{n-1}$. This value is non-decreasing when $n \ge 1$, and approaches $1-e^{-\frac{1}{\sqrt{\rho}}}$ as $n$ approaches infinity. Hence when $n \ge 1$, the probability that any $x_i$ is ``small'' is upper-bounded by $1-e^{-\frac{1}{\sqrt{\rho}}}$. We want to show that the outcome that there are more than $(1 - e^{-\frac{1}{\sqrt{\rho}}} + \delta)n$ small $x_i$ will not happen with high probability. To do this, we note that all $x_i$ in a symmetric Dirichlet distribution are negatively associated (this follows from Lemma \ref{lem:neg_assoc} in Section \ref{app:negAssoc}). This implies that the probability that there are at least $(1 - e^{-\frac{1}{\sqrt{\rho}}} + \delta)n$ small $x_i$ is upper-bounded by the probability that there are at least that many $x_i$ when we treat the $x_i$ as if they are i.i.d. random variables. This allows us to use Hoeffding's inequality. Let $S_n$ be the total number of small $x_i$ and $E_n$ be the expected number of small $x_i$. Since $E_n \le (1 - e^{-\frac{1}{\sqrt{\rho}}})n$, then $P(S_n > (1 - e^{-\frac{1}{\sqrt{\rho}}} + \delta)n) \le P(|S_n - E_n | > \delta n) < 2e^{-2 \delta^2 n}$. As such, the probability that there are at most $(1 - e^{-\frac{1}{\sqrt{\rho}}} + \delta)n$ small $x_i$ is at least $(1 - 2e^{-2 \delta^2 n})$.

Let us call an $x_i$ ``big'' if $x_i \ge \frac{\sqrt{\rho}}{n}$. There are at most $\frac{n}{\sqrt{\rho}}$ big $x_i$, since otherwise their total probability mass would exceed $1$.

Next, consider the subset of $x_i$ that are neither ``big'' nor ``small''. They are in the range $[\frac{1}{\sqrt{\rho}n},\frac{\sqrt{\rho}}{n}]$. We know that with high probability $(1 - 2e^{-2 \delta^2 n})$ there are at most $(1 - e^{-\frac{1}{\sqrt{\rho}}} + \delta)n$ small $x_i$ and at most $\frac{n}{\sqrt{\rho}}$ big $x_i$. This means our desired subset has size at least $\left(n - (1 - e^{-\frac{1}{\sqrt{\rho}}} + \delta)n - \frac{n}{\sqrt{\rho}}\right) = \left(e^{-\frac{1}{\sqrt{\rho}}} - \frac{1}{\sqrt{\rho}} - \delta\right)n$ with probability at least $1 - 2e^{-2 \delta^2 n}$.
\end{proof}

As such, if we set $\rho = 25$ and $\delta = 0.1$, there exists a subset of the support of size $\ge (e^{- \frac{1}{\sqrt{25}}} - \frac{1}{\sqrt{25}} - 0.1)n \ge 0.5n$ where all $x_i$ are within a factor of $\sqrt{25}=5$ from $\frac{1}{n}$ with probability $>1-2e^{-2 (0.1)^2 n} = 1- 2e^{-0.02n}$. Using Theorem \ref{thm:main}, we conclude that when $n \ge \nu(r=1,q=3,\rho=25,c,d=0.5)$, $H(\tilde{E}) \ge \max_y H(X | Y=y) \ge 0.25 \log{(\log{(n)})} - \mathcal{O}(1)$ with high probability (where the $\mathcal{O}(1)$ term is a function of only $c$). As such, there exists an $n_0$ (which is a function of only $c$) such that for all $n > n_0$, the causal direction is identifiable with high probability.

\subsection{Condition (c): High Entropy}
We show that when $X$ has entropy within an additive constant of $\log{(n)}$, there exists a set $S$ that satisfies the assumptions of Theorem \ref{thm:main}.
\begin{lemma}
For any distribution $X$ with support size $n$ and entropy $\ge \log{(n)} - a$, there exists a subset $S$ with all $x_i \in . [\frac{3}{40n},\frac{2^{2b}}{n}]$ for $i \in S$, and support size $|S| \ge \frac{n}{2^{2b+3}}$, where $b = \max\{a,2\}$.  
\end{lemma}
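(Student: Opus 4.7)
The plan is to convert the hypothesis $H(X) \ge \log n - a$ into its KL form $\sum_i x_i \log(n x_i) = \log n - H(X) \le a$, then rescale to $u_i = n x_i$ so that $\sum_i u_i = n$ and $\sum_i u_i \log u_i \le a n$. I will partition $[n]$ into the large set $L = \{i : u_i > 2^{2b}\}$, the candidate medium set $S = \{i : 3/40 \le u_i \le 2^{2b}\}$ (which is exactly $\{i : x_i \in [3/(40n), 2^{2b}/n]\}$), and the small set $T = \{i : u_i < 3/40\}$, and then argue $|S| \ge n/2^{2b+3}$ by contradiction.

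Assume $|S| < n/2^{2b+3}$. A mass budget gives $\sum_L u_i \ge n - 2^{2b}|S| - (3/40)|T| > n - n/8 - 3n/40 = 4n/5$; the factor $2^{2b+3}$ in the target is chosen precisely so that the $n/8$ slack absorbs the $S$-contribution. I then lower bound $\sum_i u_i \log u_i$ blockwise: on $L$, $\log u_i > 2b$ yields $\sum_L u_i \log u_i > 2b \cdot (4n/5) = 8bn/5$; on $S$, the global minimum $u \log u \ge -1/(e \ln 2)$ gives $\sum_S u_i \log u_i \ge -|S|/(e\ln 2) \ge -n/(e\ln 2)$; on $T$, since $u \log u$ is decreasing on $(0, 3/40]$, pointwise $u_i \log u_i \ge (3/40)\log(3/40)$, so $\sum_T u_i \log u_i \ge n \cdot (3/40)\log(3/40)$. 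Combining these three blocks with $\sum_i u_i \log u_i \le an$ yields $a \ge 8b/5 - 1/(e\ln 2) + (3/40)\log(3/40) \approx 8b/5 - 0.81$.

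This inequality contradicts the hypothesis in both regimes. If $a \ge 2$ then $b = a$ and the bound reduces to $3a/5 \le 0.81$, forcing $a \le 1.35$, incompatible with $a \ge 2$. If $a < 2$ then $b = 2$ and the bound demands $a \ge 16/5 - 0.81 \approx 2.39$, incompatible with $a < 2$. The definition $b = \max\{a, 2\}$ is exactly the mechanism that bridges these two cases. The only real obstacle is numerical bookkeeping: the thresholds $3/40$ and $2^{2b+3}$ must be tuned so that the constant loss from the $T$-block and the slack needed in the mass budget both sit comfortably below the $8b/5$ growth. No deeper machinery is required beyond elementary properties of $u \log u$ on $(0,\infty)$.
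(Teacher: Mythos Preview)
Your proof is correct. Both you and the paper use the same tripartite partition into large ($x_i > 2^{2b}/n$), medium ($x_i \in [3/(40n), 2^{2b}/n]$), and small ($x_i < 3/(40n)$) indices, but the arguments diverge from there.

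The paper proceeds \emph{directly}: it upper bounds the entropy contribution of the large indices by $\mu_{\mathrm{large}}\log(n/2^{2b})$ and of the non-large indices by $(1-\mu_{\mathrm{large}})\log\bigl(n/(1-\mu_{\mathrm{large}})\bigr)$, then uses $H(X)\ge \log n - b$ together with $(1-x)\log(1-x)\ge -1.5x$ to force $\mu_{\mathrm{large}}\le 0.8$. Since trivially $\mu_{\mathrm{small}}\le 0.075$, at least $1/8$ of the mass sits in the medium range, and dividing by the per-point cap $2^{2b}/n$ yields $|S|\ge n/2^{2b+3}$.

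Your argument is \emph{by contradiction on $|S|$}: you rescale to $u_i = nx_i$ so that the hypothesis becomes $\sum_i u_i\log u_i \le an$, assume $|S|<n/2^{2b+3}$, push mass into $L$ via the budget $\sum u_i = n$, and then lower-bound $\sum u_i\log u_i$ blockwise using only pointwise facts about $u\log u$ (the global minimum $-1/(e\ln 2)$ and monotonicity on $(0,1/e)$). The resulting inequality $a \ge 8b/5 - 0.81$ cleanly contradicts both regimes of $b=\max\{a,2\}$.

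What each buys: the paper's route is constructive and gives the intermediate fact $\mu_{\mathrm{large}}\le 0.8$ explicitly, which may be of independent use. Your route is arguably more elementary---no entropy-grouping bound, just calculus on $u\log u$---and the rescaling $u_i=nx_i$ makes the arithmetic transparent. The two arguments are essentially dual: the paper fixes the entropy constraint and squeezes the mass, you fix the (assumed small) cardinality and squeeze the entropy.
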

\begin{proof}
Let us call an $x_i$ ``large'' if $x_i \ge \frac{2^{2b}}{n}$, and $\mu_{\mathrm{large}}$ be the total probability mass contributed by large $x_i$. The upper bound for the sum of the terms in the formula for $H(X)$ corresponding to large $x_i$ is $\mu_{\mathrm{large}} \log{(\frac{n}{2^{2b}})}$. The upper bound for the sum of the terms in Shannon entropy corresponding to $x_i$ that are not large is $(1-\mu_{\mathrm{large}}) \log{(\frac{n}{1-\mu_{\mathrm{large}}})}$. Since entropy is greater than $\log(n)-a$ and $b=\max\{a,2\}$, we have that entropy is greater than or equal to $\log(n)-b$ as well. Then, for the total entropy to be at least $\log(n)-b$ it must be true that $\mu_{\mathrm{large}} \log{(\frac{n}{2^{2b}})} + (1-\mu_{\mathrm{large}}) \log{(\frac{n}{1-\mu_{\mathrm{large}}})} \ge \log{(n)} - b$. It follows that $2b \mu_{\mathrm{large}} + (1-\mu_{\mathrm{large}}) \log{\left(1-\mu_{\mathrm{large}}\right)} \le b$. For $x\geq 0$, we have that $(1-x)\log(1-x)\geq -1.5x$. Then we have $2\mu_{\mathrm{large}}(b-0.75)  \le b$, or equivalently $\mu_{\mathrm{large}} \leq \frac{b}{2(b-0.75)} $. %
Since $b\geq 2$, we have that $\mu_{\mathrm{large}} \le 0.8$. %

Let us call an $x_i$ ``small'' if it is $\le \frac{0.075}{n}$, and let $\mu_{\mathrm{small}}$ be the total probability mass in small $x_i$. Even if all $x_i$ were small (although that would be impossible), $\mu_{\mathrm{small}} \le 0.075$. As such, $\mu_{\mathrm{small}}+\mu_{\mathrm{large}} \le \frac{7}{8}$. This means at least $\frac{1}{8}$ total probability mass belongs to $x_i \in [\frac{0.075}{n}, \frac{2^{2b}}{n}]$. Our subset $S$ of $X$ will be all of these $x_i$. Since every element in $X$ is upper-bounded by $\frac{2^{2b}}{n}$, $S$ has a support size of at least $\frac{\frac{1}{8}}{\frac{2^{2b}}{n}} = \frac{n}{2^{2b+3}}$.
\end{proof}
We can therefore satisfy the conditions of Theorem \ref{thm:main} with $d = \frac{1}{2^{2\max\{a,2\}+3}}$ and $\rho \le (\frac{40}{3}2^{2 \max\{a,2\}})^2\leq 2^{4\max\{a,2\}+8}$. Using Theorem \ref{thm:main}, we conclude that when $n \ge \nu(r=1, q=3, \rho = 2^{4\max\{a,2\}+8}, c, d=\frac{1}{2^{2\max\{a,2\}+3}})$, $H(\tilde{E}) \ge \max_{y} H(X | Y=y) \ge 0.25 \log{(\log{(n)})} - \mathcal{O}(1)$ with high probability (where the $\mathcal{O}(1)$ term is a function of only $a,c$). Hence there exists an $n_0$ (a function of only $a,c$) such that for all $n > n_0$, the causal direction is identifiable with high probability.

\section{\fontsize{11}{15}\selectfont Proof of Theorem \ref{thm:constant_entropy_away}}
\label{app:constant_entropy_away}
Given the random variables $U_i, i\in [n]$ with marginal distributions $\mat{p_i}(u_i)$, let $p(u_1,u_2,\hdots,u_n)$ be a valid coupling. Then $p$ satisfies $\mat{p_i}(u_i)=\sum_{k\neq i}\sum_{u_k\in [n]}p(u_1,u_2,\hdots,u_n)$ holds for all $i,u_i$. Therefore, for all $i,u_i$, we can define $S_{i,u_i}=\{(u_j)_{j\neq i}:p(u_1,u_2,\hdots u_n)>0\}$. $S_{i,u_i}$ contains the coordinates in the coupling that contribute non-zero mass to satisfy the $i^{th}$ marginal distribution, specifically the probability that variable $U_i$ takes the value $u_i$. Let us define the function $g_{i,u_i}((u_j)_{j\neq i})\coloneqq p(u_1,\hdots,u_n)$. Then equivalently, we can write $\mat{p_i}(u_i) = \sum_{t\in S_{i,u_i}}g_{i,u_i}(t)$.

Consider a noisy version of the marginal distributions: Let $\hat{\mat{p}}_{\mat{i}}$ be the noisy marginals where $\lvert\hat{\mat{p}}_{\mat{i}}(u_i)-\mat{p}_{\mat{i}}(u_i)\rvert\leq \delta$ for all $i,u_i$. Our strategy is to start with the coupling $p(u_1,\hdots,u_n)$ and convert it to a coupling for the noisy marginals. Let us define $T_{i}^+(p)\coloneqq \{u_i: \sum_{k\neq i}\sum_{u_k\in [n]}p(u_1,u_2,\hdots,u_n) <\hat{\mat{p}}_{\mat{i}}(u_i) \},T_{i}^-(p)\coloneqq \{u_i:\sum_{k\neq i}\sum_{u_k\in [n]}p(u_1,u_2,\hdots,u_n) >\hat{\mat{p}}_{\mat{i}}(u_i) \}$. In words, $T_i^+(p)$ shows the coordinates of the $i^{th}$ noisy marginal which has excess mass compared to the mass induced by coupling $p$. Similarly, $T_i^-(p)$ shows the coordinates of the $i^{th}$ noisy marginal for which the coupling $p$ has more mass than needed. We update $p$ in two stages: First, we update $p$ so that $T_i^-(p)=\emptyset$. In the second stage, we further update $p$ so that $T_i^+(p)=\emptyset$ and $T_i^-(p)=\emptyset$, which shows that the updated $p$ is a valid coupling for the noisy marginals $\hat{\mat{p}}_{\mat{i}}$. We finally bound the entropy of the new coupling relative to the initial coupling we started with.

First we observe the following: Consider any $u_i\in T_i^-$. Then there exists a function $h_{i,u_i}(t) $ such that  
\begin{align}
&0\leq h_{i,u_i}(t)\leq g_{i,u_i}(t), \forall t\in S_{i,u_i},\\
&\sum_{t\in S_{i,u_i}}h_{i,u_i}(t)=\hat{\mat{p_i}}(u_i).
\end{align}
This is true since $\sum_{t\in S_{i,u_i}}g_{i,u_i}(t)=\mat{p_i}(u_i)$ and $\hat{\mat{p}}_{\mat{i}}(u_i)<\mat{p_i}(u_i),\forall u_i \in T_i$. We can describe the first phase as follows: For each $i\in[n]$ and $u_i\in T_i^-$, we pick an arbitrary $h_{i,u_i}$ and update $p$ to match the entries of $h_{i,u_i}$. Notice that each update of $p$ changes the corresponding $h,g$ functions. Our construction proceeds by updating these functions every time $p$ is updated as given above. This procedure is summarized in Algorithm \ref{first_phase}.
\begin{algorithm}[t]
\caption{Phase I}
\begin{algorithmic}
\STATE \textbf{Input:} Valid coupling $p_{\mathrm{init}}$ for the marginals $\{\mat{p_i}\}_{i\in[n]}$. Noisy marginals $\{\hat{\mat{p}}_{\mat{i}}\}$
\STATE $p\leftarrow p_{\mathrm{init}}$.
\STATE Construct $g_{i,u_i},S_{i,u_i},T_i^+,T_i^-$ from $p_{\mathrm{init}}$ for all $i,u_i$.
\WHILE{$\exists i\in[n]$ s.t. $T_i^-\neq \emptyset$}
    \STATE Pick arbitrary $h_{i,u_i}$ for all $u_i$ such that 
    \begin{align*}
        &0\leq h_{i,u_i}(t)\leq g_{i,u_i}(t), \forall t\in S_{i,u_i},\\
        &\sum_{t\in S_{i,u_i}}h_{i,u_i}(t)=\hat{\mat{p}}_{\mat{i}}(u_i).
    \end{align*}
    \STATE Update $p$ as follows:
    \begin{equation}
    \label{eq:update}
        p(u_1,u_2,\hdots,u_n) \leftarrow h_{i,u_i}((u_j)_{j\neq i}), \forall (u_j)_{j\neq i}\in S_{i,u_i}
    \end{equation}
    \STATE Construct $g_{i,u_i},S_{i,u_i},T_i^+,T_i^-$ from $p$ for all $i,u_i$.
\ENDWHILE
\STATE return $p$
\end{algorithmic}
\label{first_phase}
\end{algorithm}

Note that the size of $T_i^-$ after an update is at least one less than the size of $T_i^-$ before the update. To see this, note that after the update in (\ref{eq:update}), $u_i\notin T_i^- $. Also by reducing elements of $p$, we can never add a new element to $T_i^-$ for any $i$ by definition of $T_i^-$. Therefore, after at most $\sum_i\lvert T_i^- \rvert$ applications of the above update for the initial sets $T_i^-$, we have $T_i^-=\emptyset, \forall i\in[n]$. Since there are at most $n$ elements in $T_i^-$ and $n$ such sets, the first phase terminates in at most $n^2$ steps. 

Let $p$ be the output of Algorithm \ref{first_phase} in the rest of the proof. In the second phase, we consider the updated $T_i^+$. Our strategy here is to distribute the remaining mass in each marginal as its own coupling and add this coupling to $p$ that is the output of Algorithm \ref{first_phase}. Let us represent the excess probability mass in coordinate $u_i$ of marginal $i$ relative to coupling $p$ by $r_{i,u_i}$. Note that $r_{i,u_i}(p) \coloneqq \hat{\mat{p}}_{\mat{i}}(u_i)-\sum_{k\neq i}\sum_{u_k\in [n]}p(u_1,u_2,\hdots,u_n)$ may increase at each step of the first phase. The exact increase in this gap for each $i,u_i$ depends on the choice of $h_{i,u_i}$ function at each step. However, we can bound the total gap per marginal at the end of first phase as $\sum_{u_i\in[n]}r_{i,u_i}(p)\leq \delta n^2,\forall i $. Each step of Algorithm \ref{first_phase} can add a mass of at most $\delta$ to each marginal at each step (it terminates after at most $\sum_i |T_i^-|$ steps) and at the beginning of first phase, each coordinate of each marginal has at most $\delta$ excess mass (there are $\sum_i |T_i^+|$ coordinates with excess mass). As such, there is at most $\sum_i \delta |T_i^-| + \sum_i \delta |T_i^+| \le \delta n^2$ total gap per marginal at the end of the first phase. Let $p(u_1,\hdots,u_n)$ be the output of Algorithm \ref{first_phase}. \cite{Kocaoglu2017} showed a greedy minimum entropy coupling algorithm that produces a coupling with support at most $n^2$. Let $q(u_1,u_2\hdots, u_n)$ be the output of this greedy algorithm when given the excess marginal mass as its input. Then we have that $v\coloneqq p+q$ is a valid coupling for the noisy marginals. This is because, by feeding the greedy algorithm the excess marginal mass, we guarantee that the marginals of $v$ are correct. Moreover, all cells in the coupling are in range $[0,1]$ as no cell in $p$ or $q$ has negative value and their sum has the correct marginals.

Next, define the distribution $s:2\times [n]^{n}\rightarrow [0,1]$ as follows: \begin{align}
    s(0,u_1,u_2,\hdots,u_n) &= p(u_1,u_2,\hdots,u_n),\\ s(1,u_1,u_2,\hdots,u_n) &= q(u_1,u_2\hdots, u_n). 
\end{align}
From the argument above, it is easy to see that $s$ is a valid probability distribution, i.e., it has non-negative entries and its entries sum to $1$.

We compare entropy of the obtained coupling $v$ with entropy of $s$ and that with entropy of the initial coupling $p_{\mathrm{init}}$. First, it is easy to see from concavity of entropy and Jensen's inequality that $H(v)\leq H(s)$. Let $\bar{H}$ be the extended entropy operator that admits vectors outside the simplex as input, for vectors whose entries are between $0$ and $1$: $\bar{H}(p(x))=-\sum_x p(x)\log(p(x))$. We have the following lemma that allows us to compare $\bar{H}(p)$ with $H(p_{\mathrm{init}})$:
\begin{lemma}
\label{lem:entropy_reduction}
Let $\mat{p}=[p_1,p_2,\hdots,p_n]$ be a discrete probability distribution. Let $\mat{q}=[q_1,q_2,\hdots,q_n]$ be a non-negative vector such that $q_i\leq p_i,\forall i \in [n]$. Then $\bar{H}(\mat{q})\leq \bar{H}(\mat{p})+\frac{\log(e)}{e}$.
\end{lemma}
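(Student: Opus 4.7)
The plan is to reduce the lemma to a pointwise analysis of $f(x) \coloneqq -x\log x$, which is non-negative on $[0,1]$, concave, strictly increasing on $[0,1/e]$ and strictly decreasing on $[1/e,1]$, with unique maximum $f(1/e) = \log(e)/e$. Using $\bar H(\mat q) - \bar H(\mat p) = \sum_i [f(q_i) - f(p_i)]$, the goal is to bound this sum by $\log(e)/e$.

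First I would partition the indices into $A = \{i : p_i \leq 1/e\}$ and $B = \{i : p_i > 1/e\}$. For $i \in A$, since $q_i \leq p_i \leq 1/e$ and $f$ is non-decreasing on $[0,1/e]$, we have $f(q_i) \leq f(p_i)$, so indices in $A$ contribute non-positively to the sum. For $i \in B$, the crude bound $f(q_i) \leq f(1/e) = \log(e)/e$ yields $f(q_i) - f(p_i) \leq \log(e)/e - f(p_i)$.

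A counting argument then shows $|B| \leq 2$: if three indices had $p_i > 1/e$, the total mass would exceed $3/e > 1$, contradicting $\sum_i p_i = 1$. In the case $|B| \leq 1$, the conclusion is immediate since $f(p_i) \geq 0$ forces $\sum_{i \in B}[\log(e)/e - f(p_i)] \leq \log(e)/e$.

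The main obstacle is the case $|B| = 2$, say $B = \{i_1, i_2\}$ with $s \coloneqq p_{i_1} + p_{i_2} \leq 1$: I must show $f(p_{i_1}) + f(p_{i_2}) \geq \log(e)/e$, so that $2\log(e)/e - [f(p_{i_1}) + f(p_{i_2})] \leq \log(e)/e$. To establish this I would study $g(x) \coloneqq f(x) + f(s - x)$ on the interval $[1/e,\, s - 1/e]$, which contains $p_{i_1}$ since $p_{i_2} \geq 1/e$ forces $p_{i_1} \leq s - 1/e$. Because $f$ is concave and $x \mapsto s - x$ is affine, $g$ is concave in $x$ and thus attains its minimum on the interval at an endpoint, giving $g(p_{i_1}) \geq g(1/e) = \log(e)/e + f(s - 1/e) \geq \log(e)/e$, where the final inequality uses $s - 1/e \in [0, 1 - 1/e]$ and $f \geq 0$ on $[0,1]$. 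Combining the contributions from $A$ and $B$ then yields $\sum_i [f(q_i) - f(p_i)] \leq \log(e)/e$ as claimed.
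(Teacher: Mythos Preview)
Your proof is correct and follows essentially the same approach as the paper's: both arguments partition indices by whether $p_i$ exceeds $1/e$, use monotonicity of $-x\log x$ on $[0,1/e]$ to discard the small-$p_i$ terms, observe that at most two indices can have $p_i>1/e$, and then in the two-index case minimize the concave function $f(x)+f(s-x)$ at the boundary $x=1/e$ to obtain $f(p_{i_1})+f(p_{i_2})\geq \log(e)/e$. Your presentation is slightly cleaner (an explicit partition and direct concavity argument, versus the paper's indicator-based optimization setup), but the underlying ideas are identical.
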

The proof is in Section \ref{app:entropy_reduction} in the supplement.

From the lemma, we can conclude that $\bar{H}(p)\leq H(p_{\mathrm{init}})+\frac{\log(e)}{e}$. Finally, the maximum entropy contribution of $q$ is when it induces uniform distribution over $n^2$ states. Since the total mass of $q$ is $\delta n^2$, we have 
\begin{align}
    \bar{H}(q)&\leq n^2 \left(\frac{\delta n^2}{n^2} \log{
    \left(\frac{n^2}{\delta n^2}\right)}\right)\\
    &=\delta n^2 \log\left(\frac{1}{\delta}\right)
\end{align}
Suppose $\delta\leq \frac{1}{n^2\log(n)}$. Then we can further bound $\bar{H}(q)\leq 2+ \frac{\log{(\log{(n)})}}{\log{(n)}}\leq 2+o(1)$ since $\delta \log\left(\frac{1}{\delta}\right)\leq \frac{2\log(n)+\log(\log(n))}{n^2\log(n)}$ if $\delta<\frac{1}{n^2\log(n)}$.

Bringing it all together, we obtain the following chain of inequalities:
\begin{align}
    H(v)&\leq H(s) = \bar{H}(p)+\bar{H}(q)\\
    &\leq H(p_{\mathrm{init}})+\frac{\log(e)}{e}+2+o(1).
\end{align}
This concludes the proof. \hfill \qed

\section{\fontsize{11}{15}\selectfont Proof of Lemma \ref{lem:entropy_reduction}}
\label{app:entropy_reduction}
If $p_i<\frac{1}{\exp(1)}, \forall i$, due to monotonicity of $-p\log(p)$ in $p$, we have $\bar{H}(\mat{q})\leq \bar{H}(\mat{p})$. 

In general, no more than $2$ states can satisfy $p_i>\frac{1}{\exp(1)}$. Therefore, $\bar{H}(q)$ can only be larger than $\bar{H}(p)$ due to two states. Let us call these two states $p_1,p_2$ without loss of generality. Reducing the probability of any other state only gives a looser bound. 

We can obtain the largest entropy increase by solving the following optimization problem:
\begin{equation}
\label{eq:entropy_gap}
\begin{aligned}
&\underset{p_1,p_2}{\max} & & \mathbbm{1}_{\{p_1>1/e\}}\left(\frac{\log(e)}{e}-p_1\log\left(\frac{1}{p_1}\right)\right)\\
&&&\hspace{0.2in}+\mathbbm{1}_{\{p_2>1/e\}}\left(\frac{\log(e)}{e}-p_2\log\left(\frac{1}{p_2}\right)\right)\\
& \text{subject to}
& & p_1+p_2 \leq 1,  \\
& && p_1\geq 0,p_2\geq 0
\end{aligned}
\end{equation}
Suppose $p_1>1/e$ and $p_2<1/e$. Then the solution is simply to set $p_1=1$ since this minimizes the entropy contribution of $p_1$. This gives a gap of $\frac{\log(e)}{e}$. Due to symmetry, we only need to investigate the case where $p_1>1/e$ and $p_2>1/e$. In this case, we have the following optimization problem:
\begin{equation}
\label{eq:entropy_gap2}
\begin{aligned}
&\underset{p_1,p_2}{\min} & & p_1\log\left(\frac{1}{p_1}\right)+p_2\log\left(\frac{1}{p_2}\right)\\
& \text{subject to}
& & p_1+p_2 \leq 1,  \\
& && p_1\geq 1/e,p_2\geq 1/e
\end{aligned}
\end{equation}
This is a concave minimization problem and the solution has to be at the boundary of the convex constraint region. If $p_1 = 1/e$, the maximum gap is obtained when $p_2$ is maximized to $p_2 = 1-1/e$ which gives a gap that is strictly less than $\frac{\log(e)}{e}$, hence we can discard this solution for the maximum entropy gap. $p_2=1/e$ gives the same solution from symmetry. When $p_1+p_2 = 1$, the problem reduces to minimizing the binary entropy function, which again is minimized at the boundary. The boundary in this case is where either $p_1=1/e$ or $p_2=1/e$. Therefore, both probabilities being greater than $1/e$ cannot yield a better bound. \hfill\qed

\section{\fontsize{11}{15}\selectfont Proof of Lemma \ref{lem:concentration}}\label{app:concentration}
\textbf{Joint Probabilities}. First, we bound the estimates of the entries of the joint distribution between $X$ and $Y$. Both $X$ and $Y$ have $n$ states which we index as $i = 1,\dots, n$ and $j = 1,\dots, n$ respectively. Hence the joint distribution has $n^2$ states. Probability that $X = i$ and $Y = j$ is shown as $p_{ij}$. Suppose $N$ samples from $N$ independent, identically distributed random variables are drawn as $\{(x_k,y_k)\}_{k\in[N]}$. This yields the empirical probability estimates ($I$ is the indicator function)
\[
\hat{p}_{ij} = \frac{1}{N} \sum_{k = 1}^N I(x_k = i \: \& \: y_k = j).
\]
Note that each of these estimates are averages of Bernoulli random variables with success probability $p_{ij}$. We also consider the marginal probability empirical estimates
\[
\hat{p}_{i}^X = \frac{1}{N} \sum_{k = 1}^N I(x_k = i )
\]
and
\[
\hat{p}_{j}^Y = \frac{1}{N} \sum_{k = 1}^N I( y_k = j).
\]
which are also averages of $N$ Bernoulli random variables (with success probabilities $p_i^X$ and $p_j^Y$ respectively). 

Since these estimates are clearly correlated with one another, our approach will be to use concentration results on individual entries of the joint distribution and then do a union bound over all $n^2 + 2n$ probabilities.
Note that $I(x_k = i \: \& \: y_k = j) = 1$ with probability $p_{ij}$ and 0 otherwise. Thus by Hoeffding's inequality \cite{vershynin2018high},
\[
\mathbb{P}\left\{|\hat{p}_{ij} - p_{ij}| \geq t\right\} \leq 2\exp \left(- 2t^2 N\right).\numberthis \label{eq:hoeffding}
\]
We can define an event $\mathcal{A}$ where all the probability estimates are within $t$ of the truth:
\begin{align*}
\mathcal{A} =& \left\{\max_{i,j \in 1,\dots,n}|\hat{p}_{ij} - p_{ij}| \leq t  \right\}\bigcap \left\{\max_{i \in 1,\dots,n}|\hat{p}_{i}^X - p_{i}^X| \leq t  \right\}\bigcap \left\{\max_{j \in 1,\dots,n}|\hat{p}_{j}^Y - p_{j}^Y| \leq t  \right\}.
\end{align*}
Starting with \eqref{eq:hoeffding} and taking the union bound over all $n^2 + 2n$ probabilities in the joint and marginal distribution, we obtain
\begin{align}\label{eq:joints}
\mathbb{P}(\mathcal{A}) &> 1- 2(n^2+2n)\exp \left(- 2t^2 N\right) \\\nonumber &> 1- 4\exp(2 \ln(n) -2t^2N).
\end{align}

\textbf{Conditional Probabilities}. Given the above bound on the estimates of the joint probabilities, we formulate bounds on the conditional probability estimates. Recall that
\[
P(X = i | Y = j) = \frac{P(X=i,Y=j)}{P(Y=j)} = \frac{p_{ij}}{\sum_{i=1}^n p_{ij}}.
\]
Using the plug-in approach, we have
\[
\hat{p}_{i|j} = \frac{\hat{p}_{ij}}{\hat{p}_j^Y}.
\]
Note that it is critical for $\hat{p}_j^Y$ to be bounded away from zero, otherwise a small error in $\hat{p}_{ij}$ may cause a large error in $\hat{p}_{i|j}$. 
In what follows, we set  
\[
\alpha = \frac{\min_{j = 1,\dots, n} p^Y_j}{2}.  
\]
$\alpha$ will naturally appear in the number of samples, and notably must depend on $n$. Note that the case of $\sum_{i=1}^n {p}_{ij} = 0$ is allowable since if that is the case $Y = j$ will never occur and corresponding probability estimates will all be zero and the conditional probabilities will not be of interest.

Now consider any $t < \alpha$, assume that event $\mathcal{A}$ holds. We then have that all $\hat{p}^Y_j > p^Y_j - t > 2\alpha - t > \alpha$. Combined with the fact that under event $\mathcal{A}$, $|\hat{p}_{ij} - p_{ij}| < t$ and $t \geq 0$, it is easy to check that
\begin{align*}
\hat{p}_{i|j} - p_{i|j} &= \frac{\hat{p}_{ij}}{\hat{p}_j^Y} - \frac{{p}_{ij}}{{p}_j^Y} \\
&< \frac{{p}_{ij}+t}{{p}_j^Y-t} - \frac{{p}_{ij}}{{p}_j^Y} \\
&= \frac{p_{ij}p_j^Y + t p_j^Y - p_{ij}p_j^Y + tp_{ij}}{p_j^Y(p_j^Y - t)}\\
&< \frac{t p_j^Y + tp_{ij}}{p_j^Y \alpha}\\
&< \frac{2t}{\alpha},
\end{align*}
where the last inequality follows since $p_{ij} < p^Y_j$ by definition. Similarly, 
\begin{align*}
p_{i|j} - \hat{p}_{i|j}  &=  \frac{{p}_{ij}}{{p}_j^Y} - \frac{\hat{p}_{ij}}{\hat{p}_j^Y} \\
&<  \frac{{p}_{ij}}{{p}_j^Y} - \frac{{p}_{ij}-t}{{p}_j^Y+t} \\
&= \frac{p_{ij}p_j^Y + t p_{ij} - p_{ij}p_j^Y + tp_{j}^Y}{p_j^Y(p_j^Y + t)}\\
&< \frac{t p_j^Y + tp_{ij}}{p_j^Y 2\alpha}\\
&< \frac{t}{\alpha},
\end{align*}
hence
\[
|\hat{p}_{i|j} - p_{i|j}| < \frac{2t}{\alpha}.
\]

Since by \eqref{eq:joints} the event $\mathcal{A}$ holds with probability at least $1 - 4\exp(2 \log (n) -2t^2N)$, we have
\begin{equation}
\mathbb{P}\left(\max_{i,j \in 1,\dots,n}|\hat{p}_{i|j} - p_{i|j}| \geq \frac{2t}{\alpha}\right) \leq 4\exp(2 \ln{(n)} -2t^2N).
\label{eq:highProb}
\end{equation}

The derivation of the bound for the conditional probability estimates in the other direction is similar and relies on the same event $\mathcal{A}$ holding. Hence the probability the bounds hold in both directions simultaneously remains $\mathbb{P}(\mathcal{A})$.

\textbf{Achieving error of $\delta = 1/(n^2 \ln{(n)})$}. Let $\alpha = \frac{\min\{\min_xp(x),\min_yp(y)\}}{2}$. Suppose we want $2t/\alpha = 1/(n^2 \ln{(n)})$. Then we need $t = 1/(2n^2 \alpha^{-1} \ln{(n)})$. Note that $t<\alpha$ as required above. Suppose further that we want this to hold with probability at least $1 - 4/n$. By the above, we require
\begin{align*}
2 \ln(n) -2t^2N &< -\ln(n) \\
3 \ln(n) &< \frac{2N}{4 n^{4}\alpha^{-2} \ln^2 (n)}\\
6  n^{4} \alpha^{-2} \ln^3 (n)  &< N
\end{align*}
Hence $N$ needs to be $\Omega(n^{4}\alpha^{-2} \ln^3 (n))$. \hfill \qed

\section{\fontsize{11}{15}\selectfont Proof of Theorem \ref{thm:finite_samples}}
\label{app:finite_samples}
From the equivalence between the minimum entropy coupling problem and the problem of finding the exogenous variable with minimum entropy, the output of $\mathcal{A}(\{\hat{p}(Y|X=x)\}_x)$ is the smallest entropy of any exogenous variable for the causal model $X\rightarrow Y$. Similarly, this claim holds for $\mathcal{A}(\{\hat{p}(X|Y=y)\}_y)$ as well. From Theorem \ref{thm:main}, entropy in the direction $Y\rightarrow X$ scales with $n$ using $p(X|Y=y)$. From Theorem 6 of \cite{ho2010interplay}, it can be seen that the given sampling error can induce an entropy difference of at most $o(1)$ in the conditional entropies. Hence, even with noisy conditionals, $\max_y\hat{H}(X|Y=y)$ scales with $n$, implying that $\mathcal{A}(\{\hat{p}(X|Y=y)\}_y)$ scales with $n$. In the forward direction, the true exogenous variable provides a valid coupling under the true joint distribution without sampling noise. From Lemma \ref{thm:constant_entropy_away}, given $N$ samples, there exists a valid coupling in the forward direction that is constant entropy away from the true exogenous variable. Hence $\mathcal{A}(\{p(Y|X=x)\}_x)$ is constant. Since $\mathcal{A}(\{p(X|Y=y)\}_y)$ scales with $n$, the result follows.

\section{\fontsize{11}{15}\selectfont Proof of Theorem \ref{thm:finite_samples_conditionals}}
\label{app:finite_samples_conditionals}
We first show that the $H(X|Y=2)$ conditional entropy will have enough samples to be included in the criterion listed in Theorem \ref{thm:finite_samples_conditionals}. As $N = \Omega(n^2 \log(n))$, we have at least $c_1 n^2 \log(n)$ samples for $c_1 = \Theta(1)$. As shown in the proof of Theorem \ref{thm:main}, $p(Y=2) = \Omega(\frac{1}{n}) \ge \frac{c_4}{n}$ where $c_4 = \Theta(1)$. Following a rejection sampling approach, we use Hoeffding's inequality to show that if $c_1 n^2 \log n$ samples are drawn from the joint distribution, then with probability $1 - o(1)$ we will successfully draw $\Omega(n \log{(n)})$ independent samples from the distribution $p(X|Y=2)$. Specifically, let $S_n$ denote the number of samples (out of $c_1 n^2 \log(n)$ total samples from the joint distribution) for which $Y=2$, and $E_n = \mathbb{E}[S_n]$ denote the expected number of such samples. We have $E_n \ge (c_1 n^2 \log(n)) (\frac{c_4}{n}) = c_1 c_4 n \log(n)$. Hence using Hoeffding's inequality, $P\left(S_n < \frac{c_1 c_4 n \log(n)}{2}\right) \le P\left(|S_n - E_n| > \frac{c_1 c_4 n \log(n)}{2}\right) < 2e^{-\frac{2 (c_1 c_4 n \log(n))^2}{c_1 n^2 \log(n)}} = 2e^{-2 c_1 c_4^2 \log(n)} = o(1)$. %
Hence $S_n \geq \frac{c_1 c_4 n \log(n)}{2} \gg n$ with probability $1-o(1)$. Thus the $\hat{H}(X|Y=2)$, which we use for identifiability, will have sufficient number of samples to be included in the criterion in Theorem \ref{thm:finite_samples_conditionals}.

We now show that each conditional entropy in the criterion in Theorem \ref{thm:finite_samples_conditionals} will have error bounded by a constant with high probability. Immediately following from Corollary 1.12 of \cite{valiant2017estimating}, for a distribution $D$ with support size $n$, $|H(D)-\hat{H}(D)| \le 1$ with probability $1-e^{-n^{c_2}}$ given a sample of size at least $\frac{c_3 n}{\log(n)}$ where $c_2,c_3 = \Theta(1)$.
Since we only calculate conditional entropy estimates with $\ge n$ samples, the number of samples $n \gg \frac{c_3 n}{\log(n)}$ for all considered conditional entropies. Hence the total probability of any computed conditional entropy estimate being off by more than $1$ is $\le n e^{-n^{c_2}} = o(1)$ by the union bound. Since by the proof of Theorem \ref{thm:main} we know $\max_{x} H(X|Y=y) \le c \ll \Omega(\log(\log(n))) \le H(X | Y=2)$, it immediately follows that $\max_{x,\hat{p}(X=x)N\ge n} \hat{H}(Y|X=x) \le c+1 \ll \Omega(\log(\log(n)))-1 \le \max_{y,\hat{p}(Y=y)N \ge n} \hat{H}(X|Y=y)$.\hfill\qed

\section{\fontsize{11}{15}\selectfont Proof of Corollary \ref{cor:finite_simplex}}
\label{app:finite_simplex}
This generative model satisfies the assumptions of Theorem \ref{thm:finite_samples} following from the proof of Corollary \ref{cor:identifiability}. As such, under this generative model for sufficiently large $n$ and $N = \Omega(n^4 \alpha^{-2} \log^3{(n)})$ samples, $\mathcal{A}(\{\hat{p}(X|Y=y)\}_y)>\mathcal{A}(\{\hat{p}(Y|X=x)\}_x)$ with high probability.

We show a lower bound on $\alpha$ with high probability, under this generative model. As mentioned in the proof of Corollary \ref{cor:identifiability}, under this generative model, for any $i$, $P(x_i \le z) = 1 - (1-z)^{n-1}$. We aim to show that with high probability, $x_i \ge \frac{1}{n^2 \log(n)}, \forall i \in [n]$ when $n$ is sufficiently large. 

We lower bound the probability of this not happening as $(1 - (1-\frac{1}{n^2 \log(n)})^{n-1})n$ by the union bound. Note that $\lim_{n \rightarrow \infty} \frac{(1 - (1-\frac{1}{n^2 \log(n)})^{n-1})n}{1 / \log(n)} = 1$. 

Hence for sufficiently large $n$ the probability that there exists an $x_i < \frac{1}{n^2 \log(n)}$ is upper bounded by $\frac{2}{\log(n)}$. Thus, we have a high probability lower bound for $\alpha$. We substitute this for $\alpha$ in our lower bound for the number of required samples in the previous paragraph. This yields that under this generative model for sufficiently large $n$ and $N = \Omega(n^8 \log^5{(n)})$ samples, $\mathcal{A}(\{\hat{p}(X|Y=y)\}_y)>\mathcal{A}(\{\hat{p}(Y|X=x)\}_x)$ with high probability.

\section{\fontsize{11}{15}\selectfont Proof of Negative Association}
\label{app:negAssoc}
\begin{lemma}
\label{lem:neg_assoc}
Let $[x_i]_{i\in [n]}$ be a vector, uniformly randomly sampled from the probability simplex in $n$ dimensions. Then $[x_i]_{i\in [n]}$ is negatively associated. 
\end{lemma}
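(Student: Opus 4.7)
The plan is to realize the uniform distribution on $\Delta_n$ as a conditioned distribution of i.i.d.\ variables with log-concave densities, and then invoke the classical theorem of Joag-Dev and Proschan which states that such conditioned distributions are negatively associated. Specifically, let $E_1,\ldots,E_n$ be i.i.d.\ $\mathrm{Exp}(1)$ random variables, each with density $e^{-x}\mathbbm{1}_{\{x\geq 0\}}$, which is log-concave. Set $S = \sum_{i=1}^n E_i$. A standard calculation (change of variables from $(E_1,\ldots,E_n)$ to $(E_1/S,\ldots,E_{n-1}/S, S)$ with Jacobian $S^{n-1}$) shows that $(E_1/S,\ldots,E_n/S)$ is independent of $S$ and distributed as $\mathrm{Dirichlet}(1,\ldots,1)$, which is exactly the uniform distribution on $\Delta_n$. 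Equivalently, the conditional law of $(E_1,\ldots,E_n)$ given $S=1$ is uniform on $\Delta_n$.

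The key step is then to apply the following theorem of Joag-Dev and Proschan (1983, ``Negative Association of Random Variables, with Applications''): if $X_1,\ldots,X_n$ are independent random variables, each having a log-concave density, then for every real $s$ the conditional distribution of $(X_1,\ldots,X_n)$ given $\sum_i X_i = s$ is negatively associated. Applying this to $(E_1,\ldots,E_n)$ with $s=1$ immediately yields that the vector $(x_i)_{i\in[n]}$ drawn uniformly from $\Delta_n$ is negatively associated, which is what we needed. Once this is in hand, the Hoeffding-type bound used in the proof of Corollary \ref{cor:identifiability}, which requires negative association to replace independence, goes through.

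The main obstacle here is not the identification step (both the Dirichlet-exponential representation and the log-concavity of $e^{-x}$ are completely standard), but rather verifying that one may legitimately invoke the Joag-Dev-Proschan theorem. In our setting the conditioning event $\{S=1\}$ has Lebesgue measure zero, so the ``conditional distribution'' must be interpreted via the appropriate regular conditional law; the theorem of Joag-Dev and Proschan is stated for exactly this case (densities with respect to Lebesgue measure on the hyperplane $\sum_i x_i = s$), so this is a matter of citation rather than new work. A fully self-contained alternative, should one wish to avoid citing the theorem, would be to use the order-statistics representation of the uniform on $\Delta_n$ (the spacings of $n-1$ i.i.d.\ $\mathrm{Uniform}[0,1]$ samples) together with the fact that uniform random permutations of a deterministic vector yield a negatively associated joint law, also due to Joag-Dev and Proschan; but the exponential representation is the cleanest.
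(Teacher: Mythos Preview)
Your proposal is correct and follows essentially the same approach as the paper: represent the uniform distribution on $\Delta_n$ via i.i.d.\ $\mathrm{Exp}(1)$ variables, note that the exponential density is log-concave, and invoke the Joag-Dev--Proschan theorem to conclude negative association of the vector conditioned on the sum. Your write-up is in fact slightly more careful than the paper's (you spell out the Jacobian argument for the Dirichlet--exponential identification and flag the measure-zero conditioning), but the key idea and the cited result are identical.
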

\begin{proof}
Let $x_i=\frac{z_i}{\sum_jz_j},$ where each $z_i$ is independent and identically distributed exponential random variable with mean $1$, i.e. distributed as $\mathrm{Exp}(1)$. Then $[x_i]_i$ is a discrete probability distribution uniformly randomly chosen from the simplex in $n$ dimensions. We will show that $x_i$ are negatively associated. The following argument is provided by \cite{297441} as an answer on the online forum \url{https://mathoverflow.net/}, which we reproduce here for completeness.

Consider the following theorem:
\begin{theorem}\cite{joag1983negative}
Let $z_1,z_2,\hdots, z_n$ be $n$ random variables with log-concave probability densities. Then $(z_1,z_2,\hdots,z_n)$ conditioned on $\sum_{i\in [n]}z_i$ are negatively associated. 
\end{theorem}

Note that exponential distribution is log-concave. Hence the theorem is applicable in our setting. Furthermore, the distribution induced on $(\frac{z_i}{\sum_{j\in [n]}z_j})_{i\in[n]}$ is identical to the distribution induced on $(z_1,z_2,\hdots,z_n)$ conditioned on $\sum_{i\in [n]}z_i=1$. This concludes the proof.
\end{proof}

\section{Additional Experiments and Experimental Details}
\subsection{Experimental Details}
In this section, we provide the complete details of every experiment given in the main text, as well as provide additional results that we were not able to present in the main text due to space constraints. 

\textbf{Sampling low-entropy exogenous variables: }
We use Dirichlet distribution to sample the distribution for the exogenous variable from the probability simplex. Dirichlet has the parameter $\alpha$ which affects the entropy of the distribution obtained by sampling the corresponding Dirichlet distribution: Smaller $\alpha$ values lead to sampling distributions with smaller entropy. Suppose we want to sample distributions for $E$ such that $H(E)\leq \theta$. Since a good $\alpha$ value for this $\theta$ is not known a priori, we use the following adaptive sampling scheme: Suppose we want to sample $N$ distributions for $E$ such that $H(E)\leq \theta$.  We initialize with $\alpha^{(0)}=1$ and obtain $10N$ samples from Dirichlet with parameters $\alpha^{(0)}$. If there are at least $N$ samples out of $10N$ which has entropy less than $\theta$, we are done. If not, we set $\alpha^{(1)}=0.5\alpha^{(0)}$ and iterate until for a particular $\alpha^{(i)}$ such that at least $N$ out of $10N$ samples satisfy the entropy condition. 

\textbf{Details about Figure \ref{fig:implications_on_observed}: }
We set $E$ to have $mn$ number of states where $m,n$ are the number of states of $X$ and $Y$, respectively. It can be shown that this many number of states is sufficient to obtain any joint distribution. We uniformly randomly sample the function $f$ in the structural equation $Y=f(X,E)$. We also independently and uniformly randomly sample $p(X)$ from the simplex, i.e., we obtain samples from Dirichlet distribution with parameter $\alpha=1$. 
For $m=n=40$, we choose $20$ values of $\theta$, i.e., entropy thresholds for the exogenous variable $E$, uniformly spaced in the range $[0,\log(m)]$. For $m\neq n$, we choose $10$ $\theta$ values in the range $[0,\log(\max\{m,n\})]$.  

When $m\neq n$, we use a mixture data as follows: We obtain $10000$ samples from the graph $X\rightarrow Y$ and we obtain $10000$ samples from $X\leftarrow Y$. We operate on this mixed data. This is done to reflect the fact that, there is no reason for the cause or the effect variable to have less or more number of states. Accuracy shown in the figures reflect the fraction of times each algorithm correctly identifies the true causal direction. Total entropy-based compares $H(X)+H(E)$ and $H(Y)+H(\tilde{E})$ where $E$ and $\tilde{E}$ are the outputs of the greedy minimum entropy coupling algorithm in the direction $X\rightarrow Y$ and $X\leftarrow Y$, respectively.

\textbf{Details about Figure \ref{fig:relaxing0.8}: }
We sample exogenous variable using the above adaptive sampling method so that, for each value of $n$, we have $H(E)\leq 0.8\log(n)$. The other details are identical (e.g., $10000$ samples for each configuration.) Due to the sampling method, we observe that most of the samples are very close to $H(E)\approx 0.8\log(n)$. We then obtain the histogram plots for $H(\tilde{E})$, where $\tilde{E}$ is the output of the greedy minimum entropy coupling algorithm in the wrong direction. As observed, data fits well to a Gaussian and is highly concentrated around $0.854\log(n)$.

\textbf{Details about Figure \ref{fig:effect_of_confounding}: }
In this section, we introduce a latent confounder $L$. First, distribution of $L$ and distribution of $E$ are sampled independently. Then the distributions $p(X|l),p(Y|x,l,e)$ are sampled uniformly randomly from the simplex for every configuration of $x,l,e$. We use the adaptive sampling described above to sample $E$ such that $H(E)\leq 2$. Using the same sampling method, we sweep through different entropy thresholds for the latent confounder $L$ and sample such that $H(L)\leq \phi$ for $\phi\in \{0.5,1,1.5,2,2.5,3\}$. The settings for $m,n$ and how data is mixed is identical to the procedure used to obtain Figure \ref{fig:implications_on_observed}: When $m\neq n$, we use uniformly mixed data from $X\rightarrow Y$ and $X\leftarrow Y$. For each configuration, we obtain $1000$ total number of samples and report the accuracy of the method to identify the true causal direction. 

\subsection{Relaxing constant exogenous entropy assumption}
As indicated in Section \ref{sec:experiments}, we provide additional experiments for $\alpha=0.2$ and $0.5$ in Figure \ref{fig:relaxing0.2} and Figure \ref{fig:relaxing0.5}, respectively. As can be seen, for both $\alpha$ values, i.e., when $H(E)\leq \alpha\log(n)$, $H(\tilde{E})$ highly concentrates around $\beta\log(n)$ for some $\beta>\alpha$.

\begin{figure}
	\centering
	\begin{subfigure}[b]{0.32\linewidth}
		\includegraphics[width=\textwidth]{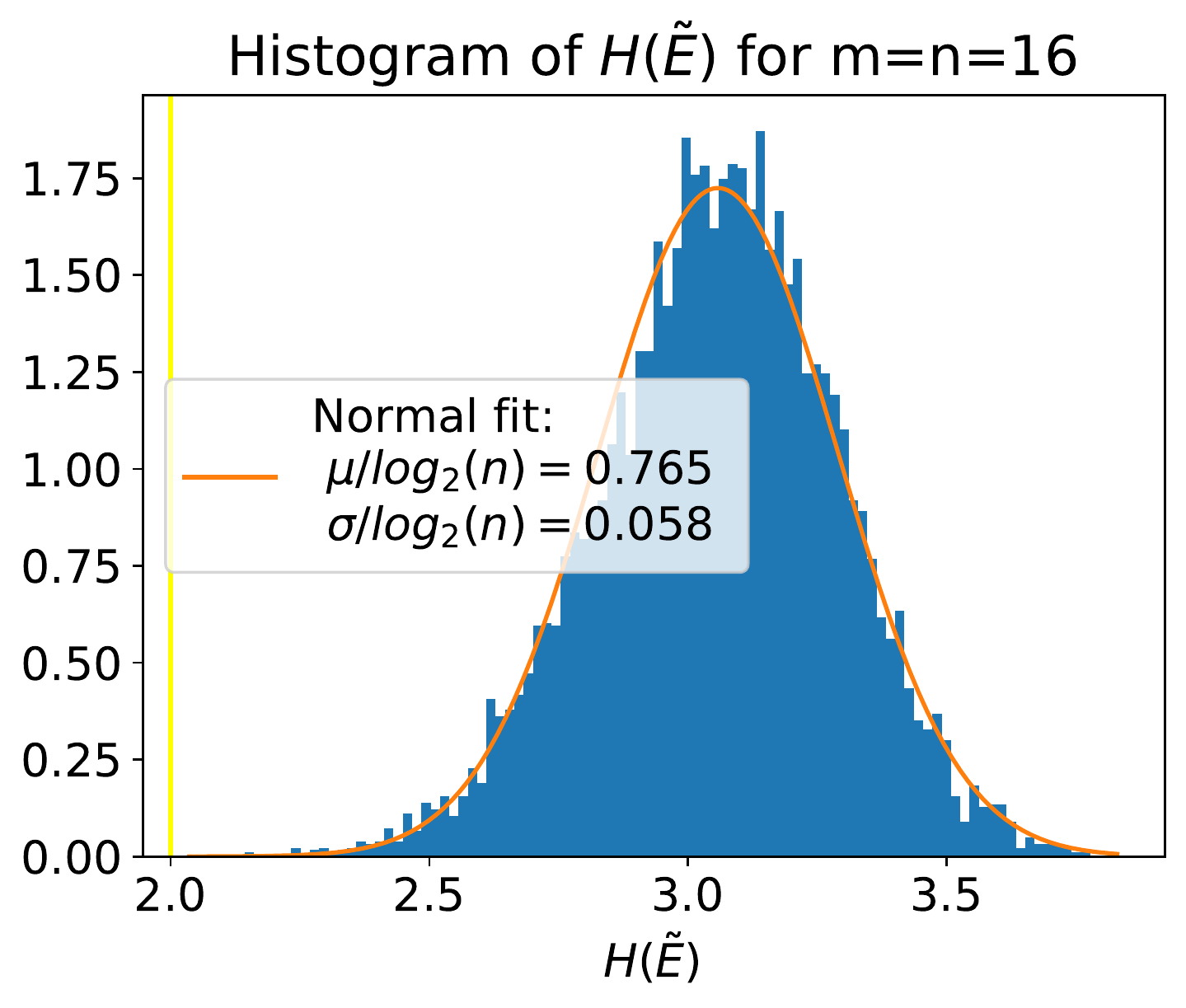}
		\caption{}
		\label{fig:normal_fit_16_0.5}
	\end{subfigure}
	\begin{subfigure}[b]{0.3\linewidth}
		\includegraphics[width=\textwidth]{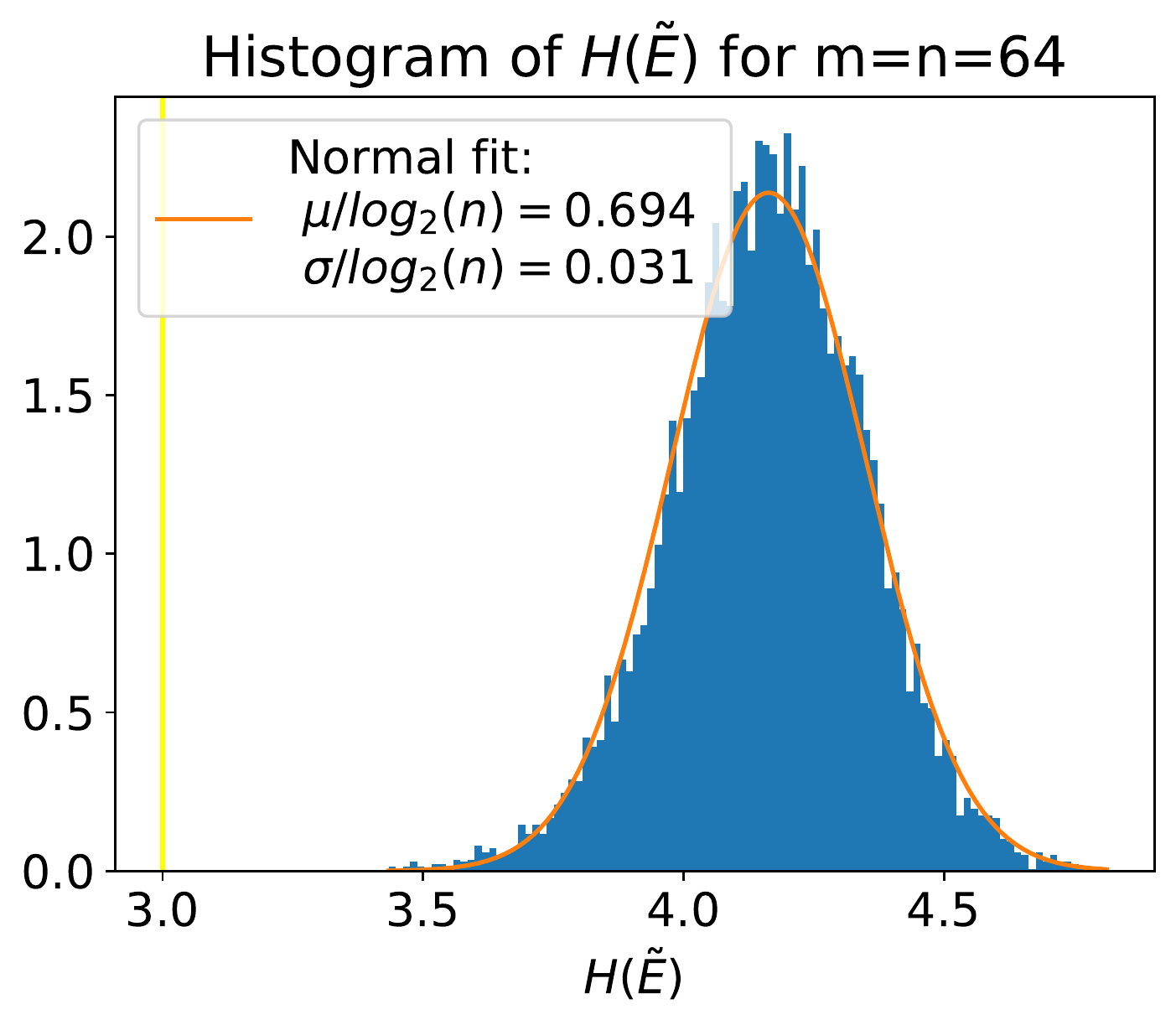}
		\caption{}
		\label{fig:normal_fit_64_0.5}
	\end{subfigure}
	\begin{subfigure}[b]{0.32\linewidth}
		\includegraphics[width=\textwidth]{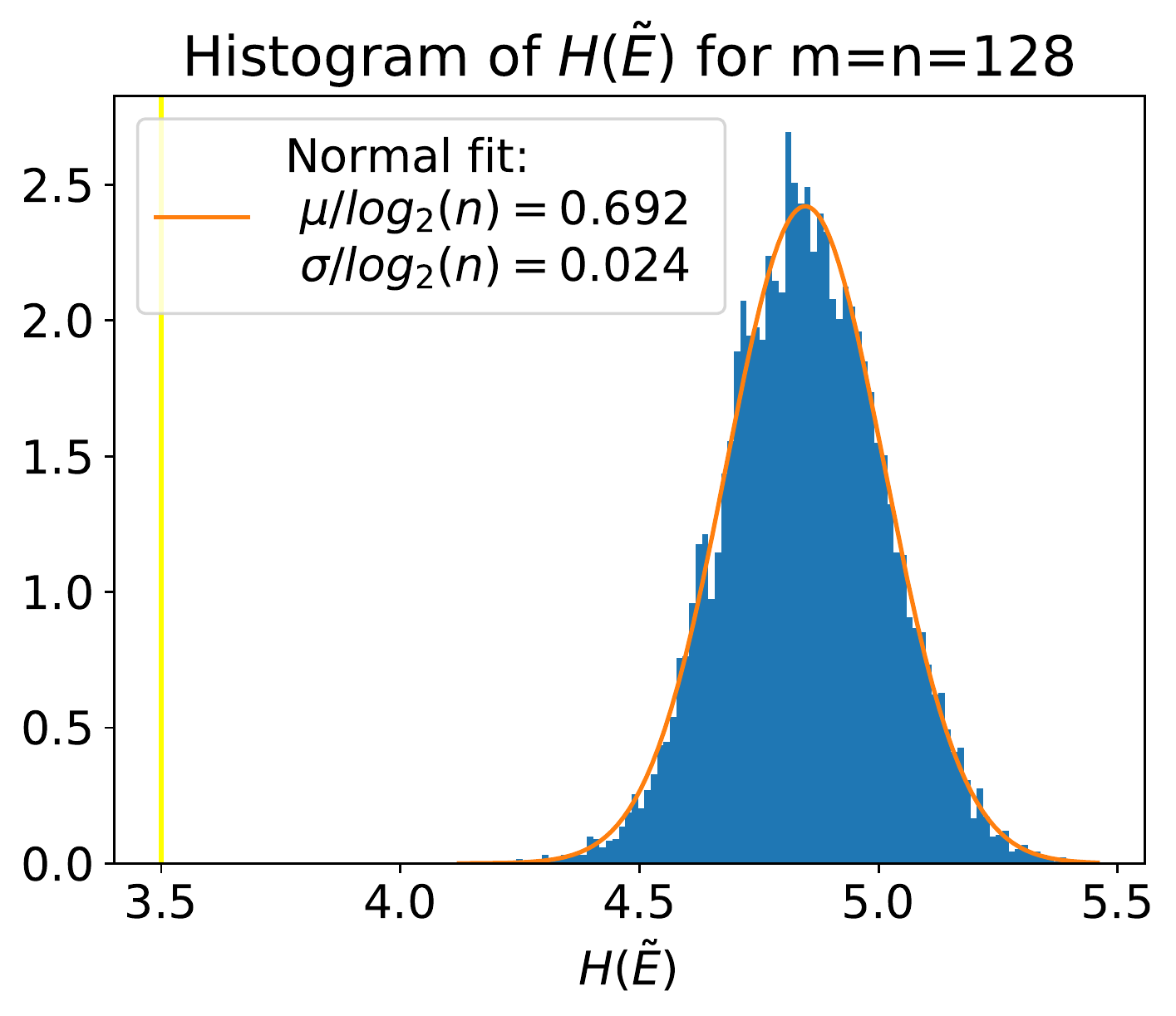}
		\caption{}
		\label{fig:normal_fit_128_0.5}
	\end{subfigure}	
\caption{Histogram of $H(\tilde{E})$ when $H(E)\approx 0.5\log_2(n)$.  Yellow line shows $x=0.5\log_2(n)$}
\label{fig:relaxing0.5}
\end{figure}

\begin{figure}
	\centering
	\begin{subfigure}[b]{0.32\linewidth}
		\includegraphics[width=\textwidth]{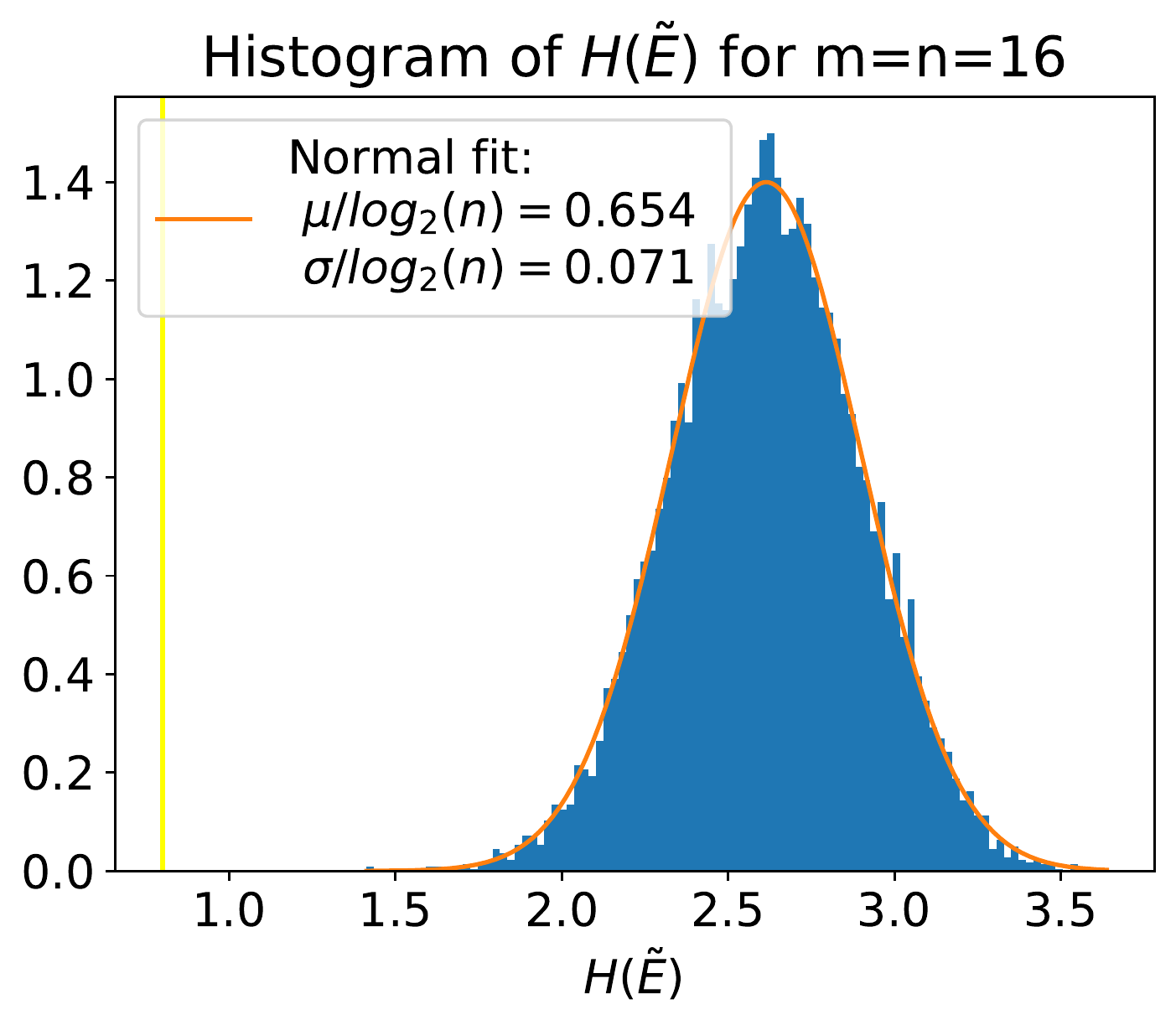}
		\caption{}
		\label{fig:normal_fit_16_0.2}
	\end{subfigure}
	\begin{subfigure}[b]{0.3\linewidth}
		\includegraphics[width=\textwidth]{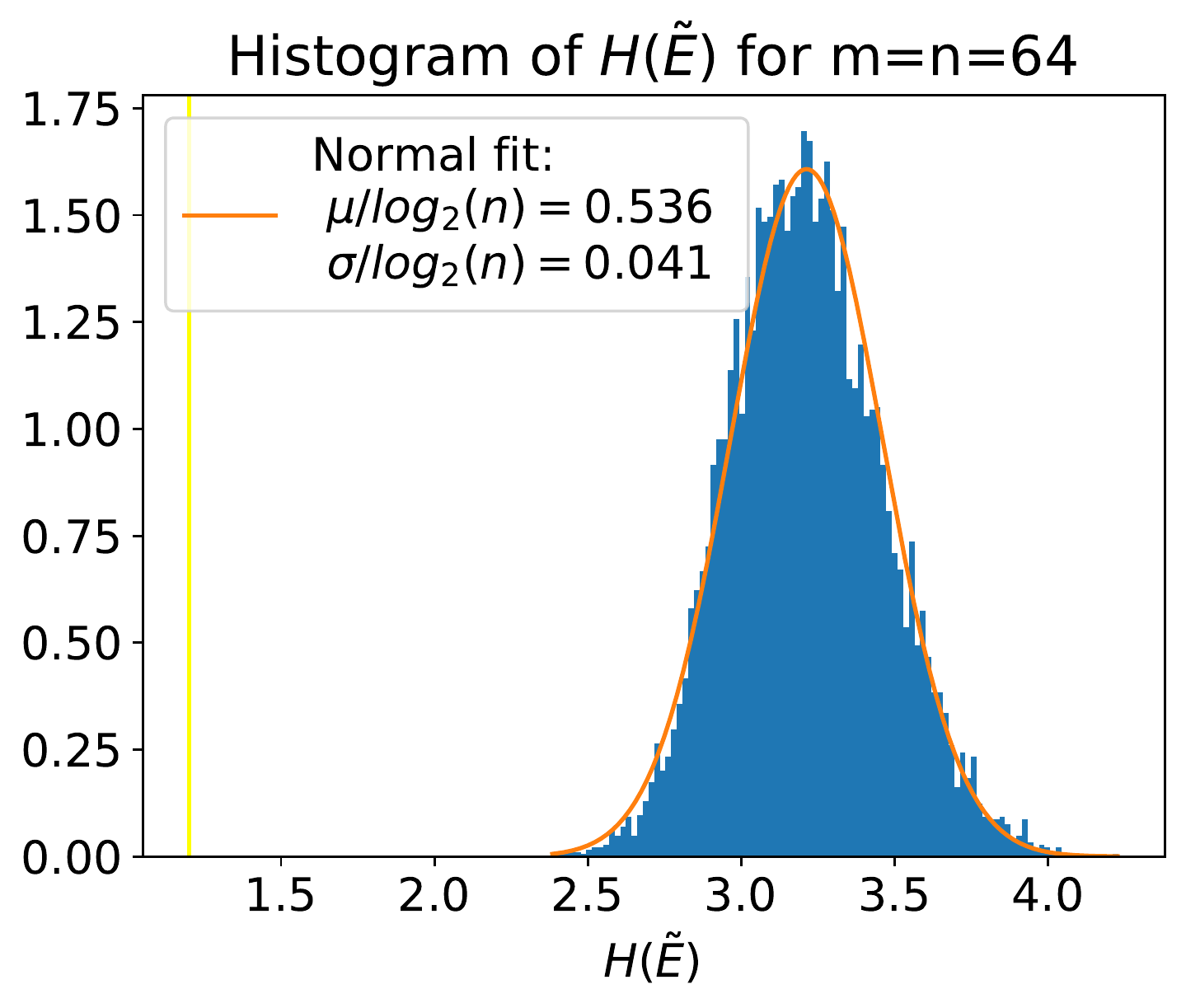}
		\caption{}
		\label{fig:normal_fit_64_0.2}
	\end{subfigure}
	\begin{subfigure}[b]{0.32\linewidth}
		\includegraphics[width=\textwidth]{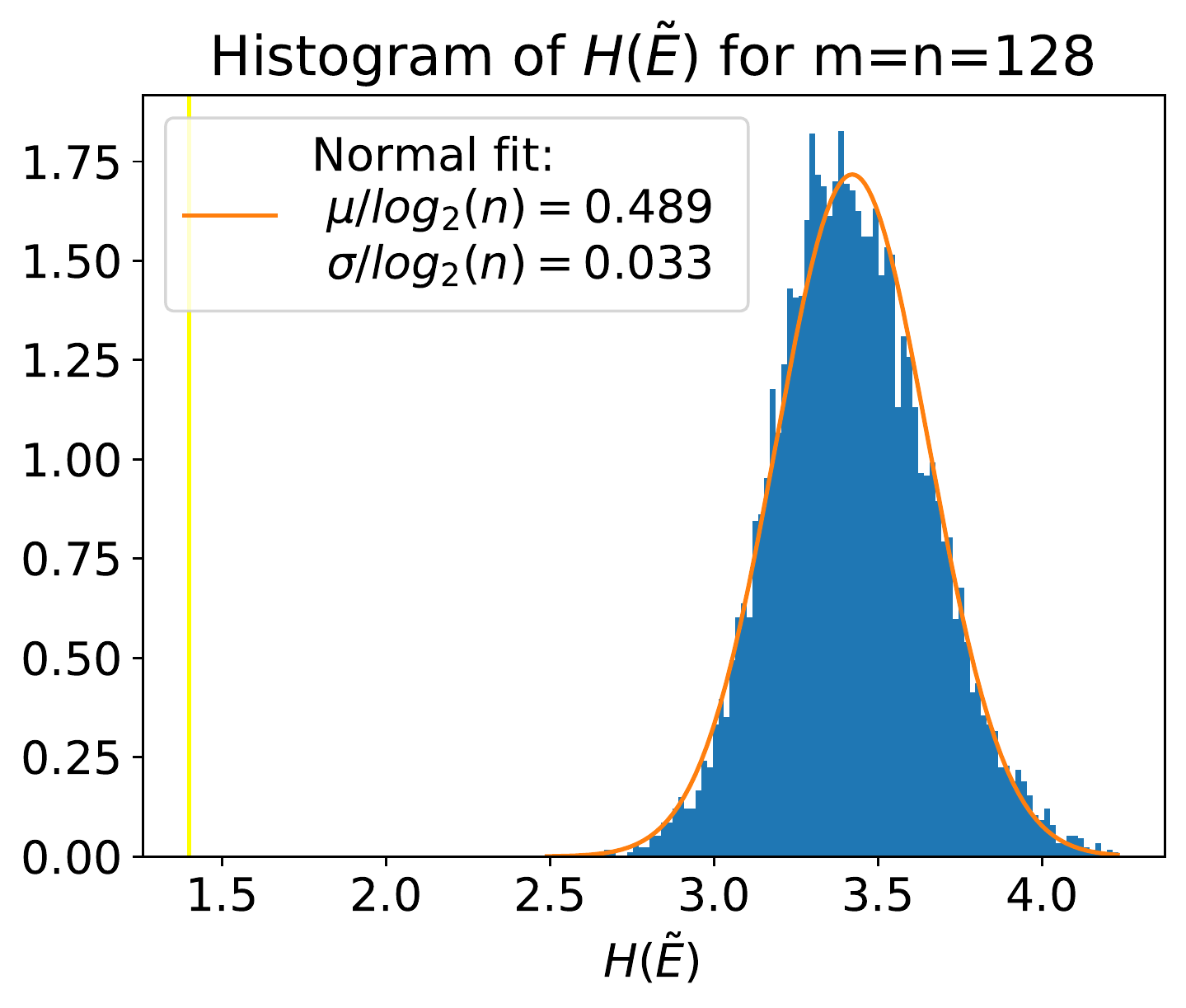}
		\caption{}
		\label{fig:normal_fit_128_0.2}
	\end{subfigure}	
\caption{Histogram of $H(\tilde{E})$ when $H(E)\approx 0.2\log_2(n)$.  Yellow line shows $x=0.2\log_2(n)$}
\label{fig:relaxing0.2}
\end{figure}

\subsection{Additional results on the finite sample regime}
\label{app:results}

Figure \ref{fig:extras} shows results on finite sample identifiability for the setting considered in the figure in the main text, except with smaller $H(E) \leq \ln(4)$. 
\begin{figure*}[t!]
	\vskip -0.11in
	\centering
	\begin{subfigure}[b]{0.3\linewidth}
		\includegraphics[width=\textwidth]{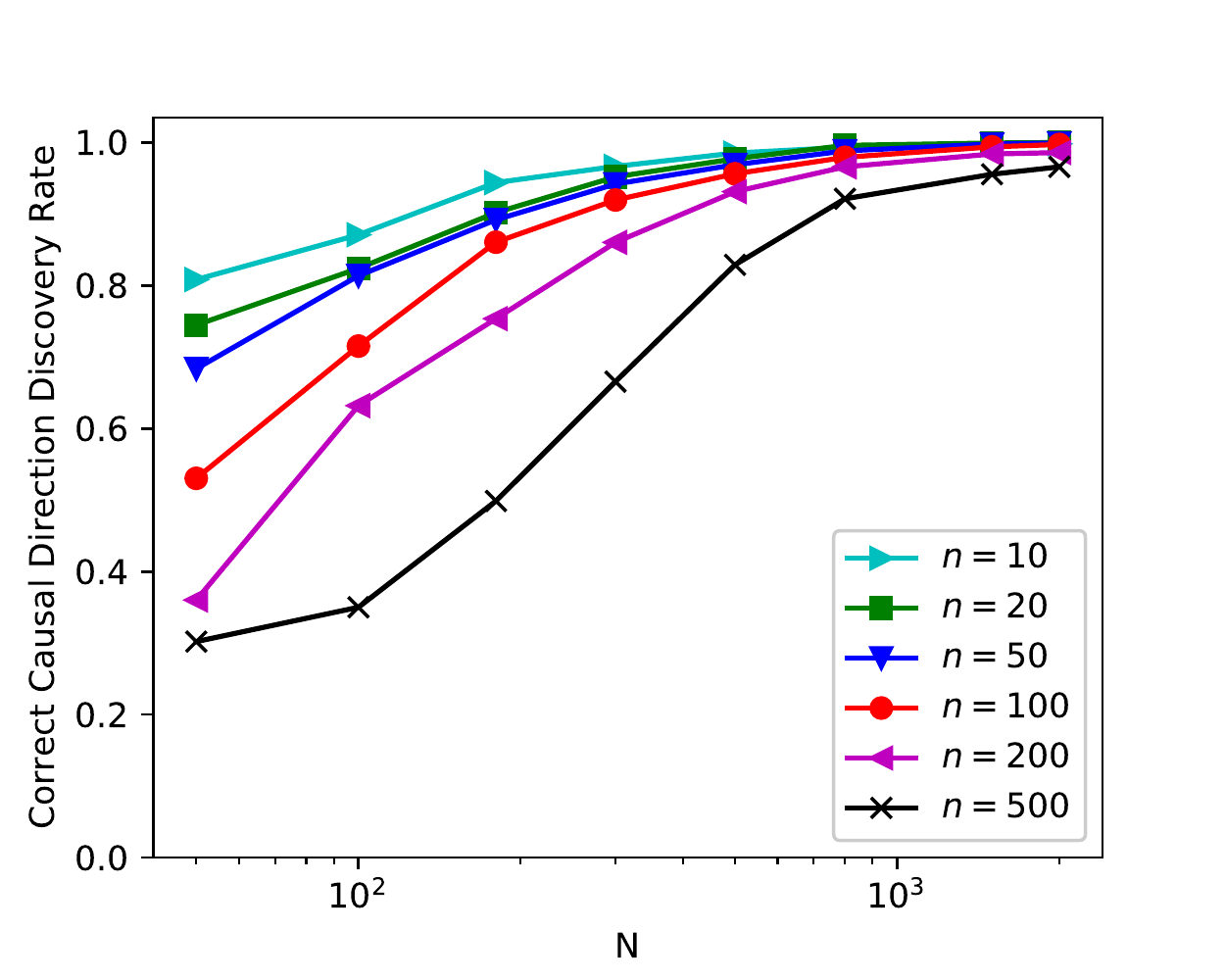}
		\vspace{-7mm}
		\caption{Identification via conditional entropies ($H(E) \leq \ln(4)$).}
		\label{fig:conditional1}
	\end{subfigure}
	\begin{subfigure}[b]{0.3\linewidth}
		\includegraphics[width=\textwidth]{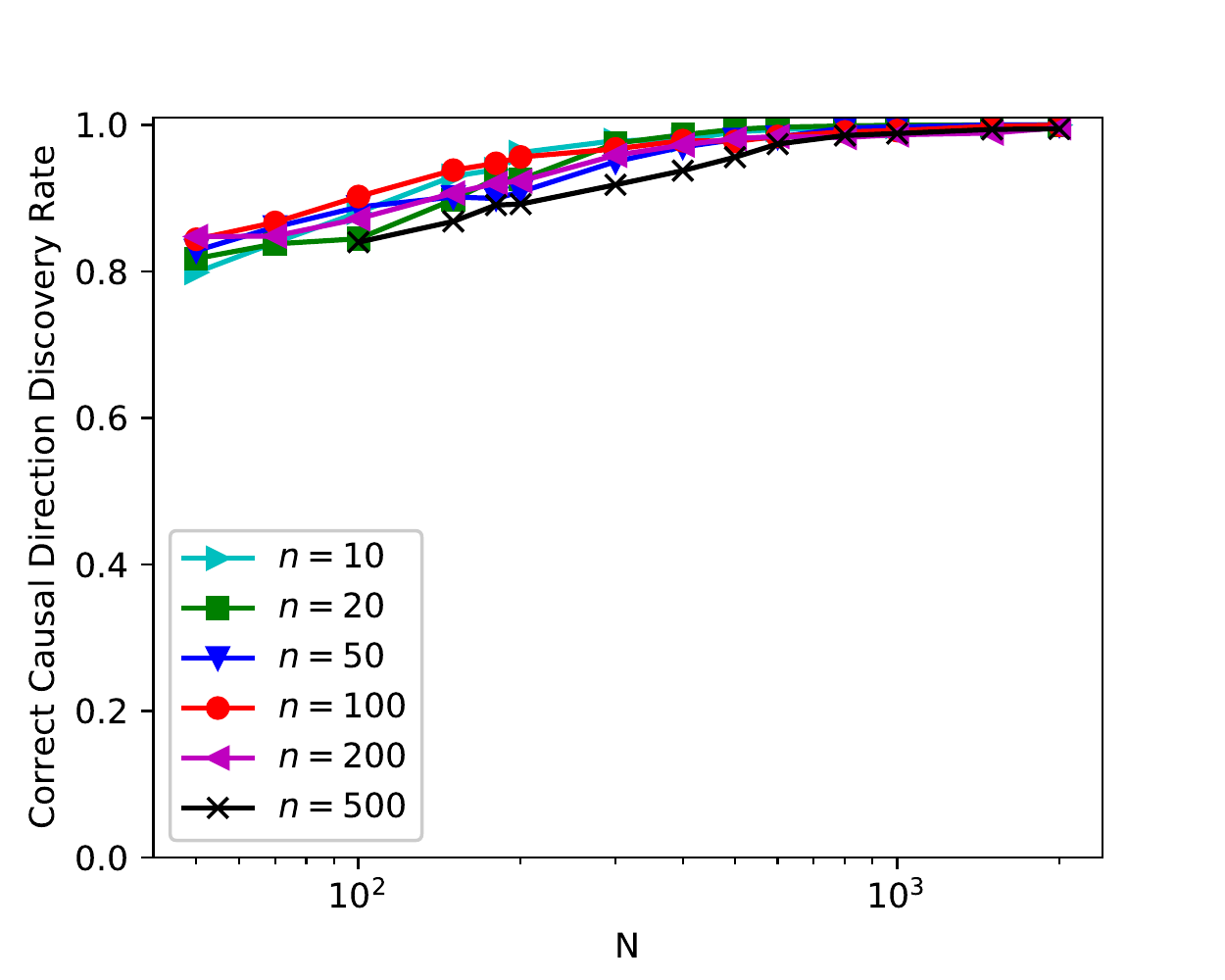}
		\vspace{-7mm}
		\caption{Identification via MEC algorithm ($H(E) \leq \ln(4)$).}
		\label{fig:conditionalMEC1}
	\end{subfigure}
	\begin{subfigure}[b]{0.28\linewidth}
		\includegraphics[width=\textwidth]{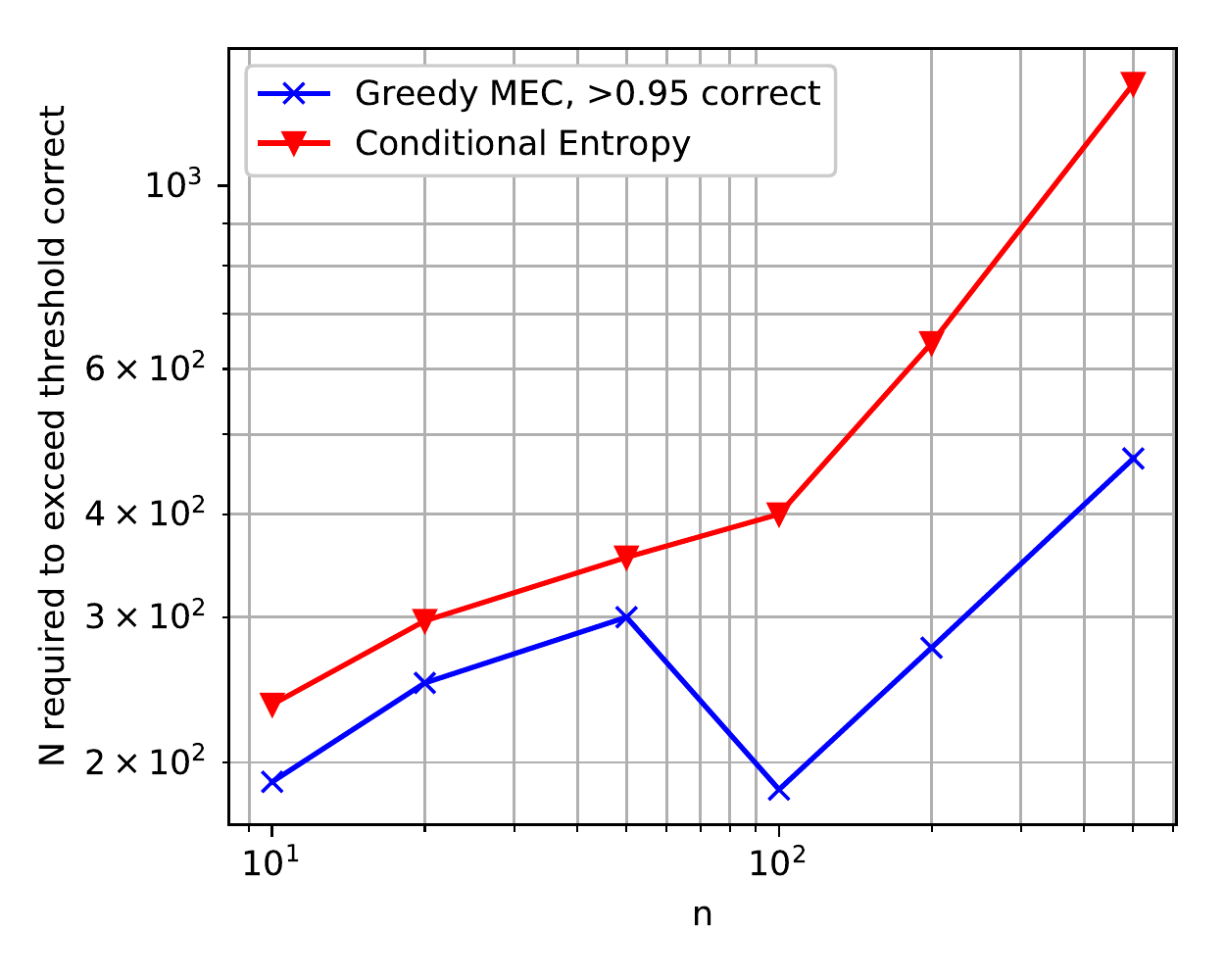}
		\vspace{-7mm}
		\caption{Number of samples vs. support size of observed variables.}
		\label{fig:Nreqd1}
 	\end{subfigure}
	\caption{Finite sample identifiability of the causal direction via entropic causality. (a) Probability of correctly discovering the causal direction $X \rightarrow Y$ as a function of $n$ and number of samples $N$, using the conditional entropies as the test. (b) Probability of correctly discovering the causal direction $X \rightarrow Y$ using the greedy MEC algorithm to test the direction. (c) Samples $N$ required to reach 95\% correct detection as a function of $n$, derived from the plots in Figure \ref{fig:conditional1} and Figure \ref{fig:conditionalMEC1}.  }
	\label{fig:extras}
\end{figure*}

Results for $p(X)$ drawn from $\mathrm{Dir}(1)$ are shown Figure \ref{fig:Dir1}, as described in the main text. We find that the greedy MEC performance degrades to a level that is similar to the conditional entropy criterion. This might be explained by the fact that if $p(X|Y=y)$ are close to uniform, then the gap between $H(\tilde{E})$ and $H(X|Y=y)$ vanishes. 

\begin{figure*}[t!]
	\vskip 0.2in
	\centering
	\begin{subfigure}[b]{0.32\linewidth}
		\includegraphics[width=\textwidth]{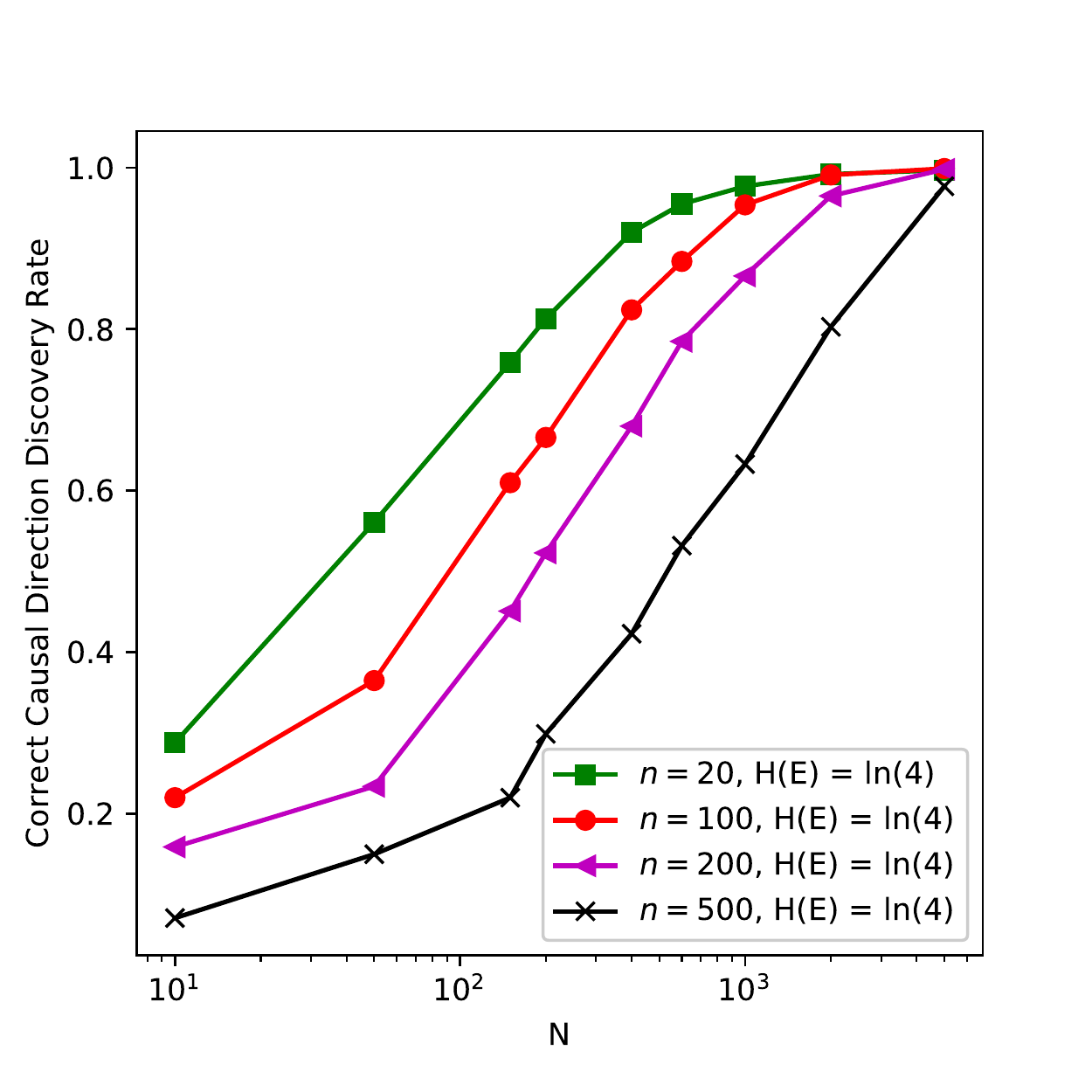}
		\caption{Identification via conditional entropies ($H(E) = \ln(4)$).}
		\label{fig:conditionalDir1}
	\end{subfigure}
	\begin{subfigure}[b]{0.32\linewidth}
		\includegraphics[width=\textwidth]{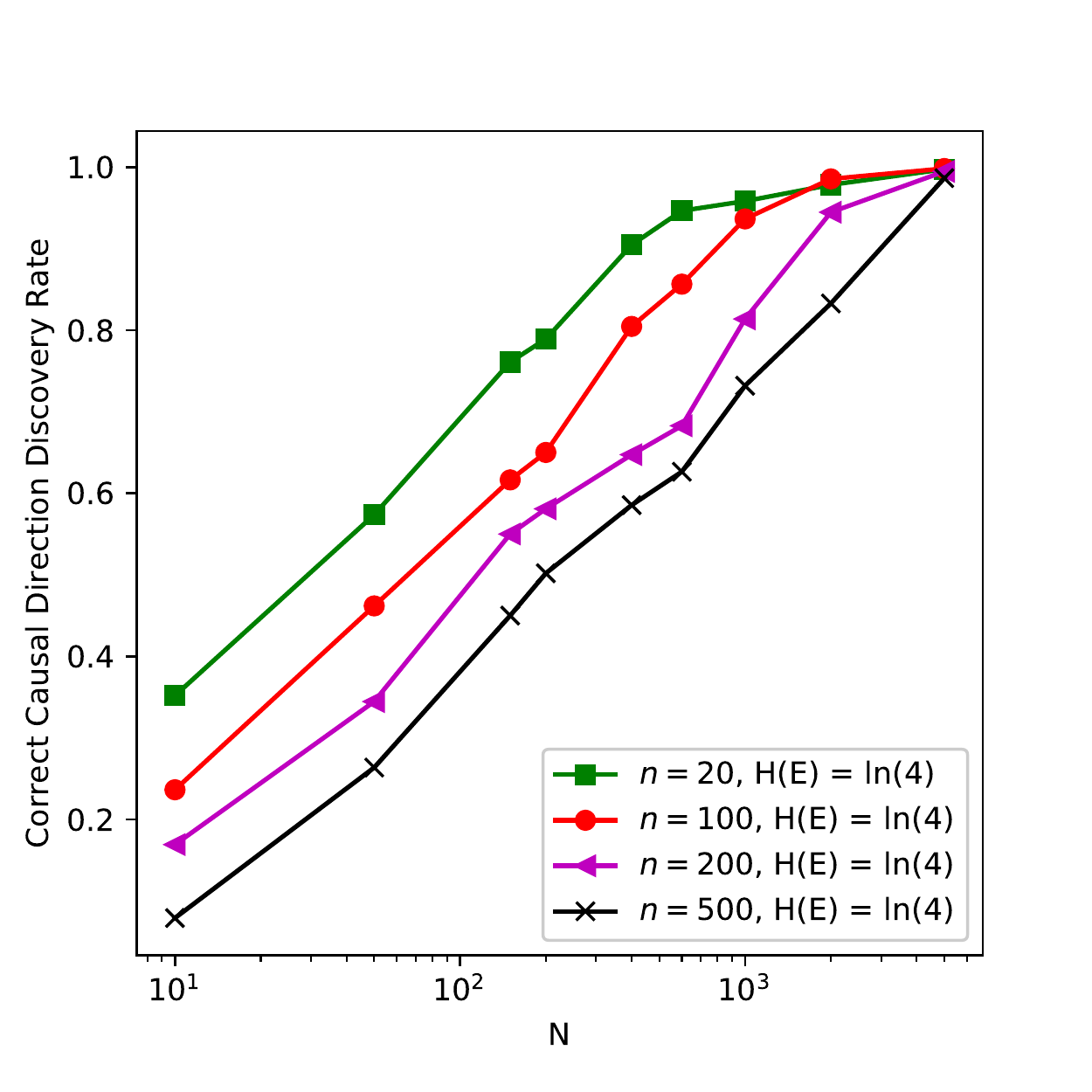}
		\caption{Identification via MEC algorithm ($H(E) = \ln(4)$).}
		\label{fig:conditionalMECDir1}
	\end{subfigure}
	\begin{subfigure}[b]{0.30\linewidth}
		\includegraphics[width=\textwidth]{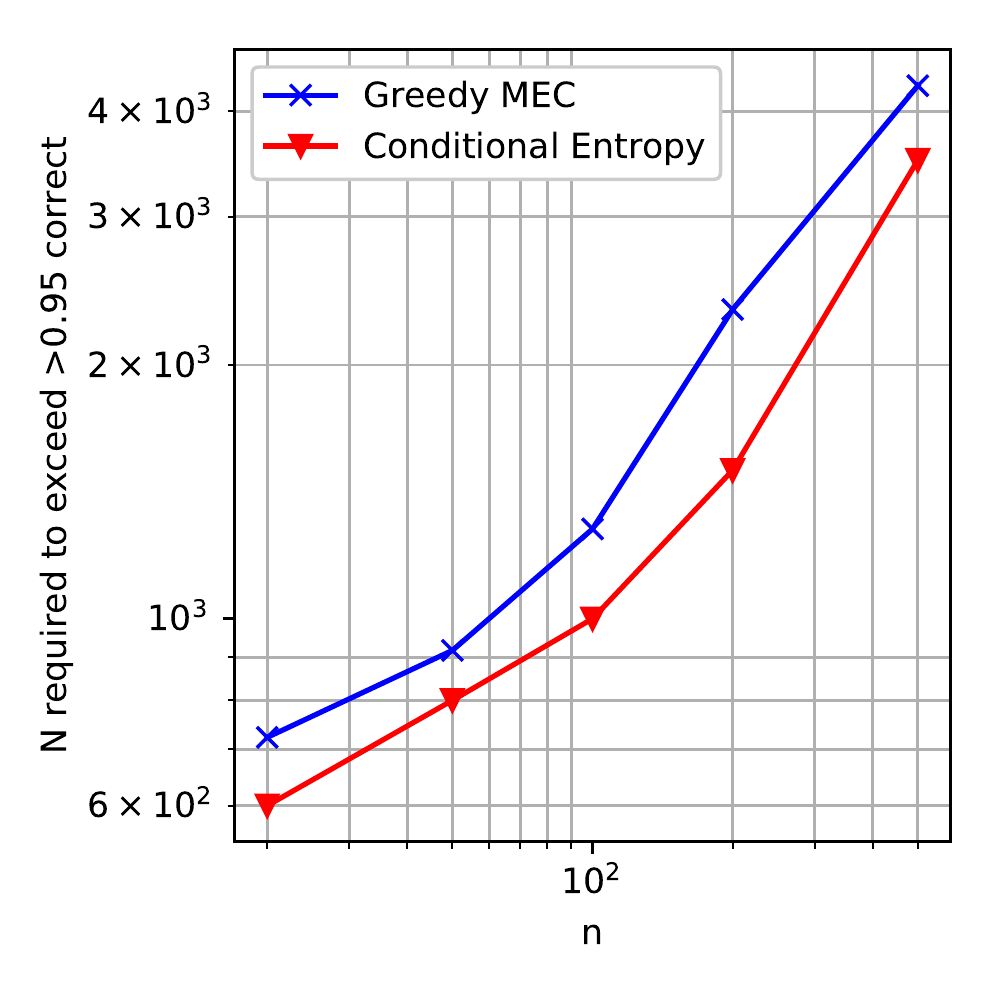}
		\caption{Number of samples vs. support size of observed variables ($H(E) = \ln(4)$).}
		\label{fig:NreqdDir1}
	\end{subfigure}
	\caption{Finite sample identifiability of the causal direction via entropic causality, where $p(x) \sim \mathrm{Dir}(1)$ (uniform on the simplex). (a) Probability of correctly discovering the causal direction $X \rightarrow Y$ as a function of $n$ and number of samples $N$, using the conditional entropies as the test. (b) Probability of correctly discovering the causal direction $X \rightarrow Y$ as a function of $n$ and number of samples $N$, using the greedy MEC algorithm to test the direction. (c) Samples $N$ required to reach 98\% correct detection as a function of $n$, derived from the plots in Figure \ref{fig:conditionalDir1} and Figure \ref{fig:conditionalMECDir1}.}
	\label{fig:Dir1}
\end{figure*}

\subsection{Additional Tuebingen Experiments}
In this section, we perform additional experiments to evaluate the stability of the method to choice of quantization on the Tuebingen dataset. Specifically, to quantize $[a,b]$ into $n$ intervals, we perturb each quantization point $\{a+\frac{(b-a)i}{n}\}_i$ with a uniform noise in $[-\frac{(b-a)}{8n},\frac{(b-a)}{8n}]$. For every pair, this is done $5$ times independently and the majority decision is taken. The results, which show similar performance to Table \ref{tab:tuebingen} are shown in Table \ref{tab:tuebingenQ}, demonstrating a degree of stability to choice of quantization. We observe that perturbed quantization demonstrates better performance for $20-$state quantization, whereas it shows somewhat worse performance for the $5$ and $10-$state quantizations. This indicates that more research is needed to determine the optimal quantization for a given dataset. 
\begin{table}[t!]
\footnotesize
\centering
\scalebox{0.85}
{
\parbox{.25\linewidth}{5-state quantization}\:%
\parbox{.75\linewidth}{
\begin{tabular}{|c|c|c|c|c|c|c|c|}
\hline
Threshold ($\times \log$ support) &0.6 & 0.7 & 0.8 & 0.85 & 0.9 & 1.0 & 1.2 \\\hline
\# of pairs & 10 & 13 & 32 & 42 & 53 & 69 & 85 \\\hline
Accuracy (\%) & 90.0 & 61.5 & 53.1 & 54.8 & 56.5 & 58.5 & 57.6 \\\hline 

\end{tabular}
}
}
\scalebox{0.85}
{
\parbox{.25\linewidth}{{10-state quantization}}\:%
\parbox{.75\linewidth}{
\begin{tabular}{|c|c|c|c|c|c|c|c|}
\hline
Threshold ($\times \log$ support) & 0.6 & 0.7 & 0.8 & 0.85 & 0.9 & 1.0 & 1.2 \\\hline
\# of pairs & 8 &12 & 23 & 39 & 49 & 71 & 85 \\\hline
Accuracy (\%) & 87.5 & 66.7& 60.9 & 53.8 & 51.0 & 52.1 & 57.6\\\hline 

\end{tabular}
}
}
\scalebox{0.85}
{
\parbox{.25\linewidth}{20-state quantization}\:%
\parbox{.75\linewidth}{
\begin{tabular}{|c|c|c|c|c|c|c|c|}
\hline
Threshold ($\times \log$ support) &0.6 & 0.7 & 0.8 & 0.85 & 0.9 & 1.0 & 1.2 \\\hline
\# of pairs & 5 & 10 & 15 & 31 & 54 & 78 & 85 \\\hline
Accuracy (\%) & 60.0 & 70.0 & 73.3 & 54.8 & 48.1 & 48.7 & 55.3\\\hline 

\end{tabular}
}
}

\caption{{Performance on T{\"u}bingen causal pairs with low exogenous entropy in at least one direction. Chosen based on majority voting on 5 random quantizations. }}
\label{tab:tuebingenQ}
\vspace{-5mm}
\end{table}
\clearpage
\end{document}